\newcommand{\norm}[1]{\left\lVert#1\right\rVert} % norm
\newtheorem{theorem}{Theorem}
\newtheorem{lemma}{Lemma}
\newtheorem{remark}{Remark}
\newtheorem{definition}{Definition}
\newtheorem{assumption}{Assumption}
\newenvironment{customass}[1]
{\innercustomass}
{\endinnercustomass}
\begin{document}

\onecolumn

\aistatstitle{Generalization Bounds for Label Noise Stochastic Gradient Descent}
\aistatsauthor{ Jung Eun Huh \And Patrick Rebeschini}
\aistatsaddress{ Department of Statistics \\ University of Oxford \And  Department of Statistics \\ University of Oxford}

\begin{abstract}
  We develop generalization error bounds for stochastic gradient descent (SGD) with label noise in non-convex settings under uniform dissipativity and smoothness conditions. Under a suitable choice of semimetric, we establish a contraction in Wasserstein distance of the label noise stochastic gradient flow that depends polynomially on the parameter dimension $d$. Using the framework of algorithmic stability, we derive time-independent generalisation error bounds for the discretized algorithm with a constant learning rate. The error bound we achieve scales polynomially with $d$ and with the rate of $n^{-2/3}$, where $n$ is the sample size. This rate is better than the best-known rate of $n^{-1/2}$ established for stochastic gradient Langevin dynamics (SGLD)---which employs parameter-independent Gaussian noise---under similar conditions. Our analysis offers quantitative insights into the effect of label noise.
\end{abstract}

\section{INTRODUCTION}

One of the central objectives in statistical learning theory is to establish generalization error bounds for learning algorithms to assess the difference between the population risk of learned parameters and their empirical risk on training data. Ever since \citet{Bousquet} unveiled a fundamental connection between generalization error and algorithmic stability, which gauges a learning algorithm's sensitivity to perturbations in training data, numerous studies have used the framework of uniform stability to investigate generalization properties in gradient-based methods, encompassing both convex and non-convex settings, e.g.~\cite{Hardt, London2017, Mou, Li, Feldman, Bassily2020, Lei20c, farnia2021train, Farghly, lei2021generalization, kozachkov2022generalization, zhu2023uniformintime}.

A line of research has focused on understanding the generalization properties resulting from the incorporation of artificial noise into stochastic gradient descent (SGD) methods, as initiated by \citet{Keskar}~\citep{Pensia, Chaudhari, Mou, Negrea, Amir, Wu2022}. Initial studies examined parameter-independent isotropic Gaussian noise, as used in stochastic gradient Langevin dynamics (SGLD)~\citep{Welling, Hardt, Xu, Raginsky, Mou, Negrea, zhang, Li, Chau, Farghly, zhu2023uniformintime}.
There is a growing interest in investigating the structural capabilities induced by parameter-\emph{dependent} noise~\citep{Goyal, Shallue, Blanc, Damian, li2022happens}, where true labels at each iteration are replaced with noisy labels. 

However, generalization error bounds for label noise SGD have not received the same attention as their noise-independent counterparts. Most results in label noise SGD have mainly offered local, asymptotic, and phenomenological insights, without focusing on generalization. Examples include unveiling local implicit bias phenomena by investigating the stability of global minimizers~\citep{Blanc}, providing extensions to global implicit bias for sparsity in quadratically parametrised linear regression models~\citep{Haochen, Damian}, and establishing limiting processes with infinitesimal learning rate for analysing the dynamic of label noise SGD~\citep{li2022happens, Pillaud-Vivien}.

\subsection{Contributions}

In this paper, we develop generalization error bounds for label noise in SGD within non-convex settings and offer a direct comparison with SGLD to emphasize the impact of label noise on generalization. Our analysis employs uniform dissipativity and smoothness assumptions, which are commonly considered in the literature on non-convex sampling and optimization~\citep{Eberle, Raginsky, Xu2, Erdogdu, zhang, Chau, Farghly}.

Under our assumptions, we establish an exponential Wasserstein contraction property for label noise SGD exhibiting a polynomial dependence on the parameter dimension $d$. This contraction property drives the convergence of our generalization error bounds, which also have polynomial dependence on the dimension. Specifically, leveraging a \emph{uniform} dissipativity assumption, we employ the 2-Wasserstein contraction theorem presented in \citet{Wang} to establish the exponential contraction of the Wasserstein distance. This analysis is tailored to a particular semimetric we use for the purpose of analyzing uniform stability. To carry out this analysis, we use the continuous counterpart of the algorithm, known as the stochastic gradient \emph{flow} (SGF). This involves the utilization of Itô calculus and linear algebra techniques to handle the parameter-dependent rectangular matrix noise term.

By leveraging algorithmic stability, we employ our contraction result to establish time-independent generalization error bounds for label noise SGD. Our bounds approach zero as the sample size $n$ increases at the rate of $\mathcal{O}(n^{-2/3})$, achieved by scaling the learning rate as $\mathcal{O}(n^{-2/3})$. This rate is faster than the best-known rate of $\mathcal{O}(n^{-1/2})$ established for SGLD (i.e.\ SGD with parameter-independent Gaussian noise) under similar assumptions \citep{Farghly}, as detailed in the direct comparison in Section 5. The faster decay rate can be established due to the higher dependence of label noise SGD on the learning rate $\eta$, as shown in Table \ref{table:comparison}. This dependence is readily discernible through the presence of the multiplicative factor $\sqrt{\eta}$ in the diffusion part of SGF~\eqref{eq:SDE}, in contrast to the parameter-independent noise flow dynamics of SGLD \eqref{eq:SGLD_SDE}, where the term $\sqrt{\eta}$ is absent.
This dependence has implications for the synchronous-type coupling technique we use to estimate the discretization error. It allows for a more favorable choice of the learning rate---$\mathcal{O}(n^{-2/3})$ instead of $\mathcal{O}(n^{-1/2})$ as seen in SGLD---resulting in a faster generalization error rate.

The bounds we derive for label noise SGD exhibit a reduced dependency on the parameter dimension $d$, in contrast to previous bounds for SGLD \citep{Farghly}. This reduction stems from two factors, as elaborated in Section 5. 
Firstly, the noise term in SGF is dimension-independent, with the Wiener process in \eqref{eq:SDE} being $k$-dimensional, where $k$ denotes the minibatch size. In contrast, the Wiener process in the SGLD flow \eqref{eq:SGLD_SDE} is $d$-dimensional. Secondly, the contraction result we establish under uniform dissipativity has a polynomial dependence on $d$. In contrast, prior results used a weaker form of dissipativity and only established exponential-dependence on $d$.\footnote{The stronger dissipativity assumption we use does not impact the influence of label noise on the relationship with the learning rate $\eta$---our choice of $\eta$ does not depend on $d$---or the rates we establish as a function of $n$.}

Section 2 introduces the framework of algorithmic stability and the assumptions we work with. Section 3 presents our contraction result and generalization error bounds for label noise SGD. Section 4 illustrates the proof schemes with supporting lemmas. Section 5 offers a comparison with prior work on generalization bounds for SGLD \citep{Farghly}. Section 6 is the conclusion. Proofs are in the Appendices.

\section{SETUP AND PRELIMINARIES}\label{sec:setup}

To formalize the learning task, we consider an input-output space $\mathcal{Z} = \mathcal{X} \times \mathcal{Y}$, where $\mathcal{X} \subseteq \mathbb{R}^{p}$ represents the feature space and $\mathcal{Y} \subseteq \mathbb{R}$ represents the label space. The parameter space $\Omega \subseteq \mathbb{R}^d$ contains possible parameters of a data-generating distribution. We have a training dataset $S$ that consists of $n$ sample pairs $z_1, \ldots, z_n \in \mathcal{Z}$, where each pair $z_i = (x_i, y_i)$ is drawn independently from a fixed probability distribution $\mathcal{P}$.

The goal is to learn a non-convex model function $f$ belonging to the family $\mathcal{F}$, where $\theta \in \Omega$ serves as the parameter. Thus, $f(\theta, x_i)$ corresponds to the predicted output for a given input $x_i$ and parameter $\theta$.

Define our loss function $\ell: \Omega \times \mathcal{Z} \rightarrow \mathbb{R}$ of model $f : \Omega \times \mathcal{X} \rightarrow \mathcal{Y}$ as the squared loss:
\begin{align*}
    \ell (\theta, z_i) := \frac{1}{2}(f(\theta,x_i) - y_i)^2.
\end{align*}
We aim to find a parameter $\theta \in \Omega$ that minimizes the \textit{population risk} $L_{\mathcal{P}}$ which is defined by:
\begin{align*}
    L_P(\theta)& := \mathbb{E}_{z \sim \mathcal{P}} [\ell(\theta, z)].
\end{align*}

In settings where the data distribution $\mathcal{P}$ is unknown, calculating the population risk is often infeasible. Hence, we shift our focus to the \textit{empirical risk} $L_S(\theta)$: 
\begin{align}\label{eq:squaredloss}
    L_S ( \theta) := \frac{1}{n} \sum_{i=1}^{n} \ell (\theta, z_i).
\end{align}
The squared loss is convex with respect to the model output, but the non-convexity of $f$ makes the loss function non-convex with respect to the model parameters.

\subsection{Generalization Error Bound via Uniform Stability}\label{sec:stability}

For an algorithm $A$ trained by dataset $S$, we define the \textit{generalization error} to be the difference between the empirical risk and the population risk:
\begin{align*}
    \mathrm{gen}(A) := L_P(A(S)) - L_S(A(S)).
\end{align*}
We bound the generalization error in expectation using the notion of \textit{uniform stability}.

\begin{definition}[Uniform stability {\citep[Definition 2.1]{Hardt}}]\label{def:unifstab}
    A randomized algorithm $A$ is $\varepsilon$-uniformly stable if
    \begin{align*}
        \varepsilon_{stab}(A) \!:=\! \sup_{S \simeq \widehat{S}} \sup_{z \in \mathcal{Z} } \mathbb{E} \left[\ell(A(S), z) \!-\! \ell(A(\widehat{S}), z)\right] \!\leq\! \varepsilon.
    \end{align*}
    The first supremum is over pairs of datasets $S \simeq \widehat{S}$, where $S,\widehat{S} \in \mathcal{Z}^n$ differ by a single element independently drawn from the same data distribution.
\end{definition}

The idea of bounding generalization error by uniform stability was  proposed by \citet{Bousquet} and was further expanded by \citet{Elisseeff} to include random algorithms, with multiple further extensions in the literature. In this paper, we consider the notion of stability introduced in \citet{Hardt}.

\begin{theorem}[Generalization error in expectation~{\citep[Theorem 2.2]{Hardt}}]\label{thm:stabandgen}
    Let $A$ be an $\varepsilon$-uniformly stable algorithm. Then,
    \begin{align*}
        |\mathbb{E}_{A, S} [\mathrm{gen}(A)]| \leq \varepsilon.
    \end{align*}
\end{theorem}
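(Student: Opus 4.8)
The plan is to use the classical ghost-sample (symmetrization) argument, which reduces the statement to a coordinate-wise application of uniform stability. Introduce an independent copy $S' = (z_1', \dots, z_n')$ of the training set, drawn i.i.d.\ from $\mathcal{P}$ and independent of $S$ and of the internal randomness of $A$. For each $i \in \{1,\dots,n\}$, let $S^{(i)}$ denote the dataset obtained from $S$ by replacing $z_i$ with $z_i'$; by construction $S^{(i)} \simeq S$ in the sense of Definition~\ref{def:unifstab}.

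First I would re-express the population risk term. Since $z_i' \sim \mathcal{P}$ is independent of $(A, S)$, conditioning on $(A,S)$ gives $\mathbb{E}[\ell(A(S), z_i') \mid A, S] = L_P(A(S))$, so $\mathbb{E}[\ell(A(S), z_i')] = \mathbb{E}_{A,S}[L_P(A(S))]$ for every $i$; averaging over $i$,
\begin{align*}
\mathbb{E}_{A,S}[L_P(A(S))] = \frac{1}{n}\sum_{i=1}^{n} \mathbb{E}\big[\ell(A(S), z_i')\big].
\end{align*}
Next I would re-express the empirical risk term by exchangeability: swapping the $i$-th entries $z_i \leftrightarrow z_i'$ leaves the joint law of $(S,S')$ unchanged (together they are $2n$ i.i.d.\ draws from $\mathcal{P}$), and since the randomness of $A$ is coupled consistently and independently of the data, this swap maps $(A(S), z_i)$ to $(A(S^{(i)}), z_i')$ in distribution. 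Hence $\mathbb{E}[\ell(A(S), z_i)] = \mathbb{E}[\ell(A(S^{(i)}), z_i')]$, and therefore
\begin{align*}
\mathbb{E}_{A,S}[L_S(A(S))] = \frac{1}{n}\sum_{i=1}^{n}\mathbb{E}\big[\ell(A(S), z_i)\big] = \frac{1}{n}\sum_{i=1}^{n}\mathbb{E}\big[\ell(A(S^{(i)}), z_i')\big].
\end{align*}

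Subtracting the two identities yields
\begin{align*}
\mathbb{E}_{A,S}[\mathrm{gen}(A)] = \frac{1}{n}\sum_{i=1}^{n}\mathbb{E}\big[\ell(A(S), z_i') - \ell(A(S^{(i)}), z_i')\big].
\end{align*}
Now I would condition on $(S, S')$: then $S$ and $S^{(i)}$ form a fixed neighbouring pair and $z_i'$ is a fixed test point, so the conditional expectation over the randomness of $A$ of each summand is at most $\varepsilon_{stab}(A) \le \varepsilon$ by Definition~\ref{def:unifstab}; taking expectations over $(S,S')$ and averaging over $i$ preserves the bound, giving $\mathbb{E}_{A,S}[\mathrm{gen}(A)] \le \varepsilon$. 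Running the same argument with the roles of $z_i$ and $z_i'$ (equivalently of $S$ and $S^{(i)}$) interchanged gives $\mathbb{E}_{A,S}[\mathrm{gen}(A)] \ge -\varepsilon$, and combining the two yields $|\mathbb{E}_{A,S}[\mathrm{gen}(A)]| \le \varepsilon$, as claimed. The only delicate point is the exchangeability step: one must ensure that the algorithmic randomness is independent of the data and shared across the two runs on $S$ and $S^{(i)}$, so that the swap $z_i \leftrightarrow z_i'$ genuinely induces the claimed identity in distribution; the remainder is just linearity of expectation together with the definition of uniform stability.
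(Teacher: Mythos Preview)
Your proof is correct and is the standard ghost-sample/symmetrization argument. Note that the paper does not supply its own proof of this theorem: it is quoted verbatim as Theorem~2.2 of \citet{Hardt} and used as a black box, so there is no paper-specific proof to compare against; your argument matches the original proof in that reference.
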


\subsection{Label Noise Stochastic Gradient Descent}\label{sec:algorithm}
Denote \textit{mini-batch average} $L_S(\theta, B)$ as the average of the instance losses $\{\ell (\theta, z_i)\}$ over a uniformly sampled mini-batch $B \subset [n]$ of size $k \leq n$:
\begin{align}\label{eq:minibatchloss}
    \!\!\!L_S(\theta, B) \!:=\! \frac{1}{|B|} \sum_{i \in B} \ell (\theta, z_i) \!=\! \frac{1}{2k} \sum_{i \in B} (f(\theta,x_i) \!-\! y_i)^2.\!
\end{align}
We minimise the training loss in~\eqref{eq:minibatchloss} with {label noise} SGD. Namely, during each gradient step $t > 0$, we explicitly introduce Gaussian random noise $\xi_t \sim \mathcal{N}(0, \delta I_n)$ to the label vector $y = (y_1, \ldots, y_n) \in \mathbb{R}^n$. Define $\widetilde{S} = (\widetilde{z}_1, \ldots, \widetilde{z}_n)$, where $\widetilde{z}_i = (x_i, \widetilde{y}_i) = (x_i, y_i + (\xi_t)_i)$. The update rule of the algorithm started from $\theta_0$ with initial distribution $\mu_0$ corresponds to:
\begin{align} \label{eq:LNGD}
    \theta_{t+1} &= \theta_t - \eta \nabla L_{\widetilde{S}}(\theta_t, B_{t+1}) \notag\\
    &= \theta_t - \eta \nabla L_S(\theta_t, B_{t+1}) +  \frac{\eta}{k} (\nabla \textbf{f}(\theta_t, X_{B_{t+1}}))^\top (\xi_t)_{B_{t+1}}, &\theta_0 \sim \mu_0,
\end{align}
where $\eta >0$ is the learning rate, $(B_t)_{t=1}^{\infty}$ is an i.i.d.\ sequence of uniformly sampled batches of size $k$, $X_{B_{t+1}} \in \mathbb{R}^{k\times p}$ is a submatrix of $X := [x_1^\top, \ldots, x_n^\top]^\top$ with only rows of mini-batch $B_{t+1}$ and $(\xi_t)_{B_{t+1}} \in \mathbb{R}^k$ is also a subvector of $\xi_t$ corresponding to the mini-batch $B_{t+1}$. The matrix $\nabla \textbf{f}(\theta, X_{B_{t+1}}) \in \mathbb{R}^{k\times d}$ consists of model gradients, where $(\nabla \textbf{f}(\theta, X_{B_{t+1}}))_i := \nabla_\theta f(\theta, x_i) \in \mathbb{R}^d$.

In this paper, when the context is clear, we may use $\nabla \textbf{f}$ or $\nabla \textbf{f} (\theta)$ instead of $\nabla \textbf{f}(\theta,X)$. Unless specified, $\nabla$ denotes the gradient with respect to the parameter $\theta$.

\subsubsection{Label Noise Stochastic Gradient Flow}
To understand the label noise SGD dynamics on the non-convex objective in \eqref{eq:minibatchloss}, we use a continuous-time model known as the stochastic gradient flow (SGF). Recent studies have also explored (stochastic) diffusion processes to represent and analyze the dynamics in discrete sequential processes~\citep{Li_SDE, Farghly, Pillaud-Vivien}.

The update rule~\eqref{eq:LNGD} corresponds to the Euler-Maruyama discretization of the stochastic differential equation~(SDE):
%In the mini-batch case ($k<n$),
\begin{align}\label{eq:SDE}
    d\Theta_t  = - \nabla L_S(\Theta_t, B_{\lceil \frac{t}{\eta} \rceil} ) dt + \frac{\sqrt{\delta \eta}}{|B_{\lceil \frac{t}{\eta} \rceil}|}\! \left( \nabla \textbf{f}(\Theta_t, X_{B_{\lceil \frac{t}{\eta} \rceil}})\right)^\top dW_t,
\end{align}
where $W_t$ is a $k$-dimensional standard Wiener process, $\Theta_t \in \mathbb{R}^d$, $\nabla L_S(\Theta_t) \in \mathbb{R}^d$, and $\delta, \eta, n \in \mathbb{R}$.

The same SDE can be derived for any label noise with zero mean and $\delta I$ covariance through the construction detailed in Appendix~\ref{appendix:sde}. Under smoothness assumptions on the loss function, these SDEs are considered to have strong solutions \citep[Theorem 3.1]{Oksendal}.

As $\theta_t$ in the update rule~\eqref{eq:LNGD} is a Markov process, we define its Markov kernel as $R_\theta$. We will also use the notation $\mu R_\theta^s$ to represent the law of $\theta_{t+s}$ given that $\theta_t$ follows the distribution $\mu$. The process $\Theta_t$ in the SDE~\eqref{eq:SDE} may not necessarily be a continuous-time Markov process due to its dependence on $B_k$. However, the discrete-time process $(\Theta_{t\eta})_{t=0}^\infty$ satisfies the Markov property, and we denote its kernel as $R_\Theta$. We use the notation $\mu P_t^B$ to denote the law of $\lambda_t$, the solution to the SDE with a deterministic batch $B \subset [n]$. Hence, $\mu R_\Theta$ is obtained by integrating $\mu P_\eta^B$ over $B$ with respect to the mini-batch distribution. We use $\widehat{\theta}_t, \widehat{\Theta}_t, \widehat{R}_\theta, \widehat{R}_\Theta$ and $\widehat{P}_t^B$ to denote the corresponding counterparts of $\theta_t, \Theta_t, R_\theta, R_\Theta$ and $P_t^B$ when trained with a perturbed dataset $\widehat{S}$ instead of $S$, where $S$ and $\widehat{S}$ differ in a single element as specified in Definition~\ref{def:unifstab}.

\subsection{Wasserstein Distance}\label{sec:tech}
Algorithmic stability is often measured in terms of $p$-\textit{Wasserstein distance}~\citep{Raginsky, Farghly}, defined as $$W_p (\mu, \nu) := \left(\inf_{\pi \in \mathcal{C}(\mu, \nu)} \int \norm{x-y}^p \pi (dx, dy) \right)^{1/p},$$
where $\norm{\cdot}$ is the Euclidean norm and $\mathcal{C}(\mu, \nu)$ is the set of all couplings of $\mu$ and $\nu$, that is, the set of all probability measure with marginals $\mu$ and $\nu$.

The conventional approach for bounding uniform stability relies on the Lipschitz continuity of the loss function~\citep{Raginsky, Farghly}. Without this continuity, the 2-Wasserstein distance metric becomes insufficient for bounding uniform stability. Following the approach in \citet{Farghly}, we introduce the \emph{semimetric}\footnote{A semimetric is a function defined on $\mathbb{R}^d \times \mathbb{R}^d$ that is symmetric and non-negative with $\rho_g(x, y) > 0$ for $x \neq y$ but is not necessarily satisfying the triangle inequality.}
\begin{align}\label{eq:semimetric}
    \!\rho_g (x,y) := g(\norm{x-y}_2) (1+2\varepsilon+\varepsilon\norm{x}_2^2 +\varepsilon\norm{y}_2^2),\!\!
\end{align}
where $\varepsilon <1$, $g: \mathbb{R}^+ { \cup \{0\}} \rightarrow \mathbb{R}^+ {\cup \{0\}}$ is concave, bounded, and non-decreasing. We consider the \textit{$\rho_g$-Wasserstein distance} based on the semimetric $\rho_g$:
\begin{align}\label{eq:Wasserstein}
    W_{\rho_g} (\mu, \nu) := \inf_{\pi \in \mathcal{C}(\mu, \nu)} \int \rho_g(x,y) \pi(dx, dy).
\end{align}

\subsection{Assumptions}
Our analysis relies on four assumptions we now introduce. The first assumption concerns dissipativity, which is commonly (c.f.\ introduction) imposed to ensure that the diffusion process converges towards the origin rather than diverging when it is far from it, as noted by \citet{Erdogdu}. 

\begin{definition}
    A stochastic process $d\theta_t = b (\theta_t) dt + G(\theta_t) dW_t$ is $\alpha$-uniformly dissipative for $p \in [1, \infty)$ and $\alpha > 0$, if $\forall \theta, \theta' \in \mathbb{R}^d$
    \begin{align}\label{eq:unifdiss}
        2 \langle b(\theta) - b(\theta'), \theta - \theta' \rangle + \norm{ G(\theta) - G(\theta')}_F^2 + (p-2) \norm{G(\theta) - G(\theta')}_{op}^2 \leq -\alpha \norm{\theta - \theta'}_2^2.
    \end{align}
\end{definition}

\begin{assumption}[\textbf{A1}]\label{ass:unifdiss}
    The diffusion process~\eqref{eq:SDE} is $\alpha$-uniformly dissipative for $p=2$.
\end{assumption}

\iffalse
\begin{remark}[Strong convexity in label noise SGD]
    In label noise SGD, the common setup of strongly convex loss and Lipschitz continuous model function implies the uniform dissipativity of the diffusion process. Without uniformity in dissipativity, this implication is limited to the origin.
\end{remark}
\fi
The remaining three assumptions specify conditions for the loss function $\ell$, the model function $f$, and the initial parameter condition $\mu_0$. In particular, we impose \textbf{A\ref{ass:lips}} to ensure the boundedness of our noise term.

\begin{assumption}[\textbf{A2}]\label{ass:smooth}
    For each $z \in \mathcal{Z}$, $\ell(\cdot, z)$ is differentiable and \textit{$M$-smooth}, where $M < \alpha /2$: $\forall \theta_1, \theta_2 \in \mathbb{R}^d$ and $\forall z\in \mathcal{Z}$,
    \begin{align*}
        \norm{\nabla \ell (\theta_1, z) - \nabla \ell(\theta_2, z)} \leq M \norm{\theta_1 - \theta_2}.
    \end{align*}
\end{assumption}

\begin{assumption}[\textbf{A3}]\label{ass:lips}
    For each $z \in \mathcal{Z}$, $\ell(\cdot, z)$ is \textit{$\ell_f$-Lipschitz}: $\forall \theta_1, \theta_2 \in \mathbb{R}^d$ and $\forall x \in \mathcal{X}$,
    \begin{align*}
        |f(\theta_1, x) - f(\theta_2, x)| \leq \ell_f \norm{\theta_1 - \theta_2}.
    \end{align*}
\end{assumption}

\begin{assumption}[\textbf{A4}]\label{ass:moment}
    The initial condition $\mu_0$ of $\theta_0$ has finite fourth moment $\sigma_4$.
\end{assumption}

To simplify the direct application of existing lemmas on dissipativity properties, we employ the notation: $m := \alpha/4$ and $b := \left(1 + 4/(\alpha^2 - 4M^2) \right) \eta_{\max} \delta \ell_f^2 /(2k)$. Here, $\eta_{\max}$ denotes the maximum allowable learning rate as defined in Theorem~\ref{thm:genbound}.

\section{MAIN RESULTS}\label{chapter3}
\subsection{The Wasserstein Contraction Property}\label{sec:contraction}

Following the approach pursued by \citet{Farghly} (c.f.\ Lemma 4.3 in there), we aim to ensure uniform stability by establishing the contraction in the $\rho_g$-Wasserstein distance.
Prior studies, e.g.~\citep{Eberle,Farghly}, have used reflection couplings to establish this contraction under conditions close to uniform dissipativity (\textbf{A\ref{ass:unifdiss}}). However, these studies considered diffusions characterized by constant (i.e.\ parameter-independent) noise terms. In our work, we adapt a more general 2-Wasserstein contraction result from \citet{Wang} to the $\rho_g$-semimetric, thereby establishing exponential $\rho_g$-Wasserstein contraction property for label noise SGD, even with parameter-dependent noise terms.

\begin{theorem}[Wasserstein contraction]\label{thm:contraction}
    Suppose \textbf{A\ref{ass:unifdiss}} and \textbf{A\ref{ass:smooth}} hold. Then there exists a function $g$ such that for any $t \geq 0$ and $1\leq r < a$ we have
    $$W_{\rho_g}(\mu P_t^B, \nu P_t^B) \leq C_1 e^{-\alpha t} W_{\rho_g} (\mu, \nu),$$
    where \begin{align*}
        C_1 :=\;\frac{1}{\varphi a\zeta_r(a)} \bigg(1 + \varepsilon \bigg\{ 2 + 2\sigma_4^{1/2} +2\tilde{c}(2)^{1/2} + \frac{4b}{m} + \frac{2\delta \eta_{\max}}{km} (d +2) \ell_f^2 \bigg\}\bigg),
    \end{align*}
    with $a > 1$ and $(a-1)/a^2 < \zeta_r(a) \leq 1/a$ defined in the proof in Appendix~\ref{appendix:lemma_2-wass}.
    Also, $g$ is constant on $[R, \infty)$ and $\varphi r \leq g(r) \leq r$ for some $R, \varphi \in \mathbb{R}^+$.
\end{theorem}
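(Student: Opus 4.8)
The plan is to work directly with the deterministic-batch flow $P_t^B$ appearing in the statement: fix a batch $B$, so that the drift $b(\theta)=-\nabla L_S(\theta,B)$ and the rectangular diffusion coefficient $G(\theta)=\tfrac{\sqrt{\delta\eta}}{k}(\nabla\mathbf{f}(\theta,X_B))^{\top}$ are fixed, and run a \emph{synchronous} coupling: let $(\Theta_t,\Theta'_t)$ solve the two copies of~\eqref{eq:SDE} driven by the \emph{same} $k$-dimensional Wiener process, started from a coupling of $\mu,\nu$ that is optimal for $W_{\rho_g}$. Since $W_{\rho_g}(\mu P_t^B,\nu P_t^B)\le\mathbb{E}\,\rho_g(\Theta_t,\Theta'_t)$, it suffices to propagate a bound on $\mathbb{E}\,\rho_g(\Theta_t,\Theta'_t)$ along this coupling. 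Writing $\rho_g(x,y)=g(\norm{x-y})\,h(x,y)$ with $h(x,y):=1+2\varepsilon+\varepsilon\norm{x}^2+\varepsilon\norm{y}^2$, I would apply It\^o's formula to $\rho_g(\Theta_t,\Theta'_t)$, bound its drift by $-\alpha\,\rho_g(\Theta_t,\Theta'_t)$ plus manageable remainders, and close with a Gr\"onwall/comparison argument.

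The function $g$ is not prescribed in advance; it is produced by the $2$-Wasserstein contraction theorem of \citet{Wang}. The point is that Assumption~\textbf{A\ref{ass:unifdiss}} with $p=2$ is exactly the Hilbert--Schmidt dissipativity hypothesis of that theorem (the $(p-2)$-term drops out), so Wang's construction applies to the parameter-dependent SDE~\eqref{eq:SDE} and yields a concave, bounded, non-decreasing $g$ with $g(0)=0$, constant on $[R,\infty)$, and $\varphi r\le g(r)\le r$, together with the auxiliary constants $a>1$, $1\le r<a$ and $\zeta_r(a)\in\big((a-1)/a^2,\,1/a\big]$ that enter $C_1$; the decisive output is the contraction inequality $\mathcal{L}^{\mathrm{sync}}g(\norm{\Theta_t-\Theta'_t})\le-\alpha\,g(\norm{\Theta_t-\Theta'_t})$ for the synchronously coupled generator. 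Each listed property of $g$ is used downstream: concavity makes the It\^o correction $\tfrac12 g''\le 0$ work in our favour; boundedness and eventual constancy make $g'$ supported on $[0,R]$, which kills the far-from-diagonal error terms; and $g(r)\ge\varphi r$ makes the Euclidean coupling cost comparable to $W_g$, producing the $1/(\varphi a\zeta_r(a))$ prefactor once Wang's normalization is accounted for.

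With $g$ fixed, I would expand $\mathcal{L}^{\mathrm{sync}}\rho_g=h\cdot\mathcal{L}^{\mathrm{sync}}g(\norm{\cdot})+g(\norm{\cdot})\cdot\mathcal{L}^{\mathrm{sync}}h+2\Gamma\big(g(\norm{\cdot}),h\big)$ and bound the three groups. The first is $\le-\alpha\,g(\norm{\cdot})h=-\alpha\,\rho_g$ by the Wang inequality. For the second, $\mathcal{L}^{\mathrm{sync}}h$ is a sum of two terms of the form $\varepsilon\big(2\langle b(\theta),\theta\rangle+\norm{G(\theta)}_F^2\big)$, and the standard dissipativity bound $2\langle b(\theta),\theta\rangle+\norm{G(\theta)}_F^2\le-2m\norm{\theta}^2+2b$ (the consequence of \textbf{A\ref{ass:unifdiss}}--\textbf{A\ref{ass:smooth}} recorded through $m=\alpha/4$ and $b=(1+4/(\alpha^2-4M^2))\eta_{\max}\delta\ell_f^2/(2k)$, using $\norm{G(\theta)}_F^2\le\delta\eta_{\max}\ell_f^2/k$ from \textbf{A\ref{ass:lips}}) gives $\mathcal{L}^{\mathrm{sync}}h\le-2m\varepsilon(\norm{\Theta_t}^2+\norm{\Theta'_t}^2)+4b\varepsilon$, so this group is $\le 4b\varepsilon\,g(\norm{\Theta_t-\Theta'_t})$ after discarding the nonpositive part. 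The cross term $2\Gamma(g(\norm{\cdot}),h)$, after the linear-algebra rearrangement advertised in the introduction (using $\nabla_x g(\norm{x-y})=g'(\norm{x-y})\tfrac{x-y}{\norm{x-y}}$ and $\nabla_x h=2\varepsilon x$), is in absolute value at most a constant times $\varepsilon\,g'(\norm{\Theta_t-\Theta'_t})\,\norm{G(\Theta_t)-G(\Theta'_t)}_{op}\big(\norm{G(\Theta_t)^{\top}\Theta_t}+\norm{G(\Theta'_t)^{\top}\Theta'_t}\big)$; since $g'$ vanishes outside $[0,R]$, while $\norm{G(\theta)-G(\theta')}_{op}$ is controlled by $\norm{\theta-\theta'}$ and $\norm{G(\theta)^{\top}\theta}$ grows at most linearly in $\norm{\theta}$ (both by \textbf{A\ref{ass:smooth}}--\textbf{A\ref{ass:lips}}), Young's inequality splits this into a piece reabsorbed into $-\alpha\,\rho_g$ and a residual of the same shape as the $4b\varepsilon\,g(\norm{\cdot})$ term. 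Taking expectations, bounding the remainders with the uniform-in-time second moment $\tilde{c}(2)$ of $\Theta_t$, the initial moment $\sigma_4$ from \textbf{A\ref{ass:moment}}, and the equilibrium level $b/m$, and then applying Gr\"onwall to $t\mapsto\mathbb{E}\,\rho_g(\Theta_t,\Theta'_t)$ yields $W_{\rho_g}(\mu P_t^B,\nu P_t^B)\le C_1 e^{-\alpha t}W_{\rho_g}(\mu,\nu)$, with the bracketed terms of $C_1$ collecting the $1+2\varepsilon$ part of the weight, $\sigma_4^{1/2}$, $\tilde{c}(2)^{1/2}$, $b/m$, and a diffusion/It\^o term of order $\tfrac{\delta\eta_{\max}}{k}(d+2)\ell_f^2$ that is the (mild, polynomial) source of parameter-dimension dependence.

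The step I expect to be the main obstacle is precisely the interplay, through the product structure of $\rho_g$, between the contracting ``difference'' factor $g(\norm{\Theta_t-\Theta'_t})$ and the \emph{non-contracting} quadratic weight $h(\Theta_t,\Theta'_t)$: one must show that neither the It\^o cross term $2\Gamma(g(\norm{\cdot}),h)$ nor the residual from $\mathcal{L}^{\mathrm{sync}}h$ degrades the $e^{-\alpha t}$ rate, so that they inflate only the prefactor $C_1$. This is where the parameter-dependent, rectangular ($k\times d$, possibly rank-deficient) structure of $G$ genuinely bites, since $\norm{G(\theta)-G(\theta')}_F$, $\norm{G(\theta)-G(\theta')}_{op}$ and $\norm{G(\theta)^{\top}\theta}$ all have to be handled at once; a secondary difficulty is verifying that Wang's theorem --- stated for a genuine distance --- transfers to the bounded semimetric $\rho_g$, which is precisely why $g$ is arranged to be linear near the diagonal ($g(r)\ge\varphi r$) and flat far from it ($g$ constant on $[R,\infty)$), the latter keeping $\rho_g$-distances controlled by the second moments that appear in $C_1$.
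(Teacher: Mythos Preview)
Your approach diverges from the paper's and rests on a misattribution that leaves the central step unjustified.

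The paper does \emph{not} apply It\^o's formula to $\rho_g(\Theta_t,\Theta'_t)$ or carry out any product-rule decomposition of the coupled generator. It runs a three-step \emph{comparison} argument entirely at the level of Wasserstein distances: (i) bound $W_{\rho_g}(\mu P_t^B,\nu P_t^B)$ above by $W_2(\mu P_t^B,\nu P_t^B)$ times a moment factor, via Lemma~\ref{Lemma D.2} together with the moment bound (Lemma~\ref{lemma:moment}) and the moment-estimate bound (Lemma~\ref{lemma:momentestimate}); (ii) apply Wang's $W_2$ contraction~\eqref{eq:2contract}, which under \textbf{A\ref{ass:unifdiss}} follows from synchronous coupling and It\^o on $\norm{\Theta_t-\Theta'_t}^2$ alone---no concave distance-shaping function enters here; (iii) pass back from $W_2(\mu,\nu)$ to $W_{\rho_g}(\mu,\nu)$ via Lemma~\ref{lemma:2-wass}. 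The function $g$ and the constants $a$, $r$, $\zeta_r(a)$ are manufactured in step~(iii), through the Reverse Jensen inequality of \citet{wunder2021reverse}, not by Wang. The constant $C_1$ is simply the product of the moment factor from~(i) and the $1/(\varphi a\zeta_r(a))$ from~(iii).

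Your proposal misidentifies the provenance of $g$ and of $a,\zeta_r(a)$: you attribute them to Wang's construction, but Wang's Theorem~2.5 under uniform dissipativity delivers only the plain $W_2$ contraction and produces no concave $g$---that is Eberle-style reflection-coupling machinery, which the paper explicitly avoids. Consequently your ``decisive output'' $\mathcal{L}^{\mathrm{sync}}g(\norm{\Theta_t-\Theta'_t})\le-\alpha\,g(\norm{\Theta_t-\Theta'_t})$ is asserted without proof: it is neither a statement of Wang's theorem nor a consequence of $W_2$ contraction, and in fact for a concave $g$ with $g(0)=0$ one has $r\,g'(r)\le g(r)$, so the drift of $g(\norm{Z_t})$ inherited from $d\norm{Z_t}^2\le-\alpha\norm{Z_t}^2\,dt+dM_t$ is at best $-\tfrac{\alpha}{2}g(\norm{Z_t})$, not $-\alpha\,g(\norm{Z_t})$. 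Since this generator inequality is the backbone of your product-rule scheme, the argument as written does not close; and even if a direct It\^o route could be repaired, it would not naturally yield the specific prefactor $1/(\varphi a\zeta_r(a))$, which is an artifact of the Reverse Jensen step rather than of any coupling estimate.
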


\begin{remark}[Bound on the contraction coefficient]\label{remark:convergence}
    We can bound $C_1$ as $C_1 \leq e^{\alpha \eta}$ by appropriately selecting $a \equiv a(\alpha, \eta, \varphi, s)$ and $\varepsilon \equiv \varepsilon(\delta, d, m, b, \eta, \ell_f, \sigma_4, k, s)$.  The exponential dependence of $C_1$ on $\eta$ can be restricted by the constraint $\eta \leq \eta_{\max}$. All other constants and parameters involved exhibit only polynomial dependencies. Detailed expressions for the parameters $\varphi, a, \varepsilon$, and $s$ are available in Appendix~\ref{appendix:AlphaandEpsilon}.
\end{remark}

The parameters $\varphi, a, \zeta_r(a), \varepsilon,$ and $\tilde{c}(2)$ in Theorem \ref{thm:contraction} do not depend on sample size. The parameters $\varphi, a, \zeta_r(a),$ and $\tilde{c}(2)$ are independent of dimensionality, with the exception of $\varepsilon$, which we have defined to exhibit a polynomial dependence on the dimension $d$. As a result, our final generalization error bound in Section~\ref{sec:MainResultGenErrBound} preserves its polynomial dependence on dimension.

\begin{remark}[Dimension dependence]
    The primary factor enabling to achieve polynomial dependence on the dimension $d$ is the use of the uniform dissipativity condition \textbf{A\ref{ass:unifdiss}} and the contraction result from \citet{Wang}, rather than the presence of label noise itself.
\end{remark}

\subsection{Generalization Error Bounds}\label{sec:MainResultGenErrBound}
We now derive an upper bound on the expected generalization error $|\mathbb{E}_{A, S} [\mathrm{gen}(A)]|$ for a randomly selected dataset $S$ and a randomized algorithm $A$ which belongs to the class of iterative algorithms described in Section~\ref{sec:algorithm}. The explicit expressions for all parameters can be found in the proof  provided in Appendix~\ref{sec:complete}.

\begin{theorem}[Generalisation error bounds]\label{thm:genbound}
    Suppose \textbf{A\ref{ass:unifdiss}}, \textbf{A\ref{ass:smooth}}, \textbf{A\ref{ass:lips}}, and \textbf{A\ref{ass:moment}} hold and $\eta \leq \eta_{\max} := \min\{\frac{1}{m}, \\\frac{m}{2M^2}\}$. Then, for any $t \in \mathbb{N}$, the continuous-time algorithm attains the generalization error bound
    \begin{align}
        |\mathbb{E} \mathrm{gen}(\Theta_{\eta t})| \leq C_2 \min \left\{\eta t, \frac{n(\eta +2/\alpha)}{(n-k)} \right\} 
        \cdot \; \frac{1}{n}   \left(\frac{\eta}{k^{1/2}}  +\eta^{1/2} + k^{1/2}  +  \frac{k}{\eta^{1/2}} \right).\label{eq:ctsbound}
    \end{align}
    The discrete-time algorithm attains the bound
    \begin{align}
        |&\mathbb{E} \mathrm{gen}(\theta_t)| \leq C_3 \min \left\{\eta t, \frac{n(\eta +2/\alpha)}{(n-k)} \right\} 
        \cdot \bigg[\frac{1}{n}\left\{\frac{\eta}{k^{1/2}}  +\eta^{1/2} + k^{1/2}  +  \frac{k}{\eta^{1/2}} \right\} + \eta  + \frac{\eta}{k^{1/2}} \bigg].\label{eq:discretebound}
    \end{align}
    The positive parameters %$C_1 \equiv C_1(\delta, d, m, b, \ell_f, \sigma_4, \varphi, a, b, \varepsilon, k)$, 
    $C_2 \equiv C_2(\delta, d, m, b, M, \ell_f, \sigma_4, \\\varphi, R, \varepsilon)$ and $C_3 \equiv C_3(\delta, d, m, b, M, \ell_f, \sigma_4, \varphi, R, \varepsilon)$ are given in \eqref{const:C2} and \eqref{const:C3} in Appendix~\ref{sec:complete}.
\end{theorem}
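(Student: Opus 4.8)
The plan is to combine the exponential $\rho_g$-Wasserstein contraction from Theorem~\ref{thm:contraction} with the algorithmic-stability framework of Theorem~\ref{thm:stabandgen}, following the scheme of \citet{Farghly} but adapted to the parameter-dependent noise structure of label noise SGD. First I would reduce the generalization error to a one-step stability estimate: by Theorem~\ref{thm:stabandgen} it suffices to bound $\varepsilon_{stab}$, and since the loss is $M$-smooth but not globally Lipschitz, I would use the semimetric $\rho_g$ to control $\mathbb{E}[\ell(A(S),z) - \ell(A(\widehat S),z)]$ by $W_{\rho_g}(\mathrm{Law}(\theta_t), \mathrm{Law}(\widehat\theta_t))$ up to a multiplicative constant depending on the smoothness and on uniform-in-time fourth-moment bounds of the iterates (controlled via \textbf{A\ref{ass:moment}} and the dissipativity constants $m,b$). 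The key quantity is therefore $W_{\rho_g}(\mu R_\Theta^t, \mu \widehat R_\Theta^t)$ for the continuous case (and $\mu R_\theta^t$ versus $\mu\widehat R_\theta^t$ for the discrete case).

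Next I would set up the standard ``divide into batch hitting times'' telescoping argument. Writing the dataset difference as a single coordinate, the flows $P_t^B$ and $\widehat P_t^B$ coincide whenever the sampled mini-batch $B_{\lceil t/\eta\rceil}$ avoids the differing index; when it hits that index, a single perturbation step is introduced whose $\rho_g$-size is $O\big(\tfrac{1}{n}(\eta/k^{1/2} + \eta^{1/2} + k^{1/2} + k/\eta^{1/2})\big)$ — this is where the four terms in the bound come from, corresponding respectively to the drift gradient mismatch, its cross term, the diffusion-matrix mismatch in Frobenius norm, and the stochastic-integral contribution scaled by $\sqrt{\delta\eta}/k$. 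Each such perturbation then decays by the contraction factor $C_1 e^{-\alpha \eta}\le e^{-\alpha\eta}e^{-\alpha\eta}$... more precisely by $C_1 e^{-\alpha t}$ per Theorem~\ref{thm:contraction}, so summing the geometric series of perturbations injected at a rate of $k/n$ per step and decaying exponentially yields the factor $\min\{\eta t, \tfrac{n(\eta+2/\alpha)}{n-k}\}$: the first term bounds the number of steps trivially, the second is the ``stationary'' bound obtained by summing the full geometric series $\sum_{j\ge 0} (k/n) e^{-\alpha \eta j} \asymp \tfrac{k/n}{1-e^{-\alpha\eta}} \asymp \tfrac{k}{n\alpha\eta}$ against the per-step injection, combined with the batch-hit probability $k/n$. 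The constraint $\eta\le\eta_{\max}=\min\{1/m, m/(2M^2)\}$ is exactly what keeps $C_1\le e^{\alpha\eta}$ (Remark~\ref{remark:convergence}) and the moment bounds uniform.

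For the discrete-time bound \eqref{eq:discretebound} I would additionally account for the Euler--Maruyama discretization error between $\theta_t$ and $\Theta_{\eta t}$. Here I would use a synchronous-type coupling: run the SDE \eqref{eq:SDE} and the update \eqref{eq:LNGD} with the same Brownian increments and the same batches, and control $\mathbb{E}\|\theta_t - \Theta_{\eta t}\|$ (in the $\rho_g$ sense) by a one-step local error that is $O(\eta^2 + \eta^{3/2}/\cdots)$ amplified by the same contraction; the crucial point — emphasized in the introduction — is that because the diffusion coefficient carries a factor $\sqrt{\eta}$, the discretization error of the noise term is $O(\eta)$ rather than $O(\sqrt\eta)$, which is why the additive correction is $\eta + \eta/k^{1/2}$ and ultimately permits the choice $\eta = \Theta(n^{-2/3})$. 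I would then collect the continuous-flow stability bound and the discretization bound, both multiplied by the same $\min\{\cdot,\cdot\}$ prefactor, to obtain \eqref{eq:discretebound}.

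The main obstacle I anticipate is the per-step perturbation estimate for the parameter-dependent diffusion term: unlike SGLD, injecting a data perturbation changes the matrix $\nabla\mathbf{f}(\theta, X_{B})$ itself, so one must bound $\|(\nabla\mathbf{f}(\theta,X_{B}) - \nabla\widehat{\mathbf{f}}(\theta, X_{B}))^\top dW_t\|$ and propagate it through the $\rho_g$-semimetric while tracking the $\|x\|^2$-weights; this requires \textbf{A\ref{ass:lips}} to bound $\|\nabla\mathbf{f}\|_{op}$ and careful use of Itô isometry with the $k$-dimensional Wiener process, and it is the step where the dimension-independence of the noise (Wiener process is $k$-dimensional, not $d$-dimensional) and the $\sqrt{\eta}$-scaling must be exploited simultaneously. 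A secondary technical point is ensuring all the constants $C_2, C_3$ retain only polynomial dependence on $d$, which rests on the $\varepsilon = \varepsilon(\delta,d,\dots)$ choice in Theorem~\ref{thm:contraction} being polynomial in $d$ and on the fourth-moment bounds not introducing exponential dimension factors.
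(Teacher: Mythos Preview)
Your proposal is essentially the paper's approach: reduce to $W_{\rho_g}$ via Lemma~\ref{Lemmastab}, split each step by whether the mini-batch hits the differing index (convexity, Lemma~\ref{Lemma2.3}), apply Theorem~\ref{thm:contraction} on misses and a one-step divergence estimate on hits, sum the geometric series to get the $\min\{\eta t,\,n(\eta+2/\alpha)/(n-k)\}$ factor, and for the discrete bound add the synchronous-coupling discretization error (Lemma~\ref{lemma:discretization}) via the weak triangle inequality (Lemma~\ref{Lemma D.1}).

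The one place where your plan deviates is the ``hit'' step, and this is precisely the obstacle you flag. The paper does \emph{not} bound $\|(\nabla\mathbf{f}(\theta,X_B) - \nabla\widehat{\mathbf{f}}(\theta,X_B))^\top dW_t\|$ or any direct $S$-vs-$\widehat S$ mismatch. Instead it invokes Lemma~\ref{D.3 Farghly}, which controls $\mathbb{E}\rho_g(X+\Delta_x, Y+\Delta_y)-\mathbb{E}\rho_g(X,Y)$ by $\sigma_\Delta^{1/2}(1+2\varepsilon+6\varepsilon\sigma^{1/2})$ where $\sigma_\Delta = \mathbb{E}\|\Delta_x\|^2 \vee \mathbb{E}\|\Delta_y\|^2$ is the maximum one-step displacement of \emph{either} process, not their difference. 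This $\sigma_\Delta$ is then estimated by the divergence bound (Lemma~\ref{lemma:divergence}) applied separately to each flow, so the parameter-dependent noise enters only through $\mathbb{E}\|\theta_\eta-\theta_0\|^2$, handled by It\^o isometry and \textbf{A\ref{ass:lips}} with no coupling of the two diffusion matrices required. As a consequence, the four terms in the bracket are not ``drift mismatch, cross term, Frobenius mismatch, stochastic-integral mismatch'' as you suggest; they arise from multiplying out $\tau_\Delta^{1/2}$ (two terms: $\propto\eta^{1/2}$ from the drift and $\propto\eta/k^{1/2}$ from the noise in Lemma~\ref{lemma:divergence}) against $(1+2\varepsilon+6\varepsilon\tau^{1/2})$ (constant plus a term $\propto\eta/k$ from Lemma~\ref{lemma:moment}), and then multiplying by the hit probability $k/n$. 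Your direct-coupling route would likely work too, but the paper's cruder divergence argument sidesteps the obstacle entirely.
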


\begin{remark}[Sample size dependence]\label{remark:decayrate}
    Choosing $\eta = \mathcal{O}(n^{-2/3})$ achieves the fastest decaying generalization error bound of $\mathcal{O}(n^{-2/3})$ as derived in Appendix~\ref{appendix:decayrate}.
\end{remark}

\begin{remark}[Dimension dependence]
    Our parameters $C_2$ and $C_3$ exhibit polynomial dependencies on parameter dimension $d$, as well as on $\delta, m, b, M, \ell_f, \sigma_4, \varphi, R, \varepsilon$. The generalization error bounds for both the continuous and the discrete-time algorithm increase at a rate of $d^{5/2}$ as detailed in Table~\ref{table:comparison}. Our bound remains independent of the feature dimension $p$ since the feature vectors only affect our algorithm through the model function $f$, which has an output dimension of 1.
\end{remark}

\begin{remark}[Time independence]
    Following prior works, we adopt the term ``time-independent'' bounds to denote bounds that remain constant as a function of time $t$ for $t$ large enough. See Lemma~\ref{lemma:sgld} for a reference.
\end{remark}

\section{Proof Schemes}\label{sec:proofschemes}
%Please refer to Appendix~\ref{appendix:lemmaproofs} for detailed proofs of the lemmas in this section.

\subsection{Proof for Wasserstein Contraction}
The $\rho_g$-Wasserstein contraction analysis in Theorem~\ref{thm:contraction} builds upon the 2-Wasserstein contraction result in \citet{Wang} under uniform dissipativity. We can draw from Theorem 2.5 in \citet{Wang} that if \textbf{A\ref{ass:unifdiss}} and \textbf{A\ref{ass:lips}} are satisfied, then for any $t \geq 0$, the following contraction property holds:
\begin{align}\label{eq:2contract}
    W_2(\mu P_t^B, \nu P_t^B) \leq e^{-\alpha t /2} W_2(\mu, \nu).
\end{align}
Transition from 2-Wasserstein contraction to $\rho_g$-Wasserstein contraction is achieved by leveraging the properties of the semimetric $\rho_g$ as elaborated in \citet{Farghly} and the Reverse Jensen's Inequality in \citet{wunder2021reverse}.

Refer to Lemma D.3 in \citet{Farghly} for the following inequality, which holds for any two probability measures $\mu$ and $\nu$ on $\mathbb{R}^d$:
\begin{align}\label{eq:rho-2}
    W_{\rho_g} (\mu, \nu) \leq W_2 (\mu, \nu) (1\!+\!2\varepsilon \!+\! \varepsilon\mu(\norm{\cdot}^4)^{\frac{1}{2}} \!+\! \varepsilon\nu(\norm{\cdot}^4)^{\frac{1}{2}}).
\end{align} 
To utilize inequality~\eqref{eq:rho-2}, we derive the moment bound (Lemma~\ref{lemma:moment}) and moment estimate bound (Lemma~\ref{lemma:momentestimate}) tailored for label noise SGD. These derivations require adjustments using It\^o calculus tools to accommodate the parameter-dependent nature of label noise SGD and the non-square matrix form of the noise term.

\begin{lemma}[Moment bound]\label{lemma:moment}
    Suppose \textbf{A\ref{ass:unifdiss}} and \textbf{A\ref{ass:lips}} hold and $\mu$ is a probability measure on $\mathbb{R}^d$. Then, for any $B \subset [n]$, we have
    \begin{align*}
        \mu P_t^B (\norm{\cdot}^p) 	&\leq \mu (\norm{\cdot}^p) + \bigg[\frac{2b}{m} +\frac{\delta \eta}{km} (p+d-2)   \ell_f^2\bigg]^{p/2}.
    \end{align*}
\end{lemma}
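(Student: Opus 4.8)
The plan is to apply Itô's formula to the process $V(\Theta_t) := \norm{\Theta_t}^p$ and control the drift of the resulting semimartingale using uniform dissipativity and the Lipschitz bound on the noise coefficient. Writing the SDE~\eqref{eq:SDE} for a fixed deterministic batch $B$ as $d\Theta_t = b(\Theta_t)\,dt + G(\Theta_t)\,dW_t$ with $b(\theta) = -\nabla L_S(\theta, B)$ and $G(\theta) = \tfrac{\sqrt{\delta\eta}}{k}(\nabla\mathbf{f}(\theta, X_B))^\top \in \mathbb{R}^{d\times k}$, I would first record two consequences of the assumptions. Setting $\theta' = 0$ in the uniform dissipativity inequality~\eqref{eq:unifdiss} (with $p=2$, absorbing the values $b(0)$, $G(0)$ into constants) yields the one-point dissipativity estimate $2\langle b(\theta),\theta\rangle + \norm{G(\theta)}_F^2 \leq -2m\norm{\theta}^2 + 2b'$ for suitable constants — this is exactly the role of the quantities $m = \alpha/4$ and $b$ introduced before the theorem, and is the standard "to simplify the direct application of existing lemmas" reduction the paper alludes to. From \textbf{A\ref{ass:lips}} I would also extract $\norm{G(\theta)}_F^2 = \tfrac{\delta\eta}{k^2}\norm{\nabla\mathbf{f}(\theta,X_B)}_F^2 \leq \tfrac{\delta\eta}{k^2}\cdot k\,\ell_f^2 = \tfrac{\delta\eta}{k}\ell_f^2$, since each of the $k$ rows of $\nabla\mathbf{f}$ is a gradient of an $\ell_f$-Lipschitz scalar function and hence has norm at most $\ell_f$. (I should double-check whether the stated $b$ and the bound $\tfrac{\delta\eta}{km}(p+d-2)\ell_f^2$ come from a slightly more refined split of the Frobenius/operator-norm terms, which is where the $p+d-2$ combination with the $(p-2)$-coefficient in~\eqref{eq:unifdiss} would enter.)

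Next I would compute the Itô drift of $\norm{\Theta_t}^p$. With $f(\theta) = \norm{\theta}^p$ we have $\nabla f(\theta) = p\norm{\theta}^{p-2}\theta$ and $\nabla^2 f(\theta) = p\norm{\theta}^{p-2} I + p(p-2)\norm{\theta}^{p-4}\theta\theta^\top$, so Itô's formula gives
\begin{align*}
    d\norm{\Theta_t}^p = \Big( p\norm{\Theta_t}^{p-2}\langle b(\Theta_t),\Theta_t\rangle + \tfrac{p}{2}\norm{\Theta_t}^{p-2}\norm{G(\Theta_t)}_F^2 + \tfrac{p(p-2)}{2}\norm{\Theta_t}^{p-4}\norm{G(\Theta_t)^\top\Theta_t}^2\Big)dt + dM_t,
\end{align*}
where $M_t$ is a local martingale. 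Bounding $\norm{G(\Theta_t)^\top\Theta_t}^2 \leq \norm{G(\Theta_t)}_{op}^2\norm{\Theta_t}^2 \leq \norm{G(\Theta_t)}_F^2\norm{\Theta_t}^2$ and combining the two curvature terms, the drift is at most $p\norm{\Theta_t}^{p-2}\big(\langle b(\Theta_t),\Theta_t\rangle + \tfrac{p-1}{2}\norm{G(\Theta_t)}_F^2\big)$ — actually I would be more careful and keep the split as $\tfrac12\norm{G}_F^2 + \tfrac{p-2}{2}\norm{G}_{op}^2$ so that the $(p-2)$ coefficient matches~\eqref{eq:unifdiss} and the Lipschitz bound contributes the factor $(p+d-2)$ rather than a crude $(p-1)$ times dimension-free term; the $\norm{G}_F^2 \le \tfrac{\delta\eta}{k}\ell_f^2$ estimate carries the relevant $d$ through the rank-$k$-versus-$d$ counting. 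Feeding in the one-point dissipativity bound, the drift becomes $\leq p\norm{\Theta_t}^{p-2}\big(-m\norm{\Theta_t}^2 + C\big)$ where $C = b + \tfrac{\delta\eta}{2k}(p+d-2)\ell_f^2$ (up to the constant-matching I flagged). Taking expectations and using a localization/Fatou argument to discard the martingale gives the ODE inequality $\tfrac{d}{dt}\mathbb{E}\norm{\Theta_t}^p \leq -pm\,\mathbb{E}\norm{\Theta_t}^p + pC\,\mathbb{E}\norm{\Theta_t}^{p-2}$.

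To close this I would apply Young's inequality $C\norm{\Theta_t}^{p-2} \leq \tfrac{p-2}{p}\,\varrho\norm{\Theta_t}^p + \tfrac{2}{p}\,\varrho^{-(p-2)/2}C^{p/2}$ with $\varrho$ chosen (e.g. $\varrho = m$, or a fraction thereof) so that the $\norm{\Theta_t}^p$ term is absorbed into the $-pm$ term, leaving $\tfrac{d}{dt}\mathbb{E}\norm{\Theta_t}^p \leq -c\,\mathbb{E}\norm{\Theta_t}^p + c'\,(C/m)^{p/2}$ for constants $c, c' > 0$; Grönwall then yields $\mathbb{E}\norm{\Theta_t}^p \leq e^{-ct}\mu(\norm{\cdot}^p) + (C/m)^{p/2}$, and since $e^{-ct} \leq 1$ this gives the claimed $\mu P_t^B(\norm{\cdot}^p) \leq \mu(\norm{\cdot}^p) + \big[\tfrac{2b}{m} + \tfrac{\delta\eta}{km}(p+d-2)\ell_f^2\big]^{p/2}$. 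The main obstacle is the bookkeeping of constants: getting the exact coefficient $2b/m$ and the precise combination $(p+d-2)$ requires tracking how the Frobenius term, the operator-norm term weighted by $p-2$, and the one-point reduction of~\eqref{eq:unifdiss} interact, and choosing the Young's-inequality parameter so that the residual collapses exactly to the bracketed quantity to the power $p/2$ with no leftover multiplicative constant. A secondary technical point is justifying the removal of the local martingale (the integrand $p\norm{\Theta_t}^{p-2}\Theta_t^\top G(\Theta_t)$ need not be square-integrable a priori), which I would handle by a standard stopping-time truncation $\tau_R = \inf\{t : \norm{\Theta_t} \geq R\}$, deriving the inequality on $[0, t\wedge\tau_R]$ and letting $R \to \infty$ via monotone convergence, using the strong solution guarantee cited after~\eqref{eq:SDE} to ensure non-explosion.
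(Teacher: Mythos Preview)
Your approach is essentially the paper's --- It\^o on $\|\theta\|^p$, the one-point dissipativity reduction (this is exactly Lemma~\ref{lemma:twodiss} in the paper, used as $\langle\theta,\nabla L_S(\theta,B)\rangle \ge m\|\theta\|^2 - b$), Young's inequality with exponents $p/(p-2)$ and $p/2$, then an integrating-factor/Gr\"onwall step --- and it is correct.

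The one technical difference, and the source of your confusion about the factor $(p+d-2)$: in the paper this factor does \emph{not} arise from the $(p-2)$ coefficient in~\eqref{eq:unifdiss} or from any ``rank-$k$-versus-$d$ counting''. It comes purely from $\mathrm{Tr}(\nabla^2\|\theta\|^p) = p(p+d-2)\|\theta\|^{p-2}$, after the paper bounds the whole It\^o correction via Von Neumann's trace inequality, $\mathrm{Tr}(G^\top H G) \le \sigma_{\max}(GG^\top)\,\mathrm{Tr}(H)$, and then uses $\sigma_{\max}(\nabla\mathbf{f}^\top\nabla\mathbf{f}) \le \mathrm{Tr}(\nabla\mathbf{f}^\top\nabla\mathbf{f}) \le k\ell_f^2$. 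Your direct decomposition of $H$ into its identity and rank-one pieces, followed by $\|G^\top\theta\|^2 \le \|G\|_F^2\|\theta\|^2$ and $\|G\|_F^2 \le \tfrac{\delta\eta}{k}\ell_f^2$, actually produces the sharper factor $(p-1)$ in place of $(p+d-2)$; since $p-1 \le p+d-2$ for $d\ge 1$, your route proves a strictly stronger inequality than the lemma states. There is no ``more refined split'' to hunt for --- the $d$ in the statement is an artifact of the paper's cruder trace bound, not something your argument is missing.
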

Combining the results in \eqref{eq:2contract} and \eqref{eq:rho-2} with the moment bound in Lemma~\ref{lemma:moment}, we obtain the following inequality, which holds under the assumptions \textbf{A\ref{ass:unifdiss}} and \textbf{A\ref{ass:lips}}:
\begin{align*}
    W_{\rho_g} (\mu P_\eta^B, \nu \widehat{P}_\eta^B)
    \!\leq \!e^{-\alpha t / 2} W_2(\mu, \nu) \bigg(\!1 \!+\! \varepsilon \bigg\{2 \!+\! \mu(\norm{\cdot}^2)^{\frac{1}{2}}
    + \nu(\norm{\cdot}^2)^{\frac{1}{2}}\!+\! \frac{4b}{m} \!+\! \frac{2\delta \eta}{km} (d \!+\!2) \ell_f^2\bigg\}\bigg).
\end{align*}
Our analysis focuses on the contraction of $\rho_g$-Wasserstein distance between the distributions $\mu=\mu_0 R_\Theta^t$ and $\nu  = \mu_0 R_\Theta^t$, which represent the laws of our processes when initiated from the same distribution $\mu_0$ and trained with datasets that differ in a single element. We establish the following moment estimate bound by further utilizing the smoothness (\textbf{A\ref{ass:smooth}}) and finite fourth moment (\textbf{A\ref{ass:moment}}) assumptions.

\begin{lemma}[Moment estimate bound]\label{lemma:momentestimate}
    Suppose \textbf{A\ref{ass:unifdiss}}, \textbf{A\ref{ass:smooth}}, \textbf{A\ref{ass:lips}}, and \textbf{A\ref{ass:moment}} hold. Then $$\mu R_\theta^t(\norm{\cdot}^{2p}) \leq\mu (\norm{\cdot}^{2p}) + \tilde{c}(p),$$
    where $\eta < \eta_{\max} := \min\{\frac{1}{m}, \frac{m}{2M^2}\}$ and
    \begin{align*}
        \tilde{c}(p) = \eta_{\max} \big\{\left(3b \right)^p (\eta_{\max} + 2/m)^{p-1} + p (2p-1) \delta \ell_f^2 (\eta_{\max} + 2/m)^{p-2} \left(3b \right)^{p-1} \eta_{\max}^2 +\{p(2p-1)\}^{p+1} \delta^p \ell_f^{2p} \eta_{\max}^{2p-1} \big\}.
    \end{align*}
\end{lemma}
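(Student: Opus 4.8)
The plan is to control the $2p$-th moment of $\theta_t$ under the discrete update rule~\eqref{eq:LNGD} by iterating a one-step estimate, in the same spirit as Lemma~\ref{lemma:moment} but now working directly with the discrete Markov kernel $R_\theta$ rather than the SDE. First I would fix a mini-batch $B$ and expand $\norm{\theta_{t+1}}^{2p}$ using
\eqref{eq:LNGD}, writing $\theta_{t+1} = \theta_t - \eta \nabla L_S(\theta_t, B) + \frac{\eta}{k}(\nabla \mathbf{f}(\theta_t, X_B))^\top (\xi_t)_B$. Conditioning on $\theta_t$ and $B$, the label-noise term has zero mean, so its first-order contribution vanishes, and I would take expectations over $\xi_t \sim \mathcal{N}(0,\delta I_n)$ and over the uniform choice of $B$. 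The quadratic-in-$\xi_t$ term contributes a variance proportional to $\frac{\eta^2 \delta}{k^2}\norm{\nabla \mathbf{f}(\theta_t, X_B)}_F^2$, which by \textbf{A\ref{ass:lips}} (each model gradient has norm at most $\ell_f$, so the Frobenius norm squared is at most $k \ell_f^2$) is bounded by $\frac{\eta^2 \delta \ell_f^2}{k}$; higher-order $\xi_t$ moments bring in the $\{p(2p-1)\}$-type combinatorial factors and higher powers of $\delta \ell_f^2 \eta^2$, matching the three terms in the stated $\tilde c(p)$.

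The drift term is handled via the uniform dissipativity \textbf{A\ref{ass:unifdiss}}, which for the discrete process translates (using $M$-smoothness \textbf{A\ref{ass:smooth}} and the definitions $m = \alpha/4$, $b$) into a one-step contraction-plus-constant inequality of the form
\begin{align*}
\mathbb{E}\big[\,\norm{\theta_{t+1}}^{2p} \,\big|\, \theta_t\,\big] \leq (1 - m\eta)\norm{\theta_t}^{2p} + \eta\,c_p,
\end{align*}
for a constant $c_p$ depending polynomially on $b, \delta, \ell_f, p$ and on $\eta_{\max}$ through the substitutions that replace $\eta$ by $\eta_{\max}$ wherever a crude upper bound suffices. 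I would derive this by first treating the $p=1$ case (essentially the discrete analogue of Lemma~\ref{lemma:moment}), establishing $\mathbb{E}[\norm{\theta_{t+1}}^2 \mid \theta_t] \leq (1-m\eta)\norm{\theta_t}^2 + \eta b'$ with $b' \lesssim 3b + \delta\ell_f^2\eta/k$ or similar, then bootstrapping to general $p$ by raising to the $p$-th power, expanding $( (1-m\eta)\norm{\theta_t}^2 + \text{small})^p$ with the binomial theorem, and absorbing cross terms. The key structural point is that $1-m\eta \leq 1$ and $\eta \leq \eta_{\max}$, so after summing the geometric series $\sum_{j=0}^{t-1}(1-m\eta)^j \leq 1/(m\eta)$, the accumulated constant telescopes to something of order $c_p/m$, and one checks that $c_p/m$ (after multiplying through by the $\eta_{\max}$ prefactor that appears because $\eta/m$-type quantities are bounded by $\eta_{\max}/(m\eta)\cdot\eta = \eta_{\max}\cdot(\text{const})$, more precisely $\eta \le \eta_\text{max}$ and $1/m$ bounds) is exactly the claimed $\tilde c(p)$, with $(\eta_{\max}+2/m)$ playing the role of the bound on $\sum (1-m\eta)^j \cdot \eta + \eta \le \eta + 2/(m) \cdot$ something — the $2/m$ coming from a slightly loose bound $1/(m\eta)\cdot\eta \le 1/m$ together with boundary terms. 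Taking the supremum / integrating over the mini-batch distribution then gives the bound for $\mu R_\theta^t$ after starting from $\mu$.

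The main obstacle I anticipate is bookkeeping the polynomial-in-$p$ constants correctly so that they collapse into precisely the three-term expression for $\tilde c(p)$: the $\{p(2p-1)\}^{p+1}$ factor in the last term is the signature of iterating a binomial expansion $p$ times (each level of the recursion contributes a factor of order $p(2p-1)$ from the Itô/Gaussian second-moment correction, and there are $p+1$ such contributions once one accounts for the highest-order Gaussian moment $\mathbb{E}\|(\xi_t)_B\|^{2p} \asymp (k\delta)^p \cdot p!!$-type terms). I would therefore be careful to (i) isolate the dominant "drift-contracts" term $(3b)^p(\eta_{\max}+2/m)^{p-1}$, (ii) the mixed drift–noise term carrying one factor of $p(2p-1)\delta\ell_f^2$ and $\eta_{\max}^2$, and (iii) the pure highest-order noise term $\{p(2p-1)\}^{p+1}\delta^p\ell_f^{2p}\eta_{\max}^{2p-1}$, verifying at each step that every $\eta$ not needed for the geometric-series cancellation is bounded by $\eta_{\max}$ and that the constant $3b$ (rather than $2b$ as in Lemma~\ref{lemma:moment}) arises from the extra discretization slack — the difference between the continuous drift estimate and its Euler–Maruyama counterpart, where an additional $\eta M$-type term from \textbf{A\ref{ass:smooth}} gets folded in. Once the recursion is set up cleanly, the remaining work is purely the binomial algebra sketched above.
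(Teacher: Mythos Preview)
Your purely discrete approach is viable in principle but differs from the paper's route, and the bootstrapping step as you describe it has a real gap.

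The paper does \emph{not} work with the discrete update directly. It introduces the time-rescaled continuous interpolation $\theta_t^\eta := \theta_{\eta t}$, decomposes $\theta_t^\eta = \Delta_{s,t} + U_{s,t}^\eta$ with $\Delta_{s,t} = \theta_s^\eta - \eta\nabla L_S(\theta_s^\eta,B)(t-s)$ deterministic given $\theta_s^\eta$ and $U_{s,t}^\eta$ the stochastic integral, and controls $\mathbb{E}[\|\Delta_{s,t}+U_{s,t}^\eta\|^{2p}\mid\theta_s^\eta]$ in one shot via an expansion lemma from \citet{Chau} yielding
\begin{align*}
\mathbb{E}[\|\Delta+U\|^{2p}\mid\theta_s^\eta] \leq \|\Delta\|^{2p} + p(2p-1)\,\mathbb{E}\big[(\|\Delta\|+\|U\|)^{2p-2}\|U\|^2\mid\theta_s^\eta\big].
\end{align*}
The pure noise moment $\mathbb{E}[\|U\|^{2p}]$ is then bounded by Theorem~7.1 in \citet{Mao} (a BDG-type inequality), which contributes $\{p(2p-1)\}^p$; combined with the $p(2p-1)$ already in front, this is the origin of $\{p(2p-1)\}^{p+1}$. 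It is \emph{not} from ``iterating a binomial expansion $p$ times'' --- there is no recursion in $p$. Likewise, the factor $(\eta+2/m)^{p-1}$ appears in the \emph{one-step} bound on $\|\Delta_{s,t}\|^{2p}$, obtained from the elementary inequality $(r+s)^p \leq (1+\varepsilon)^{p-1}r^p + (1+\varepsilon^{-1})^{p-1}s^p$ with $\varepsilon = m\eta(t-s)/2$; it does not come from summing a geometric series in $t$. A case split on whether $\|\theta_s^\eta\|$ exceeds a threshold $\widetilde M(p)$ is then used to absorb the mixed $\|\theta_s^\eta\|^{2(p-1)}$ term into the contraction factor.

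Your plan to ``establish $\mathbb{E}[\|\theta_{t+1}\|^2\mid\theta_t]\leq(1-m\eta)\|\theta_t\|^2+\eta b'$ and then bootstrap to general $p$ by raising to the $p$-th power'' does not close: $(\mathbb{E}[\|\theta_{t+1}\|^2\mid\theta_t])^p \leq \mathbb{E}[\|\theta_{t+1}\|^{2p}\mid\theta_t]$ by Jensen, so the inequality goes the wrong way. What is actually needed is a \emph{pointwise} bound on $\|\Delta\|^2$ (which dissipativity and smoothness do give), raised to the $p$-th power via the convexity inequality above, together with a separate sharp bound on $\mathbb{E}[\|U\|^{2p}]$. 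A discrete version of the paper's argument --- same decomposition, same Chau-type expansion, Gaussian moments in place of Mao's theorem --- would work, but you need those three ingredients explicitly; the binomial-on-the-$p{=}1$-expectation shortcut and the ``$p+1$ iterations'' heuristic do not reproduce the stated constants.
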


As a consequence of Lemma~\ref{lemma:momentestimate}, we have:
\begin{align*}
    W_{\rho_g} \!(\mu P_\eta^B, \nu \widehat{P}_\eta^B) \!\leq\!&\; e^{-\alpha t / 2} W_2(\mu, \nu) \bigg(1 + \varepsilon \bigg\{ 2 + 2\sigma_4^{1/2} \!\!+2\tilde{c}(2)^{1/2} + \frac{4b}{m} + \frac{2\delta \eta}{km} (d +2) \ell_f^2 \bigg\}\bigg).
\end{align*}
Lastly, we establish the following lemma, based on the Reverse Jensen's Inequality in \citet{wunder2021reverse}.
\begin{lemma}\label{lemma:2-wass}
    There exists a function $g$ constant on $[R, \infty)$ with $\varphi r \leq g(r) \leq r$ for some $R, \varphi \in \mathbb{R}^+$ such that, for any two probability measures $\mu$ and $\nu$ on $\mathbb{R}^d$, we have $$W_2(\mu, \nu) \leq \frac{1}{\varphi a \zeta_b (a)} W_{\rho_g}(\mu, \nu).$$
\end{lemma}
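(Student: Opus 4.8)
The plan is to build $g$ explicitly and then compare the two transport costs on a common coupling. I would take $g$ to be a concave profile of the type used in reflection-coupling arguments (cf.\ \citet{Eberle}): a $C^1$, non-decreasing, concave function with $g(0)=0$, $0<g'(r)\le 1$ on $[0,R]$ (so that $g(r)\le r$), and $g$ constant on $[R,\infty)$. Concavity together with $g(0)=0$ forces $g$ to lie above the chord through the origin and $(R,g(R))$, so automatically $g(r)\ge\varphi r$ on $[0,R]$ with $\varphi:=g(R)/R\in(0,1]$. These are exactly the properties required for $\rho_g$ in \eqref{eq:semimetric} to be a legitimate semimetric. The core of the argument is then to show that for every coupling $\pi\in\mathcal{C}(\mu,\nu)$,
\begin{align}\label{eq:plan_key}
    \int \rho_g(x,y)\,\pi(dx,dy)\ \ge\ \varphi\, a\, \zeta_b(a)\,\Big(\int \norm{x-y}^2\,\pi(dx,dy)\Big)^{1/2}.
\end{align}
Once \eqref{eq:plan_key} is in hand I would take the infimum over $\pi$: the left side becomes $W_{\rho_g}(\mu,\nu)$, while the right side is $\ge\varphi a\zeta_b(a)\,W_2(\mu,\nu)$ since $\big(\int\norm{x-y}^2\,\pi\big)^{1/2}\ge W_2(\mu,\nu)$ for \emph{any} coupling; rearranging gives the claim.

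To prove \eqref{eq:plan_key} I would split the mass of $\pi$ according to whether $\norm{x-y}\le R$ or $\norm{x-y}>R$. On the first region the weight factor $1+2\varepsilon+\varepsilon\norm{x}^2+\varepsilon\norm{y}^2\ge 1$ and $g(\norm{x-y})\ge\varphi\norm{x-y}$ give $\rho_g(x,y)\ge\varphi\norm{x-y}$; on the second region $g$ has saturated, so $\rho_g(x,y)=g(R)\big(1+2\varepsilon+\varepsilon\norm{x}^2+\varepsilon\norm{y}^2\big)\ge\varphi R\,\varepsilon(\norm{x}^2+\norm{y}^2)\ge\tfrac{\varphi R\varepsilon}{2}\norm{x-y}^2$, i.e.\ the $\varepsilon$-weighted quadratic terms of $\rho_g$ take over precisely where $g$ stops growing. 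Summing the two regions reduces \eqref{eq:plan_key} to a one-dimensional statement about the law of $\norm{X-Y}$ under $\pi$: an inequality of the type $\mathbb{E}[\phi(U)]\ge c\,(\mathbb{E}[U^2])^{1/2}$ for the piecewise profile $\phi(u)=\varphi\min(u,R)$ augmented by the quadratic tail contribution. This is exactly a Reverse Jensen's Inequality, and I would invoke the quantitative version of \citet{wunder2021reverse}: its free parameter is a threshold taken proportional to $R$, and the auxiliary quantity $\zeta_b(a)$ — constrained to $\big((a-1)/a^2,\,1/a\big]$ — records both the value of that threshold (through $a>1$) and the multiplicative loss incurred by replacing the concave $\phi$ with a function of its moment; the factor $\varphi$ in \eqref{eq:plan_key} carries the fact that $g$ is only \emph{linearly} bounded below on $[0,R]$ rather than equal to the identity there.

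The main obstacle is this last step. A concave function with value $0$ at the origin does not in general dominate a multiple of $(\mathbb{E}[U^2])^{1/2}$ in expectation — a little mass placed far out breaks any naive bound — so the argument must use, simultaneously, that $g$ saturates on $[R,\infty)$, that it is linearly bounded below on $[0,R]$, and that $\rho_g$ carries the extra $\varepsilon(\norm{x}^2+\norm{y}^2)$ weight which supplies the missing $\norm{x-y}^2$-control in the far regime (this is also what lets the statement survive at the level of $W_2$ rather than only $W_1$). Calibrating these three contributions against the single threshold parameter so that the constant comes out exactly as $\varphi a\zeta_b(a)$, with $\zeta_b(a)$ landing in the stated window, is the delicate bookkeeping; the choice of $a$ (and correspondingly of $R$ and $\varphi$) is then fixed downstream so that the contraction coefficient $C_1$ of Theorem~\ref{thm:contraction} can be bounded by $e^{\alpha\eta}$ as in Remark~\ref{remark:convergence}.
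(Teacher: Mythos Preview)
Your route diverges from the paper's in a basic way. The paper never splits into near/far regions and never uses the $\varepsilon(\norm{x}^2+\norm{y}^2)$ weight at all. Its argument is three lines: (i) use $\varphi r\le g(r)$ pointwise to get $(\int\norm{x-y}^2\,\pi)^{1/2}\le\varphi^{-1}(\int g(\norm{x-y})^2\,\pi)^{1/2}$; (ii) apply the reverse Jensen inequality of \citet{wunder2021reverse} with $f(x)=\sqrt{x}$ to the random variable $Z=g(\norm{X-Y})^2$, which gives $(\mathbb{E} Z)^{1/2}\le (a\zeta_r(a))^{-1}\mathbb{E}[\sqrt{Z}]=(a\zeta_r(a))^{-1}\mathbb{E}[g(\norm{X-Y})]$; (iii) use $g(\norm{x-y})\le\rho_g(x,y)$ since the weight factor is at least $1$. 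Crucially, the quantity $\zeta_r(a)$ is \emph{not} a universal constant: it is defined through the moment ratio $r(p)=\mathbb{E}[g(\norm{X-Y})^p]/(\mathbb{E}[g(\norm{X-Y})])^p$ and hence depends on the coupling. This is exactly how the paper circumvents the obstruction you correctly flag (``a little mass placed far out breaks any naive bound'').

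Your proposal, by contrast, tries to obtain a distribution-free constant by recruiting the quadratic weight in the far regime, and this does not work. First, the constant you would get involves $\varepsilon$ and $R$, not the stated $\varphi a\zeta_b(a)$. Second, and more seriously, the one-dimensional inequality you reduce to, namely $\mathbb{E}[\phi(U)]\ge c\,(\mathbb{E}[U^2])^{1/2}$ with $\phi(u)=\varphi\min(u,R)+c_1 u^2\mathbf{1}_{\{u>R\}}$ and a \emph{fixed} $c>0$, is false: take $U$ equal to a fixed $u>R$ with probability $p$ and $0$ otherwise; the left side is $p(\varphi R+c_1 u^2)\sim p$ while the right side is $c\,u\sqrt{p}\sim\sqrt{p}$, so the inequality fails as $p\to 0$. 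The Wunder et al.\ result does not give a universal reverse Jensen; its constant encodes moment information about the underlying law, and that is precisely the mechanism the paper leans on. In short, the quadratic weight is a red herring for this lemma, and the ``bookkeeping'' you anticipate cannot be closed with a coupling-independent constant.
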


Lemma \ref{lemma:2-wass} guarantees the existence of a function $g$ such that the semimetric $\rho_g$ exhibits the exponential contraction property outlined in Theorem \ref{thm:contraction}.

\subsection{Proof for Generalization Error Bounds}\label{sec:ProofSchemeforGenError}
The proof of Theorem~\ref{thm:genbound} follows the stability framework outlined in Section~\ref{sec:stability}, adhering to the dissipativity and smoothness conditions we consider. A result from \citet{Farghly} shows that a uniform stability bound can be obtained by controlling the $\rho_g$-Wasserstein distance between the laws of algorithms $A(S)$ and $A(\widehat{S})$, where $S \simeq \widehat{S}$.

\begin{lemma}[{\cite[Lemma 4.3]{Farghly}}]\label{Lemmastab}
    Suppose \textbf{A\ref{ass:unifdiss}} and \textbf{A\ref{ass:smooth}} hold and let $A$ be a random algorithm. Then
    \begin{align*}
        \varepsilon_{stab} (A) \!\leq\! \frac{M (b/m \!+\!1)}{\varphi \varepsilon (R \vee 1)} \!\sup_{S \simeq \widehat{S}}\! W_{\rho_g} \!\!\left(law (A(S)), law(A(\widehat{S}))\!\right)\!.\!
    \end{align*}
\end{lemma}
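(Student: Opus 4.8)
The plan is to establish a \emph{pointwise} comparison between the per-example loss gap and the semimetric, of the form $\ell(x,z)-\ell(y,z)\le\frac{M(b/m+1)}{\varphi\varepsilon(R\vee1)}\rho_g(x,y)$, and then integrate it against an optimal coupling. Fix a neighbouring pair $S\simeq\widehat S$ and a test point $z\in\mathcal Z$, and set $\mu:=\mathrm{law}(A(S))$, $\nu:=\mathrm{law}(A(\widehat S))$. For \emph{any} coupling $\pi\in\mathcal C(\mu,\nu)$, linearity of expectation gives
\begin{align*}
    \mathbb E\bigl[\ell(A(S),z)-\ell(A(\widehat S),z)\bigr]=\int\bigl(\ell(x,z)-\ell(y,z)\bigr)\,\pi(dx,dy),
\end{align*}
so once the pointwise bound is available, choosing $\pi$ to be the $\rho_g$-optimal coupling and then taking the supremum over $z$ and over $S\simeq\widehat S$ yields the stated inequality.

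For the pointwise bound I would first use smoothness to control the loss gap by a quantity growing at most linearly in $\norm{x}$ and $\norm{y}$. Since $\ell(\cdot,z)$ is $M$-smooth (\textbf{A\ref{ass:smooth}}), $\nabla\ell(\cdot,z)$ is $M$-Lipschitz, and integrating the gradient along the segment from $y$ to $x$ gives
\begin{align*}
    |\ell(x,z)-\ell(y,z)|\le\Bigl(\norm{\nabla\ell(y,z)}+\tfrac{M}{2}\norm{x-y}\Bigr)\norm{x-y}\le M\bigl(c_0+\norm{x}+\norm{y}\bigr)\norm{x-y},
\end{align*}
where $c_0$ bounds the loss gradient at a fixed reference point. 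This is the step I expect to be the main obstacle: the squared loss is not globally Lipschitz, so this linear-growth coefficient must be pinned down, and it is exactly here that the constant $b/m+1$ enters---using \textbf{A\ref{ass:lips}} to bound the model gradient, together with the way $m$ and $b$ are defined so that $b/m$ captures the relevant growth scale (cf.\ the moment bound of Lemma~\ref{lemma:moment}), one arranges $c_0=\mathcal O(b/m+1)$, so the coefficient above is $\le M(b/m+1)$ up to an absolute constant.

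It then remains to dominate $M(b/m+1)(1+\norm{x}+\norm{y})\norm{x-y}$ by a multiple of $\rho_g(x,y)=g(\norm{x-y})(1+2\varepsilon+\varepsilon\norm{x}^2+\varepsilon\norm{y}^2)$, which I would do by splitting on the size of $\norm{x-y}$ and using the stated properties of $g$ (namely $\varphi r\le g(r)\le r$ on $[0,R]$ and $g\equiv g(R)\ge\varphi R$ on $[R,\infty)$). When $\norm{x-y}\le R$, replace $\norm{x-y}$ by $g(\norm{x-y})/\varphi$; when $\norm{x-y}>R$, note $\norm{x}\vee\norm{y}\ge\norm{x-y}/2$, so $(1+\norm{x}+\norm{y})\norm{x-y}$ is absorbed into $\norm{x}^2+\norm{y}^2$ while $g$ contributes only the constant $g(R)\ge\varphi R$. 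In both regimes the linear magnitude terms are converted into the quadratic terms of $\rho_g$ via the elementary inequality $\varepsilon\norm{x}\le\tfrac{1}{2}(1+\varepsilon\norm{x}^2)$ (valid since $\varepsilon<1$), and comparison against the constant part $1+2\varepsilon$ of the bracket produces the $(R\vee1)^{-1}$ normalisation. Tracking the constants through the two cases yields the pointwise bound; integrating it against the $\rho_g$-optimal coupling and taking suprema over $z$ and over $S\simeq\widehat S$ completes the proof.
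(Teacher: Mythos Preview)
The paper does not supply its own proof of this lemma; it is quoted directly from \cite{Farghly} (their Lemma~4.3), with only the remark that the original result---proved there under the weaker dissipativity assumption \textbf{A\ref{ass:diss}}---carries over to \textbf{A\ref{ass:unifdiss}} via Lemma~\ref{lemma:twodiss}. So there is nothing in the present paper to compare against beyond the citation.

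Your outline is the natural argument and matches what the cited proof does: establish a pointwise inequality $|\ell(x,z)-\ell(y,z)|\le C\,\rho_g(x,y)$ and integrate against an optimal coupling. One correction on how the constant $b/m+1$ arises: it does \emph{not} come from \textbf{A\ref{ass:lips}} as you suggest. The lemma assumes only \textbf{A\ref{ass:unifdiss}} and \textbf{A\ref{ass:smooth}}, and those two alone yield the gradient bound at the origin, $\norm{\nabla\ell(0,z)}\le M\sqrt{b/m}$ (this is Lemma~A.3 of \cite{Farghly}, invoked several times in the present appendices). Combining with $\sqrt{b/m}\le(b/m+1)/2$ gives the factor $M(b/m+1)$ without any appeal to the model Lipschitz constant $\ell_f$. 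Invoking \textbf{A\ref{ass:lips}} here would only give $\norm{\nabla\ell(0,z)}=|f(0,x)-y|\,\norm{\nabla f(0,x)}\le\ell_f|f(0,x)-y|$, which is not obviously related to $b/m$. Apart from this misattribution, your two-regime comparison of $(1+\norm{x}+\norm{y})\norm{x-y}$ against $\rho_g(x,y)$ is exactly how the factors $\varphi$, $\varepsilon$ and $R\vee1$ emerge.
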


\begin{remark}
    Lemma 4.3 in \citet{Farghly} applies under a weaker assumption of dissipativity and remains valid under our stronger assumption (\textbf{A\ref{ass:unifdiss}}). This connection is elaborated in Section~\ref{sec:comparisonSetting}.
\end{remark}

Hence, it is sufficient to control the quantities $W_{\rho_g}(\mu_0 R_\Theta^t, \mu_0 \widehat{R}_\Theta^t)$ and $W_{\rho_g}(\mu_0 R_\theta^t, \mu_0 \widehat{R}_\theta^t)$ to prove Theorem~\ref{thm:genbound}. Recall from Section~\ref{sec:algorithm} that $\mu R_\Theta$ is obtained by integrating $\mu P_\eta^B$ over $B$ with respect to the mini-batch distribution. To begin, we estimate the divergence $W_{\rho_g} (\mu P_\eta^B, \nu \widehat{P}_\eta^B)$ using the divergence bound (Lemma~\ref{lemma:divergence}), the moment bound (Lemma~\ref{lemma:moment}) and the moment estimate bound (Lemma~\ref{lemma:momentestimate}) with $p=4$.

\label{sec:divergence}
\begin{lemma}[Divergence bound]\label{lemma:divergence}
    Suppose \textbf{A\ref{ass:unifdiss}, A\ref{ass:smooth}}, and \textbf{A\ref{ass:lips}} hold. Then
    \begin{align*}
        \mathbb{E} \norm{\theta_t - \theta_0}^2 \!\leq\! 4M^2 \!\left(\!\mathbb{E}\norm{\theta_0}^2 \!+\!   \frac{3b}{m} \!+\!\frac{\delta \eta d}{k m} \ell_f^2 \!\right)\! t\!+\!\frac{2\delta \eta}{k} \ell_f^2 t.
    \end{align*}
\end{lemma}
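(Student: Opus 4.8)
The plan is to pass to the continuous-time dynamics~\eqref{eq:SDE} with a fixed mini-batch $B$ (to which \textbf{A\ref{ass:unifdiss}} and \textbf{A\ref{ass:lips}} apply uniformly in $B$; the random-batch version of the bound then follows by averaging the resulting uniform estimate over the mini-batch law, and the discrete iterates of~\eqref{eq:LNGD} by the analogous telescoping-sum argument). Write the displacement in Duhamel form, $\theta_t - \theta_0 = -\int_0^t \nabla L_S(\theta_s, B)\,ds + \int_0^t G(\theta_s)\,dW_s$, where $G(\theta):=\tfrac{\sqrt{\delta\eta}}{k}(\nabla \textbf{f}(\theta, X_B))^\top \in \mathbb{R}^{d\times k}$ is the noise coefficient of~\eqref{eq:SDE}. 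Applying $\norm{a+b}^2\le 2\norm{a}^2 + 2\norm{b}^2$ reduces the claim to separate estimates of the stochastic integral and the drift integral.

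For the stochastic integral I would invoke the It\^o isometry, $\mathbb{E}\norm{\int_0^t G(\theta_s)\,dW_s}^2 = \int_0^t \mathbb{E}\norm{G(\theta_s)}_F^2\,ds$, and bound the Frobenius norm uniformly in $\theta$: by \textbf{A\ref{ass:lips}} every row $\nabla_\theta f(\theta, x_i)$ of $\nabla \textbf{f}(\theta, X_B)$ has Euclidean norm at most $\ell_f$, so $\norm{G(\theta)}_F^2 = \tfrac{\delta\eta}{k^2}\norm{\nabla \textbf{f}(\theta, X_B)}_F^2 \le \tfrac{\delta\eta}{k^2}\cdot k\ell_f^2 = \tfrac{\delta\eta\ell_f^2}{k}$. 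This contributes the term $\tfrac{2\delta\eta}{k}\ell_f^2\,t$, and it is dimension-free because the estimate rests on the Lipschitzness of the scalar-valued $f$ and because the driving noise is $k$-dimensional. For the discrete chain one instead observes that the increments $\tfrac{\eta}{k}(\nabla\textbf{f}(\theta_s))^\top(\xi_s)_{B_{s+1}}$ form a martingale-difference sequence, conditionally centred given $(\theta_s, B_{s+1})$, so cross-terms vanish and each has conditional second moment $\le \tfrac{\eta^2\delta\ell_f^2}{k}$. The only subtlety in this part is keeping track of the rectangular, parameter-dependent shape of $G$, which the Frobenius-norm calculation handles.

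For the drift integral I would use Jensen's inequality in time, $\mathbb{E}\norm{\int_0^t \nabla L_S(\theta_s, B)\,ds}^2 \le t\int_0^t \mathbb{E}\norm{\nabla L_S(\theta_s, B)}^2\,ds$, and then produce a bound on $\mathbb{E}\norm{\nabla L_S(\theta_s, B)}^2$ uniform in $s$. This uniform gradient estimate is the crux. I would obtain it by combining (i) $M$-smoothness (\textbf{A\ref{ass:smooth}}), which lets one control $\norm{\nabla L_S(\theta, B)}$ by the distance of $\theta$ to a minimizer $\theta^\ast_B$ of $L_S(\cdot, B)$, giving $\norm{\nabla L_S(\theta, B)}^2 \le 2M^2\norm{\theta}^2 + 2M^2\norm{\theta^\ast_B}^2$; (ii) the one-point dissipativity encoded in the definitions of $m=\alpha/4$ and $b$ (a consequence of \textbf{A\ref{ass:unifdiss}} and \textbf{A\ref{ass:lips}}), which forces $\norm{\theta^\ast_B}^2 \le b/m$; and (iii) the time-uniform second-moment bound from Lemma~\ref{lemma:moment} with $p=2$, $\mathbb{E}\norm{\theta_s}^2 \le \mathbb{E}\norm{\theta_0}^2 + \tfrac{2b}{m} + \tfrac{\delta\eta d}{km}\ell_f^2$. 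Collecting these gives $\mathbb{E}\norm{\nabla L_S(\theta_s, B)}^2 \le 2M^2\bigl(\mathbb{E}\norm{\theta_0}^2 + \tfrac{3b}{m} + \tfrac{\delta\eta d}{km}\ell_f^2\bigr)$ for all $s$; feeding this back and using the factor $2$ from the $\norm{a+b}^2$ split yields the first term of the stated bound (the lemma being applied over a single step $t=\eta\le\eta_{\max}$, the residual power of $t$ from Jensen is harmless). I expect this drift estimate to be the main obstacle: one has to locate the per-batch minimizer through the dissipativity inequality, invoke the time-uniform moment bound of Lemma~\ref{lemma:moment} (which is itself where uniform dissipativity and the boundedness of the label-noise coefficient enter), and track the constants $m, b$ so the bracket collapses exactly to $\mathbb{E}\norm{\theta_0}^2 + 3b/m + \delta\eta d\ell_f^2/(km)$; the stochastic-integral part, by contrast, is a routine It\^o-isometry (respectively martingale-variance) computation once the uniform Frobenius-norm bound on $G$ is in place.
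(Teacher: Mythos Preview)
Your proposal is correct and matches the paper's argument: decompose the displacement, control the stochastic integral by It\^o isometry together with the uniform Frobenius bound $\norm{G}_F^2\le\delta\eta\ell_f^2/k$ from \textbf{A\ref{ass:lips}}, and control the drift via smoothness plus the time-uniform second-moment bound of Lemma~\ref{lemma:moment}. The only cosmetic difference is that the paper anchors the smoothness inequality at the origin (invoking $\norm{\nabla L_S(0,B)}^2\le M^2 b/m$, Lemma~A.3 of \citet{Farghly}) rather than at the per-batch minimizer $\theta_B^\ast$ as you do, but both routes give the same constant $2M^2(\norm{\theta}^2+b/m)$.
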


This lemma computes the extent to which the process $\theta_t$ deviates from the initial condition $\theta_0$.

Without loss of generality, assume that the datasets $S$ and $\widehat{S}$ differ only at $i^{th}$ element. Considering that $\mathbb{P}(i \in B) = k/n$, the convexity of the $\rho_g$-Wasserstein distance (Lemma~\ref{Lemma2.3} in Appendix~\ref{appendix:semimetric}) gives the following inequality:
\begin{align*}
    W_{\rho_g} (\mu R_\Theta, \nu \widehat{R}_\Theta) \leq \frac{k}{n} \sup_{B: n \in B} W_{\rho_g} (\mu P_\eta^B, \nu \widehat{P}_\eta^B) + \left(1-\frac{k}{n}\right) \sup_{B: n \notin B} W_{\rho_g} (\mu P_\eta^B, \nu \widehat{P}_\eta^B).
\end{align*}

If $i \notin B$, then $\widehat{P}^B = P^B$ so the processes $\Theta_{t\eta}$ and $\widehat{\Theta}_{t\eta}$ contract in $\rho_g$-Wasserstein distance by Theorem~\ref{thm:contraction}. If $i \in B$, the divergence $W_{\rho_g} (\mu P_\eta^B, \nu \widehat{P}_\eta^B)$ obtained above provides uniform bounds on the extent to which $\Theta_{t \eta}$ and $\widehat{\Theta}_{t\eta}$ can deviate from each other. 

Using induction and auxiliary inequalities, we derive a bound for $\varepsilon_{stab}(\Theta_{\eta t})$ in terms of Lemma~\ref{Lemmastab}. By Theorem~\ref{thm:stabandgen}, this bound serves as the generalization error bound for our continuous-time algorithm, as in~\eqref{eq:ctsbound}.
 
So far, we analysed the continuous-time dynamics of our algorithm. The discrete-time process \eqref{eq:LNGD} corresponds to the Euler-Maruyama discretization of \eqref{eq:SDE}. We derive discretization error bounds using synchronous-type couplings between $\theta_\eta$ and $\Theta_{t \eta}$, with both processes sharing the same Brownian motion.

\begin{lemma}[Discretization error bound]\label{lemma:discretization}
    Suppose \textbf{A\ref{ass:unifdiss}, A\ref{ass:smooth}}, and \textbf{A\ref{ass:lips}} hold. Then, for any probability measure $\mu$ on $\mathbb{R}^d$, we have
    \begin{align*}
        W_2(\mu &R_\theta, \mu R_\Theta)^2 \leq 8 \eta^4  \exp \left( 4 \eta^2 M^2 \right) \left[  \frac{2}{3}  M^4 \left( \mu\norm{\cdot}^2 + \frac{b}{m}\right)   + \left( M^2 +2 \right) \frac{\delta }{2k} \ell_f^2\right].
    \end{align*}
\end{lemma}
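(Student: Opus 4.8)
The plan is to bound $W_2(\mu R_\theta, \mu R_\Theta)$ by constructing a synchronous coupling between one step of label-noise SGD and the SGF over the time interval $[0,\eta]$, and controlling the $L^2$-distance of the two endpoints via Gr\"onwall's inequality. Concretely, fix a mini-batch $B$, set $\theta_0 = \Theta_0 \sim \mu$, let $(W_s)_{s \in [0,\eta]}$ be a $k$-dimensional Brownian motion, let $\Theta_s$ solve \eqref{eq:SDE} with this $W$ and batch $B$, and let $\bar\theta_s := \theta_0 - s\,\nabla L_S(\theta_0, B) + \tfrac{\sqrt{\delta\eta}}{k}\big(\nabla\mathbf{f}(\theta_0, X_B)\big)^\top W_s$ be the continuous interpolation of the Euler--Maruyama scheme driven by the same $W$ and $B$. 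Since $W_\eta \sim \mathcal N(0,\eta I_k)$, conditionally on $B$ the law of $\bar\theta_\eta$ coincides with that of one SGD step from $\theta_0$, so after averaging over $B$ the pair $(\bar\theta_\eta, \Theta_\eta)$ is a coupling of $(\mu R_\theta, \mu R_\Theta)$; consequently $W_2(\mu R_\theta, \mu R_\Theta)^2 \le \mathbb E\|\bar\theta_\eta - \Theta_\eta\|^2$. All subsequent bounds are uniform in $B$, so the averaging is harmless.

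Next I would subtract the two integral representations,
$$\bar\theta_s - \Theta_s = \int_0^s \big(\nabla L_S(\Theta_r,B) - \nabla L_S(\theta_0,B)\big)\,dr + \frac{\sqrt{\delta\eta}}{k}\int_0^s \big(\nabla\mathbf{f}(\theta_0,X_B) - \nabla\mathbf{f}(\Theta_r,X_B)\big)^\top dW_r,$$
set $u(s) := \mathbb E\|\bar\theta_s - \Theta_s\|^2$, and apply $\|a+b\|^2 \le 2\|a\|^2 + 2\|b\|^2$, the Cauchy--Schwarz inequality on the Lebesgue integral, and the It\^o isometry on the stochastic integral. In each integrand I insert $\bar\theta_r$, writing $\nabla L_S(\Theta_r) - \nabla L_S(\theta_0) = (\nabla L_S(\Theta_r) - \nabla L_S(\bar\theta_r)) + (\nabla L_S(\bar\theta_r) - \nabla L_S(\theta_0))$ and similarly for $\nabla\mathbf{f}$. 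The ``$\Theta_r$ versus $\bar\theta_r$'' pieces are bounded by $u(r)$ using (i) $M$-smoothness of $\nabla L_S$ (\textbf{A\ref{ass:smooth}}) and (ii) a Frobenius-Lipschitz bound for $\theta \mapsto \nabla\mathbf{f}(\theta, X_B)$, with constant of order $\sqrt{k}$ times the per-example smoothness constant, which is extracted from \textbf{A\ref{ass:smooth}} applied to the label-perturbed loss together with averaging out the mean-zero label noise (the cross term vanishes). The ``$\bar\theta_r$ versus $\theta_0$'' pieces are the one-step displacement of the Euler scheme: $\mathbb E\|\bar\theta_r - \theta_0\|^2 \le 2r^2\,\mathbb E\|\nabla L_S(\theta_0)\|^2 + \tfrac{2\delta\eta r}{k}\ell_f^2$, where the second term uses $\mathbb E\|(\nabla\mathbf{f}(\theta_0,X_B))^\top W_r\|^2 = r\|\nabla\mathbf{f}(\theta_0,X_B)\|_F^2 \le rk\ell_f^2$ from \textbf{A\ref{ass:lips}}, and $\mathbb E\|\nabla L_S(\theta_0)\|^2$ is controlled by $M^2(\mu\|\cdot\|^2 + b/m)$-type quantities via \textbf{A\ref{ass:smooth}} and a dissipativity moment estimate in the spirit of Lemma~\ref{lemma:moment}; this is where $b/m$ enters, and keeping the reference point at $\theta_0$ rather than at $\Theta_s$ is precisely what keeps the final bound free of $d$.

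Collecting these estimates yields an integral inequality of the form $u(s) \le c\,\eta M^2 \int_0^s u(r)\,dr + \Phi(\eta)$ on $[0,\eta]$, where $\Phi(\eta)$ is the explicit $\mathcal O(\eta^4)$ term obtained by integrating the displacement bound. Gr\"onwall's inequality then gives $u(\eta) \le \Phi(\eta)\,e^{c\,\eta^2 M^2}$, and it remains to substitute $\mathbb E\|\bar\theta_r - \theta_0\|^2 \le 2r^2 M^2(\mu\|\cdot\|^2 + b/m) + \tfrac{2\delta\eta r}{k}\ell_f^2$, integrate over $[0,\eta]$, and tidy the numerical constants to reach the claimed bound $8\eta^4 \exp(4\eta^2 M^2)\big[\tfrac{2}{3} M^4(\mu\|\cdot\|^2 + b/m) + (M^2+2)\tfrac{\delta}{2k}\ell_f^2\big]$.

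I expect the main obstacle to be step (ii): converting assumptions stated for the loss $\ell$ into a usable Lipschitz bound, in Frobenius norm, on the parameter-dependent non-square $k \times d$ matrix field $\nabla\mathbf{f}(\cdot, X_B)$, and then propagating it cleanly through the It\^o isometry; everything else is careful but routine, the only delicate point being to track constants so that the Gr\"onwall exponent comes out as $4\eta^2 M^2$ and the bracket exactly as stated.
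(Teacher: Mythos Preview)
Your overall strategy---synchronous coupling, split via Cauchy--Schwarz and It\^o isometry, insert $\bar\theta_r$ in the drift difference, bound the Euler displacement, Gr\"onwall---is exactly what the paper does. The one substantive difference is your step (ii). You propose to control the stochastic integral by first inserting $\bar\theta_r$ and then invoking a Frobenius--Lipschitz bound on $\theta\mapsto\nabla\mathbf f(\theta,X_B)$ extracted from \textbf{A\ref{ass:smooth}} via the label-noise averaging trick. The paper does something cruder and simpler: it does \emph{not} split the diffusion integrand via $\bar\theta_r$ at all, but instead bounds
\[
\norm{\nabla\mathbf f(\Theta_r,X_B)-\nabla\mathbf f(\theta_0,X_B)}_F^2
=\sum_{i\in B}\norm{\nabla f(\Theta_r,x_i)-\nabla f(\theta_0,x_i)}^2
\le \sum_{i\in B}(2\ell_f)^2=4k\ell_f^2
\]
directly from \textbf{A\ref{ass:lips}} (each $\norm{\nabla f_i}\le\ell_f$). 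This makes the whole stochastic term a constant $\frac{8\delta\eta^2 t}{k}\ell_f^2$ with no $u(r)$ contribution, so only the drift term feeds the Gr\"onwall integral, giving the exponent $4\eta^2 M^2$.

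Your route can be made to work (your averaging argument does yield $\norm{\nabla f(\theta_1,x)-\nabla f(\theta_2,x)}\le (M/\sqrt\delta)\norm{\theta_1-\theta_2}$ provided \textbf{A\ref{ass:smooth}} applies uniformly over noisy labels), and it would feed an extra $u(r)$ term into Gr\"onwall with a harmless coefficient. But it is unnecessary: the obstacle you flagged dissolves once you use boundedness rather than Lipschitzness of $\nabla\mathbf f$. Everything else in your proposal matches the paper up to the time rescaling ($s\in[0,\eta]$ versus the paper's $t\in[0,1]$) and minor constants.
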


To extend the generalization error bound established for the continuous-time algorithm~\eqref{eq:ctsbound} to its discrete-time counterpart~\eqref{eq:discretebound}, we add the one-step discretization error to the continuous-time error bound using the weak triangle inequality (Lemma~\ref{Lemma D.1} in Appendix~\ref{appendix:semimetric}).

\section{COMPARISON WITH SGLD}
Label noise SGD is a parameter-dependent noisy algorithm often compared with parameter-independent noisy algorithms like SGLD~\citep{Haochen}. \citet{Farghly} present a discrete-time generalization error bound for SGLD in a dissipative and smooth setting, which decays to zero at a rate of $\mathcal{O}(n^{-1/2})$ with an appropriate learning rate scaling as $\mathcal{O}(n^{-1/2})$. In comparison, our result exhibits a faster rate of decay, as discussed in the introduction. 

\begin{lemma}[{\cite[Theorem 4.1]{Farghly}}]\label{lemma:sgld}
    If $\eta \in (0,1) $ then for any $t \in \mathbb{N}$, the {continuous-time algorithm} attains the {generalization bound}
    \begin{align*}
        |\mathbb{E} \mathrm{gen}(\Theta_{\eta t})| < C_5 \min \left\{\eta t, \frac{(C_4 + 1)n}{n-k}\right\} \frac{k}{n \eta^{1/2}}
    \end{align*}
    Furthermore, if $\eta \leq 1/2m$, then the {discrete-time algorithm attains the generalization bound}
    \begin{align*}
        |\mathbb{E} \mathrm{gen}(\theta_t)| <&\; C_6  \min \left\{\eta t, \frac{(C_4 + 1)n}{n-k}\right\} \left(\frac{k}{n \eta^{1/2}} + \eta^{1/2}\right).
    \end{align*}
    The parameters $C_4, C_5, C_6$  depend on $M, m, b, d, \beta$. Here, $\beta^{-1} > 0$ represents the noise level.
    %The parameters $C_4 \!\equiv\! C_4(M, m, b, d, \beta), C_5 \!\equiv\! C_5(M, m,\\ b, d, \beta, \sigma_4), C_6 \!\!\!\equiv\!\!\! C_6(M, m, b, d, \beta, \sigma_4)$ are given in \citet{Farghly} with the noise level $\beta^{-1}$.
\end{lemma}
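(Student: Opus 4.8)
The plan is to treat this exactly as the label-noise case was treated in Section~\ref{sec:proofschemes}, only now specialized to the parameter-independent diffusion $d\Theta_t = -\nabla L_S(\Theta_t, B_{\lceil t/\eta\rceil})\,dt + \sqrt{2\beta^{-1}}\,dW_t$ with $W_t$ being $d$-dimensional. The scaffolding is: (i) a $\rho_g$-Wasserstein contraction for the continuous process under the dissipativity and smoothness conditions, (ii) a divergence bound controlling how far two runs on datasets differing in one coordinate can drift apart while $i\in B$, (iii) an induction over batches weighted by $\mathbb{P}(i\in B)=k/n$ using the convexity of $W_{\rho_g}$ to get the $\varepsilon_{stab}$ bound for $\Theta_{\eta t}$, then Theorem~\ref{thm:stabandgen} to convert to a generalization bound, and finally (iv) a synchronous-coupling discretization-error estimate plus the weak triangle inequality (Lemma~\ref{Lemma D.1}) to pass to $\theta_t$.

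Concretely, I would first invoke the reflection-coupling contraction of \citet{Eberle} (valid here because the noise is constant, so only a one-sided Lipschitz/dissipativity condition on the drift is needed) to obtain $W_{\rho_g}(\mu P_t^B,\nu P_t^B)\le C e^{-ct}W_{\rho_g}(\mu,\nu)$ for a suitable concave bounded $g$ — this is where the constants $C_4$ and the $\beta,d,M,m,b$ dependence enter. Next, for a batch $B$ containing the differing index $i$, bound $\mathbb{E}\|\theta_t-\widehat\theta_t\|$ via the one-step perturbation of the gradient (size $O(\eta/k)$ from replacing $z_i$ by $\widehat z_i$) together with the moment bounds for SGLD (the analogue of Lemma~\ref{lemma:moment}, with the $d$-dimensional Gaussian contributing a $\beta^{-1} d$ term), to control $W_{\rho_g}(\mu P_\eta^B,\nu\widehat P_\eta^B)$ uniformly. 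Then split
$$
W_{\rho_g}(\mu R_\Theta,\nu\widehat R_\Theta)\le \tfrac{k}{n}\sup_{i\in B}W_{\rho_g}(\mu P_\eta^B,\nu\widehat P_\eta^B)+\bigl(1-\tfrac{k}{n}\bigr)\sup_{i\notin B}W_{\rho_g}(\mu P_\eta^B,\nu\widehat P_\eta^B),
$$
apply contraction to the second term, and unroll the recursion over $t$ steps. The geometric sum contributes the factor $\min\{\eta t,\ (C_4+1)n/(n-k)\}$ — the first argument from the trivial $t$-fold bound, the second from summing the geometric series $\sum_{s}(1-k/n)$-weighted contracted terms against the $k/n$-weighted perturbation — and the per-step perturbation $O(\eta/k)$ combined with the $1/\varepsilon$ and $1/\sqrt\eta$-type factors coming from Lemma~\ref{Lemmastab} and the moment estimates yields the leading $\tfrac{k}{n\eta^{1/2}}$ term. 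Feeding this into Theorem~\ref{thm:stabandgen} gives the continuous-time bound.

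For the discrete-time bound I would run a synchronous coupling between $\theta_\eta$ and $\Theta_{\eta}$ sharing the Brownian path, so the one-step discretization error in $W_2$ is $O(\eta^{3/2})$ (from the drift not being evaluated at the continuous path over an interval of length $\eta$, integrated against the $O(\sqrt\eta)$ Brownian increment and the $O(\eta)$ drift increment), then convert to $\rho_g$ and insert via the weak triangle inequality, summing the per-step errors along the contraction to obtain the extra additive $\eta^{1/2}$ term. The main obstacle — and the only place this is more than bookkeeping — is (a) checking that the reflection-coupling contraction of \citet{Eberle} transfers to the $\rho_g$-semimetric with the stated constants, which requires the reverse-Jensen argument and moment control exactly as in Lemma~\ref{lemma:2-wass}, and (b) tracking the $\sqrt\eta$ bookkeeping carefully: unlike label-noise SGD the perturbation term here carries no extra $\sqrt\eta$, so the $k/(n\sqrt\eta)$ term cannot be improved, which is precisely why the optimal scaling is $\eta=\Theta(n^{-1/2})$ rather than $n^{-2/3}$ — getting the powers of $\eta$ right in both the perturbation term and the discretization term is the crux.
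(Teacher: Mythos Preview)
This lemma is not proved in the paper: it is a direct citation of Theorem~4.1 in \cite{Farghly}, included solely for the purpose of comparison in Section~5. There is therefore no ``paper's own proof'' to compare your proposal against.

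That said, your outline is essentially the correct skeleton of how the result is actually established in \cite{Farghly}: Eberle's reflection-coupling contraction for constant-noise diffusions under one-sided dissipativity, the convexity-of-$W_{\rho_g}$ split by $\mathbb{P}(i\in B)=k/n$, induction to produce the $\min\{\eta t,\ (C_4+1)n/(n-k)\}$ factor, Lemma~\ref{Lemmastab} and Theorem~\ref{thm:stabandgen} to convert to a generalization bound, and a synchronous-coupling one-step discretization error of order $\eta^{3/2}$ in $W_2$ summed along the contraction. One point of confusion, however: in (a) you say that transferring Eberle's contraction to the $\rho_g$-semimetric ``requires the reverse-Jensen argument\ldots exactly as in Lemma~\ref{lemma:2-wass}''. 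It does not. The reflection-coupling argument of \cite{Eberle} (as adapted in \cite{Farghly}) yields contraction directly in a distance of the form $\rho_g$, with the concave $g$ arising from the coupling construction itself; the reverse-Jensen step (Lemma~\ref{lemma:2-wass}) is specific to the present paper's route, where one starts from Wang's $W_2$-contraction and must then convert back to $W_{\rho_g}$. Mixing the two mechanisms would be harmless for the final bound but is conceptually inaccurate about where the constants come from.
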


\subsection{Comparison of Settings}\label{sec:comparisonSetting}
The proof of the generalization error bound for SGLD in \citet{Farghly} also relies on uniform stability and is built upon largely the same assumptions we use, except for one significant difference that arises in the analytical framework regarding the concept of dissipativity. \citet{Farghly} consider the following assumption in place of the uniform dissipativity assumption~(\textbf{A\ref{ass:unifdiss}}) we use:

\begin{customass}{1$^\prime$}[\textbf{A1$^\prime$}]\label{ass:diss}
    The loss function $\ell(\cdot, z)$ is \textit{$(m,b)$-dissipative}: there exists $m>0$ and $b \geq 0$ such that, for all $\theta \in \mathbb{R}^d$ and $z \in \mathcal{Z}$,
    $$\langle \theta, \nabla \ell(\theta, z)\rangle \geq m\norm{\theta}^2 - b \;\;\; \forall \theta \in \mathbb{R}^d.$$
\end{customass} 

Uniform dissipativity is the key factor that allows our results to circumvent the exponential dependence on the parameter dimension $d$ as established in SGLD's bounds in \citet{Farghly}, leading to  polynomial dependence. However, it is important to stress that the contraction result is unrelated to the dependence of our final generalization error bound the learning rate $\eta$ and sample size $n$. Consequently, the faster decay rate as a function of $n$ highlighted in our generalization error bounds is attributable to the advantages provided by label noise rather than the imposition of uniform dissipativity. This observation is further supported by the following lemma, where we establish a relationship between the assumptions of uniform dissipativity (\textbf{A\ref{ass:unifdiss}}) and the dissipativity (\textbf{A\ref{ass:diss}}).

\begin{lemma}\label{lemma:twodiss}
    Under \textbf{A\ref{ass:smooth}} and \textbf{A\ref{ass:lips}}, the uniform dissipativity assumption \textbf{A\ref{ass:unifdiss}} implies the dissipativity assumption \textbf{A\ref{ass:diss}} with $m=\alpha/4$ and $b = \left(\frac{4}{\alpha^2 - 4M^2} +1 \right) \frac{\eta_{\max}}{2k} \delta \ell_f^2$. The converse holds if $m^3 < M^2 b$ and $\norm{\theta} < B$ for all $\theta$, where $B$ is within the interval
    \begin{align*}
        \frac{1}{2M}\!\left(\!m\!-\!M\!\sqrt{\frac{b}{m}}\!\pm\!\sqrt{\!\left(\!M\!\sqrt{\frac{b}{m}}\!-\!m\!\right)^2\!\!\!\!-\!4M\!\left(\!b\!+\!\frac{\eta\delta\ell_f^2}{k} \!\right)}\;\right).     
    \end{align*}
\end{lemma}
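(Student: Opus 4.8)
The plan is to prove the two directions separately. For the forward direction, suppose \textbf{A\ref{ass:unifdiss}} holds, i.e.\ the SDE~\eqref{eq:SDE} is $\alpha$-uniformly dissipative with $p=2$. Writing $b(\theta) = -\nabla L_S(\theta,B)$ and $G(\theta) = \frac{\sqrt{\delta\eta}}{k}(\nabla\mathbf{f}(\theta,X_B))^\top$, and taking $\theta' \to \infty$ appropriately (or rather using the standard trick of setting $\theta' = 0$ after noting that uniform dissipativity at all pairs forces control near a fixed reference point), I would combine the inequality $2\langle b(\theta)-b(0),\theta\rangle + \norm{G(\theta)-G(0)}_F^2 \le -\alpha\norm{\theta}^2$ with (i) the $M$-smoothness (\textbf{A\ref{ass:smooth}}) to bound $\norm{\nabla L_S(0,B)}\norm{\theta}$ and $\norm{G(0)}$-type cross terms, and (ii) the $\ell_f$-Lipschitz bound (\textbf{A\ref{ass:lips}}) together with the structure $\norm{G(\theta)}_F^2 \le \frac{\delta\eta}{k}\ell_f^2$ to control the diffusion contribution. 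Rearranging and absorbing constants, one gets $\langle\theta,\nabla L_S(\theta,B)\rangle \ge \frac{\alpha}{4}\norm{\theta}^2 - b$ for the stated $b$; since this holds for every batch $B$, averaging yields the same for $\nabla L_S(\theta) = \frac1n\sum_i \nabla\ell(\theta,z_i)$, and evaluating with a single-point batch gives the per-sample version $\langle\theta,\nabla\ell(\theta,z)\rangle \ge m\norm{\theta}^2 - b$. The bookkeeping to land exactly on $b = \left(\frac{4}{\alpha^2-4M^2}+1\right)\frac{\eta_{\max}}{2k}\delta\ell_f^2$ is where the $\frac{4}{\alpha^2-4M^2}$ factor must emerge — presumably from completing the square on a term like $\frac{1}{\alpha-2M}$ against the $\sqrt{\delta\eta/k}\,\ell_f$ diffusion bound and using $M < \alpha/2$ to keep the denominator positive.

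For the converse, assume \textbf{A\ref{ass:diss}} holds with parameters $m,b$ satisfying $m^3 < M^2 b$, and assume the parameter domain is bounded, $\norm{\theta} < B$ with $B$ in the stated interval. I would start from the identity
\begin{align*}
2\langle b(\theta)-b(\theta'),\theta-\theta'\rangle &= -2\langle \nabla L_S(\theta)-\nabla L_S(\theta'),\theta-\theta'\rangle,
\end{align*}
decompose $\nabla L_S(\theta) - \nabla L_S(\theta') = [\nabla L_S(\theta) - mL\cdot\theta\text{-type term}] + \dots$ — more concretely, use the dissipativity bound applied at $\theta$ and at $\theta'$ to get $\langle\theta-\theta',\nabla L_S(\theta)-\nabla L_S(\theta')\rangle \ge m\norm{\theta-\theta'}^2 - (\text{cross terms involving }b,\,M,\,B)$ via the smoothness-controlled cross terms $\langle\theta',\nabla L_S(\theta)-\nabla L_S(\theta')\rangle \le M\norm{\theta'}\norm{\theta-\theta'} \le MB\norm{\theta-\theta'}$, and similarly for the symmetric term. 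Then bound the Frobenius and operator norms of $G(\theta)-G(\theta')$ by $\frac{\sqrt{\delta\eta}}{\sqrt k}\ell_f\norm{\theta-\theta'}$ (these are equal up to constants here since $k$ can be absorbed), and for $p=2$ the $(p-2)$-term vanishes, so the dissipativity inequality~\eqref{eq:unifdiss} reduces to requiring $-2m\norm{\theta-\theta'}^2 + (\text{error}(B,b,M,\delta,\eta,k))\cdot\norm{\theta-\theta'}^2 \le -\alpha\norm{\theta-\theta'}^2$ for some $\alpha>0$. This holds precisely when the quadratic-in-$B$ error term is small enough, which is exactly the condition that $B$ lie between the two roots of the quadratic $4MB^2 - 2(m - M\sqrt{b/m})B + (\dots)$ appearing in the lemma statement; the discriminant being nonnegative is where $m^3 < M^2 b$ enters.

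The main obstacle I expect is the converse direction, specifically identifying the right way to split off the cross terms so that the resulting quadratic inequality in $B$ has exactly the stated roots, and verifying that $m^3 < M^2 b$ is both necessary and sufficient for the interval to be nonempty (equivalently, for the discriminant to be nonnegative). The forward direction is essentially careful algebra once one chooses to evaluate uniform dissipativity against a fixed reference point and uses $M < \alpha/2$ to guarantee $\alpha^2 - 4M^2 > 0$; the delicate part there is only matching the exact constant $b$, and I would not grind through that in the sketch. A secondary subtlety is keeping track of the dependence on $\eta$ versus $\eta_{\max}$: in the forward direction one gets a bound with $\eta$, then replaces $\eta$ by $\eta_{\max}$ to get a uniform (learning-rate-independent) $b$, which is why the stated $b$ carries $\eta_{\max}$.
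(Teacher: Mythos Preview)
Your forward direction is essentially the paper's argument: specialize uniform dissipativity to $\theta'=0$, use Cauchy--Schwarz and then Young's inequality $2\norm{\nabla L_S(0)}\norm{\theta}\le \frac{\alpha}{2}\norm{\theta}^2+\frac{2}{\alpha}\norm{\nabla L_S(0)}^2$, and bound the diffusion via \textbf{A\ref{ass:lips}}. One correction on the bookkeeping: the factor $\frac{4}{\alpha^2-4M^2}$ does not come from completing a square against $\frac{1}{\alpha-2M}$. It comes from invoking Lemma~A.3 of \citet{Farghly} (which gives $\norm{\nabla L_S(0)}^2\le M^2 b/m$ under $(m,b)$-dissipativity and $M$-smoothness) and solving the resulting self-consistency relation for $b$ with $m=\alpha/4$.

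The converse is where your plan breaks down. Two issues. First, the bound $\norm{G(\theta)-G(\theta')}_F\lesssim \ell_f\norm{\theta-\theta'}$ is not available: \textbf{A\ref{ass:lips}} says $f$ is $\ell_f$-Lipschitz, i.e.\ $\norm{\nabla f}\le\ell_f$, not that $\nabla f$ is Lipschitz; the paper only uses the constant bound $\norm{G(\theta)-G(\theta')}_F^2\le \frac{4\eta\delta}{k}\ell_f^2$. Second, and more fundamentally, applying pointwise dissipativity at $\theta$ and at $\theta'$ and controlling cross terms by smoothness does \emph{not} yield $\langle\theta-\theta',\nabla L_S(\theta)-\nabla L_S(\theta')\rangle\ge m\norm{\theta-\theta'}^2-(\text{error})\norm{\theta-\theta'}^2$. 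What it yields is a lower bound of the form $\frac{m}{2}\norm{\theta-\theta'}^2-2b-2MB\sqrt{b/m}-2MB^2$, with additive constants that do not scale with $\norm{\theta-\theta'}^2$. Combined with a constant noise bound, the uniform-dissipativity inequality then fails whenever $\theta$ is close to $\theta'$, so no $\alpha>0$ can be produced this way. The paper avoids this scaling obstruction by arguing by contradiction: assume uniform dissipativity fails for every $\alpha>0$, take $\alpha=1/n$ to produce pairs $(\theta_n,\theta_n')$, use the assumed boundedness $\norm{\theta}<B$ and Bolzano--Weierstrass to pass to a limit $u$ of $(\theta_{n_s}')$, and then combine $(m,b)$-dissipativity at $u$ and at $\theta_{n_s}$, smoothness, and the bound $\norm{\nabla L_S(0)}\le M\sqrt{b/m}$ to derive the quadratic inequality in $B$ whose violation on the stated interval gives the contradiction. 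The compactness step is the missing idea in your sketch.
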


This lemma illustrates that by bounding the parameter space and imposing constraints on dissipativity and smoothness constants, we can treat the analytical framework of \citet{Farghly} and our own as equivalent. This enables a direct comparison between the two algorithms, label noise SGD and SGLD. The proof of Lemma~\ref{lemma:twodiss} is in Appendix~\ref{appendix:unifdiss-diss}.

Regarding the absence of the Lipschitzness assumption on the model (\textbf{A\ref{ass:lips}}) in \citet{Farghly}, it is worth noting the strong connection between \textbf{A\ref{ass:smooth}} and \textbf{A\ref{ass:lips}} in label noise SGD with squared loss $L_S$. Near the global minimizer $\theta^*$, it is observed that $\norm{\frac{1}{k} \nabla \mathbf{f}(\theta^*)^\top \nabla \mathbf{f}(\theta^*)}_2 \approx \norm{\nabla^2 L_S(\theta^*)}_2$, as discussed in \citet{Damian} and \citet{li2022happens}. Thus, under \textbf{A\ref{ass:lips}}, we have the following inequalities for our loss $L_S$:
\begin{align*}
    &\;k\norm{\nabla^2 L_S(\theta^*)}_2 \approx \norm{ \nabla \mathbf{f}^\top \nabla \mathbf{f}}_2 =\sigma_{\max} ( \nabla \mathbf{f}^\top \nabla \mathbf{f})\leq \sum_i^k \lambda_i ( \nabla \mathbf{f}^\top \nabla \mathbf{f})
    =\mathrm{Tr}( \nabla \mathbf{f}^\top \nabla \mathbf{f}) =  \sum_{i=1}^k \norm{\nabla f_i}^2 < k \ell_f^2.
\end{align*}
This confirms the $\ell_f^2$-smoothness of $L_S$ (\textbf{A\ref{ass:smooth}} with $M = \ell_f^2$) near the global minimum.

\subsection{Label Noise and Faster Decay Rate}
We pinpoint the reasons for the faster rate of decay (as a function of the sample size $n$) in the generalization error bound of label noise SGD compared to SGLD by closely examining the differences in the proof components, as outlined in Table~\ref{table:comparison}.

\begin{table}[h]
\caption{Bounds with respect to $\eta$, $d$, and $n$.} \label{table:comparison}
\begin{center}
\begin{tabular}{|c||c|c|} 
    \hline
    Bound Term & SGLD & \begin{tabular}{@{}c@{}}Label Noise \\ SGD \end{tabular} \\
    \hline\hline
    Divergence& $\mathcal{O}(d+1)$ & $\mathcal{O}(\eta d +\eta)$ (\ref{lemma:divergence}) \\ [0.5ex]
    \hline
    Moment& $\mathcal{O}(d^2+1)$ & $\!\!\mathcal{O}(\eta^2 d^2+1)\!\!$ (\ref{lemma:moment}) \\[0.5ex]
    \hline
    Moment estimate& $\!\!\mathcal{O}(d^2\!+\!d\!+\!1)\!\!$ & $\mathcal{O}(1)$ (\ref{lemma:momentestimate}) \\[0.5ex]
    \hline
    Discretization error & \parbox{1cm}{\begin{align*}\mathcal{O}( d \eta^3 e^{\eta^2})\end{align*}} & $\mathcal{O}(\eta^4 e^{\eta^2})$ (\ref{lemma:discretization})\\ [1ex]
    \hline
    $|\mathbb{E} \mathrm{gen}(\theta_{t})|$ & \parbox{1cm}{\begin{align*}\!\!\mathcal{O}\bigg(e^{(d+\sqrt{d})}\left(\frac{d^{7/2}}{n \eta^{1/2}} \!+\! d^{3/2}\eta^{1/2} \right)\bigg)\end{align*}} & \parbox{1cm}{\begin{align*}\;\;\mathcal{O}\bigg(\frac{d^{3/2}}{n}\left[d\eta +(d\eta)^{1/2} + 1 + (d\eta)^{-1/2}\right]\!+ d\eta \bigg)\;\end{align*}} \eqref{thm:genbound}\\ [1ex]
    \hline
\end{tabular}
\end{center}
\end{table}

The noise terms in SGLD and label noise SGD exhibit different dependencies on the learning rate $\eta$, dimension $d$, and batch size $k$. In the update rule of SGLD, the noise term exhibits a square root dependence on the learning rate $\eta$:
$$ \;\theta_{t+1} = \theta_t + \eta \nabla L_S (\theta_t, B_{t+1}) + \sqrt{2 \beta^{-1} \eta} \xi_{t+1}, \quad \!\theta_0 \sim \mu_0.$$

Consequently, the resulting noise term in the associated stochastic process becomes independent of $\eta$ by the derivation detailed in Appendix~\ref{appendix:sde}. The stochastic process associated to SGLD is expressed as:
\begin{align}
    \!\!\!\!\!d\Theta_t \!=\! -\nabla L_S (\Theta_t, B_{\lceil t/\eta \rceil})dt \!+\! \sqrt{2 \beta^{-1}} d\widetilde{W}_t, \Theta_0 \sim \mu_0,\!\!\!\!\!\label{eq:SGLD_SDE}
\end{align}
where $\widetilde{W}_t$ is a $d$-dimensional standard Wiener process. The noise term in this stochastic process is independent of both $\eta$ and $k$.

In contrast, the update rule of label noise SGD~\eqref{eq:LNGD} has a linear dependence of the noise term on $\eta$. This linear relationship arises because label noise impacts the loss function, and its gradient is directly scaled by the learning rate $\eta$ in the update rule. Thus, as shown in Appendix~\ref{appendix:sde}, the noise term in the stochastic process of label noise SGD~\eqref{eq:SDE} is linearly dependent on $\eta/k$. 

The faster decay rate of the label noise SGD bound compared to the SGLD bound is primarily attributed to its discretization error bound. Table~\ref{table:comparison} shows that the SGLD discretization error bound scales as $\mathcal{O}(\eta^3),$ whereas that of label noise SGD scales as $\mathcal{O}(\eta^4),$ a consequence of the synchronous-type coupling method detailed in Appendix~\ref{appendix:discretization}.
The noise term's dependency on model parameters in label noise SGD unavoidably introduces $\eta$-dependent noise in our coupling method, unlike the synchronous-type coupling method used for parameter-independent noise. This, coupled with the divergence bound's dependence on $\eta$, strengthens the dependence of the discretization error bound on $\eta$ and leads to a faster decay rate of our discrete-time generalization bound through an appropriate choice of $\eta$.

\subsection{Dimensionality Dependencies}
A distinguishing trait of the generalization error bound presented in \citet{Farghly} is its exponential dependence on the parameter dimension $d$. This dependence is a consequence of the contraction result employed by the authors under the dissipativity assumption they considered. In contrast, our approach, inspired by the 2-Wasserstein contraction result from \citet{Wang} under uniform dissipativity, allows us to circumvent this dependency, leading our generalization error bound displaying polynomial scaling with the dimension $d$.

Furthermore, the difference in the dimension of the Wiener process, which is $k$-dimensional in label noise SGD~\eqref{eq:SDE} and $d$-dimensional in SGLD~\eqref{eq:SGLD_SDE}, leads to reduced dependence on the parameter dimension within our proof components and, consequently, our generalization bounds. This indicates the advantages offered by label noise. Upon examining the parameters in the proof of Theorem~\ref{thm:genbound}, it is noteworthy that the divergence bound and moment bound of label noise SGD depends on $\eta$, which is different from that of SGLD. This leads to similar scaling of $\eta$ and $d$ in each term of the generalization error bound for label noise SGD, indicating that controlling $\eta$ can alleviate the increase in bounds due to high dimensionality.

\section{CONCLUSION}

The proof technique we employ to establish the contraction property for label noise SGD with polynomial dependence on the dimension $d$ hinges on the assumption of uniform dissipativity. This assumption enables us to avoid the need for reflection coupling, which was utilized in prior research involving parameter-independent noise SGLD \citep{Farghly}. This, in turn, allows us to direct our attention toward understanding the impact of label noise on the selection of learning rate scaling, thereby achieving improved generalization error bounds as a function of the sample size $n$.

We defer the task of establishing results for label noise SGD under a less restrictive form of dissipativity to future research. This pursuit may involve employing Kendall-Cranston couplings~\citep{kendall1986nonnegative, cranston1991gradient} for parameter-dependent noise terms, i.e.\ non-constant diffusion coefficients.

\onecolumn
\aistatstitle{
Supplementary Materials}

\section{Technical Backgrounds}
\subsection{Continuous-time stochastic dynamics modeling}\label{appendix:sde}
A standard formulation of a SDE for the process $(\Theta_t)_{t=0}^\infty$ is given by:
\begin{align*}
	d\Theta_t = b(\Theta_t, t)dt + G(\Theta_t, t) dW_t,
\end{align*}
where $W$ denotes the Wiener process (standard Brownian motion). Here, the term $b(\Theta_t, t)$ is the drift term, which determines the trend or direction of the process, and $G(\Theta_t, t)$ is the noise term, which determines the randomness of the process.

Consider an update rule
\begin{align}\label{eq:update}
	\theta_{t+1} = \theta_t - \eta \nabla L(\theta_t) + \sqrt{\eta} V_t,
\end{align}
where $L: \mathbb{R}^d \rightarrow \mathbb{R}$ is an arbitrary function and $V_t$ is a $d$-dimensional random vector. Let $\Sigma := \frac{1}{\eta} \mathrm{Cov}\left[V_t | \theta_t = \theta \right]$.
Then, the update (\ref{eq:update}) is the Euler-Maruyama discretization of the time-homogeneous SDE
\begin{align*}
	d\Theta_t = - \nabla L(\Theta_t)dt + (\eta \Sigma)^{1/2} dW_t,
\end{align*}
where $W_t$ is a Wiener process.
%\citep{Li_SDE}

\subsection{It\^o calculus}

In many papers that analyze diffusion processes of Gaussian noise algorithms, such as SGLD, the noise terms are often in a simple constant scalar form, making calculations relatively straightforward. However, analyzing the diffusion process~\eqref{eq:SDE} of label noise SGD with  squared loss requires more involved calculations due to a more general noise term: a $d \times k$ matrix that depends on the current value of the process.

Hence, we record below some key lemmas of It\^o calculus, which is an extension of calculus to stochastic processes. These lemmas are essential in proving Theorem~\ref{thm:genbound}, particularly when extending the proof of \citet{Farghly} to the label noise SGD algorithm.

\begin{lemma} [It\^o's lemma~\citep{Ito}]\label{Ito}
	Let $\Theta_t$ be a $\mathbb{R}^d$-valued It\^o process satisfying the SDE
	\begin{align*}
		\mathrm{d}\Theta_t = b_t \mathrm{d}t + G_t \mathrm{d}W_t,
	\end{align*}
	where $\mu_t \equiv \mu (\Theta_t, t)$ and $G_t \equiv G(\Theta_t, t)$ are adapted processes to the same filtration as the $n$-dimensional Wiener's process $W_t$. Here, $b_t$ is $\mathbb{R}^d$-valued and $G_t$ is $\mathbb{R}^{d\times n}$-valued.
	
	Suppose that $\phi \in \mathcal{C}^2$. Then, with probability 1, for all $t\geq 0$,
	\begin{align*}
		\mathrm{d}\phi(\Theta_t) = \left\{\frac{\partial \phi}{\partial t} + (\nabla \phi)^\top b_t + \frac{1}{2} \textrm{Tr} [G_t^\top (\nabla^2 \phi) G_t] \right\} \mathrm{d}t + (\nabla \phi)^\top G_t \mathrm{d}W_t.
	\end{align*}
\end{lemma}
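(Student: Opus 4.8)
The plan is to establish the multidimensional It\^o formula by the classical route: localize to reduce to bounded data, Taylor-expand along a vanishing partition, and identify each limiting term. First I would introduce the stopping times $\tau_R := \inf\{t \geq 0 : \norm{\Theta_t} \geq R \text{ or } \int_0^t (\norm{b_s} + \norm{G_s}_F^2)\,ds \geq R\}$, prove the identity for the stopped process $\Theta_{t \wedge \tau_R}$, and let $R \to \infty$ using $\tau_R \uparrow \infty$ almost surely together with dominated convergence; on $\{\norm{\Theta_s}\leq R,\ s\leq t\}$ the function $\phi$ and its first two derivatives are bounded and $\nabla^2\phi$ is uniformly continuous. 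In this bounded regime, fix a deterministic partition $0 = t_0 < \cdots < t_N = t$ with mesh tending to $0$, write the telescoping identity $\phi(\Theta_t, t) - \phi(\Theta_0, 0) = \sum_k [\phi(\Theta_{t_{k+1}}, t_{k+1}) - \phi(\Theta_{t_k}, t_k)]$, and apply Taylor's theorem to first order in $t$ and second order in the space variable on each summand, writing $\Delta\Theta_k := \Theta_{t_{k+1}} - \Theta_{t_k}$, $\Delta t_k := t_{k+1} - t_k$, and $\Delta W_k := W_{t_{k+1}} - W_{t_k}$.

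Next I would feed in the SDE increments, approximating $\int_{t_k}^{t_{k+1}} b_s\,ds \approx b_{t_k}\Delta t_k$ and $\int_{t_k}^{t_{k+1}} G_s\,dW_s \approx G_{t_k}\Delta W_k$ with errors that vanish in the mesh, and sort the resulting sums by order. The first-order sums $\sum_k \partial_t\phi(\Theta_{t_k}, t_k)\Delta t_k$ and $\sum_k \nabla\phi(\Theta_{t_k}, t_k)^\top b_{t_k}\Delta t_k$ converge to the Riemann integrals $\int_0^t \partial_t\phi\,ds$ and $\int_0^t \nabla\phi^\top b_s\,ds$ by continuity of the integrands, while $\sum_k \nabla\phi(\Theta_{t_k}, t_k)^\top G_{t_k}\Delta W_k$ converges in $L^2$ to $\int_0^t \nabla\phi^\top G_s\,dW_s$ by the It\^o isometry applied to the left-point step approximation of the adapted integrand $s\mapsto \nabla\phi(\Theta_s,s)^\top G_s$. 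The decisive term is the second-order one, $\tfrac12 \sum_k \Delta\Theta_k^\top \nabla^2\phi(\Theta_{t_k}, t_k)\Delta\Theta_k$, which splits into a drift-drift piece of order $\sum_k(\Delta t_k)^2 \to 0$, a drift-diffusion cross piece of order $\sum_k(\Delta t_k)^{3/2}\to 0$ in $L^1$, and the genuine contribution $\tfrac12 \sum_k \mathrm{Tr}[G_{t_k}^\top \nabla^2\phi(\Theta_{t_k}, t_k) G_{t_k}\, \Delta W_k \Delta W_k^\top]$. I would replace $\Delta W_k\Delta W_k^\top$ by $\Delta t_k I_n$, incurring an error $\sum_k \mathrm{Tr}[H_k(\Delta W_k\Delta W_k^\top - \Delta t_k I_n)]$ with $H_k := G_{t_k}^\top \nabla^2\phi(\Theta_{t_k}, t_k) G_{t_k}$ bounded and $\mathcal{F}_{t_k}$-measurable; this tends to $0$ in $L^2$ because the independent-increment structure forces the off-diagonal contributions to its second moment to vanish and the diagonal ones are of order $\sum_k (\Delta t_k)^2 \to 0$. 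Finally, the Taylor remainders $R_k = \tfrac12\Delta\Theta_k^\top(\nabla^2\phi(\zeta_k,t_k) - \nabla^2\phi(\Theta_{t_k},t_k))\Delta\Theta_k$, with $\zeta_k$ on the segment joining $\Theta_{t_k}$ and $\Theta_{t_{k+1}}$, satisfy $\mathbb{E}\sum_k\norm{R_k} \leq \tfrac12\sum_k\mathbb{E}[\omega(\norm{\Delta\Theta_k})\norm{\Delta\Theta_k}^2]$, where $\omega$ is the modulus of continuity of $\nabla^2\phi$ on the compact localization set; this vanishes because $\omega(r)\to 0$ as $r\to 0$, the sample paths are uniformly continuous on $[0,t]$, and $\sum_k\mathbb{E}\norm{\Delta\Theta_k}^2$ stays bounded. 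Collecting the surviving terms gives the integral form of the asserted identity for each fixed $t$, and choosing continuous modifications of both sides upgrades this to the ``$d$''-notation statement holding for all $t\geq 0$ simultaneously with probability one.

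I expect the main obstacle to be the rigorous treatment of the quadratic-variation term, namely showing $\sum_k \mathrm{Tr}[H_k(\Delta W_k\Delta W_k^\top - \Delta t_k I_n)] \to 0$ in $L^2$ uniformly over refining partitions: this is precisely where the nontrivial It\^o correction $\tfrac12 \mathrm{Tr}[G^\top \nabla^2\phi\, G]$ is born, and it requires exploiting both the $\mathcal{F}_{t_k}$-measurability of $H_k$ and the exact conditional covariance $\mathbb{E}[\Delta W_k^i \Delta W_k^j \mid \mathcal{F}_{t_k}] = \delta_{ij}\Delta t_k$, together with Gaussian fourth-moment bounds on $\Delta W_k$. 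The localization step and the various vanishing error terms are routine once uniform bounds on $\phi,\nabla\phi,\nabla^2\phi$ and on the moments of $\Delta\Theta_k$ are in hand, the only mild care being the passage from the stopped identity back to the original process via $\tau_R\uparrow\infty$.
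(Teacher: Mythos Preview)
Your proof sketch is correct and follows the classical textbook route to the multidimensional It\^o formula (localization, Taylor expansion along a vanishing partition, identification of the quadratic-variation term). However, the paper does not actually prove this statement: it is recorded in the appendix as a standard cited result from \citet{Ito} and \citet{Oksendal}, with no proof supplied. So there is nothing to compare against---your sketch simply fills in what the paper takes as background, and does so by the standard argument one finds in, e.g., \citet{Oksendal}.
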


\begin{lemma}[It\^o isometry~\citep{Oksendal}]
	If $g(t, w)$ is bounded and elementary then
	\begin{align*}
		\mathbb{E}\left[\left(\int_s^t g(t, w) dW_t\right)^2\right] = \mathbb{E} \left[\int_s^t g(t, w)^2 dt\right].
	\end{align*}
\end{lemma}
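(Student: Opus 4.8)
The plan is to reduce the identity to an elementary computation from the definition of the It\^o integral for simple integrands, using only the defining properties of Brownian motion: independent increments, mean-zero increments, and $\mathbb{E}[(W_v - W_u)^2] = v - u$. Since $g$ is elementary, fix a partition $s = t_0 < t_1 < \cdots < t_N = t$ and bounded, $\mathcal{F}_{t_j}$-measurable random variables $e_j$ with $g(u,\omega) = \sum_{j=0}^{N-1} e_j(\omega)\,\mathbf{1}_{[t_j, t_{j+1})}(u)$. By definition, $\int_s^t g\, dW = \sum_{j=0}^{N-1} e_j\,\Delta W_j$ where $\Delta W_j := W_{t_{j+1}} - W_{t_j}$; this is a finite sum of square-integrable random variables, since each $e_j$ is bounded and each increment is Gaussian.

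Next I would expand the square and pass the (finite) sum through the expectation, which is legitimate because every product $e_i e_j \Delta W_i \Delta W_j$ is integrable:
\begin{align*}
\mathbb{E}\!\left[\Big(\textstyle\sum_{j} e_j \Delta W_j\Big)^2\right] = \sum_{i} \mathbb{E}\!\left[e_i^2 \Delta W_i^2\right] + 2\sum_{i<j} \mathbb{E}\!\left[e_i e_j \Delta W_i \Delta W_j\right].
\end{align*}
For a cross term with $i < j$, condition on $\mathcal{F}_{t_j}$: the factor $e_i e_j \Delta W_i$ is $\mathcal{F}_{t_j}$-measurable by adaptedness, while $\Delta W_j$ is independent of $\mathcal{F}_{t_j}$ with $\mathbb{E}[\Delta W_j] = 0$, so the term vanishes; the same applies to $i > j$ by symmetry. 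For a diagonal term, $e_i^2$ is $\mathcal{F}_{t_i}$-measurable while $\Delta W_i^2$ is independent of $\mathcal{F}_{t_i}$ with $\mathbb{E}[\Delta W_i^2] = t_{i+1} - t_i$, hence $\mathbb{E}[e_i^2 \Delta W_i^2] = \mathbb{E}[e_i^2]\,(t_{i+1} - t_i)$.

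Summing the diagonal contributions then gives $\mathbb{E}[(\int_s^t g\,dW)^2] = \sum_i \mathbb{E}[e_i^2](t_{i+1} - t_i) = \mathbb{E}\big[\sum_i e_i^2 (t_{i+1} - t_i)\big] = \mathbb{E}\big[\int_s^t g(u,\omega)^2\,du\big]$, the last step being merely the definition of the pathwise Lebesgue integral of the elementary function $g^2$. This is the claimed identity.

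There is no genuine obstacle for the statement as restricted to bounded elementary $g$; the only care needed is the measurability/independence bookkeeping — that adaptedness makes each $e_j$ independent of the forward increment $\Delta W_j$ and of $\Delta W_j^2$ — together with justifying the interchange of expectation with the finite sum, which is immediate once boundedness of $g$ secures integrability of all terms. (Extending the isometry from elementary integrands to general $g \in \mathcal{L}^2$ via an $L^2$-approximation argument is not needed here, and is in any case standard.)
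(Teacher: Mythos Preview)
Your proof is correct and is precisely the standard argument for the It\^o isometry on elementary integrands. The paper itself does not supply a proof of this lemma; it simply records the statement as a background result cited from \O ksendal, so there is no ``paper's own proof'' to compare against. Your argument is exactly the one found in that reference.
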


\subsection{Relevant inequalities}
Throughout our proofs, we employ valuable inequalities, which are elaborated upon as follows.
\begin{lemma}[Gr\"onwall's lemma~\citep{Gronwall}]
	Assume $\phi: [0, T] \rightarrow \mathbb{R}$ is a bounded non-negative measurable function, $C:[0,T] \rightarrow \mathbb{R}$ is a non-negative integrable function and $B \geq 0$ is a constant with the property that 
	\begin{align*}
		\phi(t) \leq B + \int_0^t C(\tau) \phi(\tau)d\tau \;\;\; \forall t \in [0,T].
	\end{align*}
	Then,
	\begin{align*}
		\phi(t) \leq B \exp \left(\int_0^t C(\tau) d\tau \right) \;\;\; \forall t \in [0,T].
	\end{align*}
\end{lemma}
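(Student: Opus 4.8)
The final statement in the excerpt is Gr\"onwall's lemma, so let me sketch a proof of it.

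\medskip

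The plan is to use the standard bootstrapping (or "integrating factor") argument. First I would define the auxiliary function $\psi(t) := B + \int_0^t C(\tau)\phi(\tau)\,d\tau$, so that by hypothesis $\phi(t) \le \psi(t)$ for all $t \in [0,T]$, and $\psi$ is absolutely continuous with $\psi'(t) = C(t)\phi(t) \le C(t)\psi(t)$ for almost every $t$, using that $C \ge 0$. The key manipulation is then to consider the product $\psi(t)\exp\!\left(-\int_0^t C(\tau)\,d\tau\right)$ and differentiate it: its derivative equals $\left(\psi'(t) - C(t)\psi(t)\right)\exp\!\left(-\int_0^t C(\tau)\,d\tau\right) \le 0$ almost everywhere, so this product is non-increasing. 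Comparing its value at $t$ with its value at $0$, which is $\psi(0) = B$, gives $\psi(t)\exp\!\left(-\int_0^t C(\tau)\,d\tau\right) \le B$, i.e. $\psi(t) \le B\exp\!\left(\int_0^t C(\tau)\,d\tau\right)$. Since $\phi(t) \le \psi(t)$, the conclusion follows.

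\medskip

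Alternatively, if one wants to avoid invoking absolute continuity of $\psi$ and differentiation under the integral sign, I would iterate the hypothesis directly: substituting the bound $\phi(\tau) \le B + \int_0^\tau C(s)\phi(s)\,ds$ back into itself $N$ times produces
\begin{align*}
\phi(t) \le B\sum_{j=0}^{N-1} \frac{1}{j!}\left(\int_0^t C(\tau)\,d\tau\right)^j + R_N(t),
\end{align*}
where the remainder $R_N(t)$ involves the $N$-fold iterated integral of $C$ against the (bounded) function $\phi$; using the boundedness of $\phi$ on $[0,T]$ and the standard estimate $\int_0^t C(\tau_1)\int_0^{\tau_1} C(\tau_2)\cdots\,d\tau \le \frac{1}{N!}\left(\int_0^t C(\tau)\,d\tau\right)^N$ for non-negative $C$, one sees $R_N(t) \to 0$ as $N \to \infty$. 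Letting $N \to \infty$ recognizes the series as $B\exp\!\left(\int_0^t C(\tau)\,d\tau\right)$, which gives the claim.

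\medskip

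The main obstacle — really the only subtlety — is the regularity bookkeeping: $\psi$ is only absolutely continuous (not $C^1$), so the differentiation step must be justified almost everywhere and combined with the fundamental theorem of calculus for absolutely continuous functions; and in the iteration approach one must verify that the remainder genuinely vanishes, which is where the hypotheses "$\phi$ bounded" and "$C$ integrable and non-negative" are used. Since this is a classical lemma cited from \citet{Gronwall}, I would in practice just cite it, but the integrating-factor argument above is the cleanest self-contained proof.
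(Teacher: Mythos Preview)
Your proof is correct; both the integrating-factor argument and the iteration argument are standard, valid proofs of Gr\"onwall's lemma. However, the paper does not actually prove this statement at all: it simply records the lemma with a citation to \citet{Gronwall} in the ``Relevant inequalities'' subsection of the technical background, and then applies it later (in the proof of Lemma~\ref{lemma:discretization}). So there is nothing to compare against --- you have supplied a self-contained proof where the paper is content to cite the classical result, exactly as you anticipated in your final paragraph.
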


\begin{theorem}[Young's inequality for products~\citep{Young}]
	If $a \geq 0$ and $b \geq 0$ are non-negative real numbers and if $p > 1$ and $q>1$ are real numbers such that $\frac{1}{p} + \frac{1}{q} = 1$, then
	\begin{align*}
		ab \leq \frac{a^p}{p} + \frac{b^q}{q}.
	\end{align*}
	Equality holds if and only if $a^p = b^q$.
\end{theorem}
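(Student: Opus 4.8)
The plan is to derive Young's inequality from the concavity of the natural logarithm (equivalently, the convexity of the exponential), which yields both the inequality and its equality condition in a single stroke. First I would dispose of the degenerate cases: if $a = 0$ or $b = 0$, then $ab = 0$ while the right-hand side $a^p/p + b^q/q$ is nonnegative, so the inequality holds, with equality precisely when both remaining terms vanish, i.e.\ when $a^p = b^q = 0$; this is consistent with the stated equality condition. Hence it suffices to treat $a, b > 0$.

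For strictly positive $a, b$, the key observation is that $1/p$ and $1/q$ are positive weights summing to $1$ by the hypothesis $\frac{1}{p} + \frac{1}{q} = 1$. Writing $ab = (a^p)^{1/p}(b^q)^{1/q}$ and taking logarithms, I would apply the concavity of $\ln$ (Jensen's inequality with weights $1/p$ and $1/q$ applied to the points $a^p$ and $b^q$) to obtain
\[
\ln a + \ln b = \frac{1}{p}\ln(a^p) + \frac{1}{q}\ln(b^q) \leq \ln\!\left(\frac{a^p}{p} + \frac{b^q}{q}\right).
\]
Since $\ln$ is strictly increasing, exponentiating both sides gives $ab \leq \frac{a^p}{p} + \frac{b^q}{q}$, which is the desired inequality. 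For the equality characterization I would then invoke the \emph{strict} concavity of $\ln$ on $(0,\infty)$: Jensen's inequality becomes an equality if and only if the two evaluation points coincide, that is, $a^p = b^q$. Combining this with the boundary analysis above yields the full equality condition $a^p = b^q$ for all $a, b \geq 0$.

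There is no genuine obstacle here, since the statement is elementary; the only point demanding care is the equality clause, where one must use strict concavity of $\ln$ (not merely concavity) to conclude that equality forces $a^p = b^q$, and must check separately that the boundary cases $a = 0$ or $b = 0$ are subsumed by this condition.

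As an alternative that avoids Jensen's inequality entirely, I could fix $b > 0$ and minimize $f(a) := \frac{a^p}{p} + \frac{b^q}{q} - ab$ over $a \geq 0$. The stationarity condition $f'(a) = a^{p-1} - b = 0$ gives $a^{p-1} = b$, whence $a^p = a \cdot a^{p-1} = ab$, and, using $q = p/(p-1)$, also $a^p = b^{p/(p-1)} = b^{q}$. Since $f''(a) = (p-1)a^{p-2} > 0$ on $(0,\infty)$, this critical point is the unique interior minimizer, and substituting back gives $f = \frac{b^q}{p} + \frac{b^q}{q} - b^q = b^q\bigl(\tfrac{1}{p} + \tfrac{1}{q} - 1\bigr) = 0$; comparing against the endpoint value $f(0) = b^q/q \geq 0$ confirms the global minimum is $0$, which recovers both the inequality and the equality condition.
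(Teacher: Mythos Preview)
Your proof is correct. Both routes you give---the Jensen/log-concavity argument and the single-variable minimization---are standard and complete, including the handling of the equality case and the boundary $a=0$ or $b=0$.

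Note, however, that the paper does not actually prove this theorem: it is listed in the ``Relevant inequalities'' subsection as a cited classical fact (attributed to \citet{Young}) and is invoked later without proof. So there is no paper-side argument to compare against; your write-up simply supplies what the paper takes as background.
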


\subsection{Properties of the semimetric}\label{appendix:semimetric}

To make this paper self-contained, we will include the lemmas from \citet{Farghly} regarding the semimetric~\eqref{eq:semimetric} and the Wasserstein distance~\eqref{eq:Wasserstein} for our future reference.

The convexity of the Wasserstein distance is a crucial property that plays a central role in our results:
\begin{lemma}[Convexity of the Wasserstein distance {\citep[Lemma 2.3]{Farghly}}]\label{Lemma2.3}
	Suppose that $\rho_g$ is a semimetric and $\mu_1, \mu_2, \nu_1, \nu_2$ are probability measures. Then, for any $r \in [0,1]$,
	\begin{align*}
		W_{\rho_g} (\mu, \nu) \leq r W_{\rho_g} (\mu_1, \nu_1) + (1-r) W_{\rho_g} (\mu_2, \nu_2),
	\end{align*}
	where we define $\mu(dx) = r\mu_1 (dx) + (1-r) \mu_2(dx)$ and $\nu(dx) = r\nu_1(dx) + (1-r) \nu_2 (dx)$.
\end{lemma}

We require the following lemma for computing the divergence bound:
\begin{lemma}[{\citep[Lemma D.3]{Farghly}}]\label{D.3 Farghly}
	Suppose $X, Y, \Delta_x$ and $\Delta_y$ are random variables on $\mathbb{R}^d$, then
	\begin{align*}
		\mathbb{E} \rho_g(X+\Delta_x, Y + \Delta_y) \leq \mathbb{E}\rho_g(X, Y) + \sigma_\Delta^{1/2} (1 +2\varepsilon + 6\varepsilon\sigma^{1/2}),
	\end{align*}
	where we define $\sigma_\Delta := \mathbb{E} \norm{\Delta_x}^2 \vee \mathbb{E}\norm{\Delta_y}^2$ and $\sigma := \mathbb{E} \norm{X}^4 \vee \mathbb{E} \norm{Y}^4 \vee \mathbb{E} \norm{X+\Delta_x}^4 \vee \mathbb{E} \norm{Y+\Delta_y}^4$.
\end{lemma}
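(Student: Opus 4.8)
The plan is to prove the inequality pointwise on the sample space and then take expectations. Write $U := X+\Delta_x$, $V := Y+\Delta_y$, and abbreviate the quadratic weight $w(u,v) := 1+2\varepsilon+\varepsilon\norm{u}^2+\varepsilon\norm{v}^2$, so that $\rho_g(u,v) = g(\norm{u-v})\,w(u,v)$. The first step is to control the argument of $g$. Since $\varphi r \le g(r) \le r$ forces $g(0)=0$, the function $g$ is concave, non-decreasing and vanishes at $0$, hence subadditive; combining subadditivity with monotonicity, the triangle inequality $\norm{U-V}\le\norm{X-Y}+\norm{\Delta_x-\Delta_y}\le\norm{X-Y}+\norm{\Delta_x}+\norm{\Delta_y}$, and the bound $g(r)\le r$ gives
\[
 g(\norm{U-V}) \;\le\; g(\norm{X-Y}) + \norm{\Delta_x} + \norm{\Delta_y}.
\]

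The second step is to control the change in the weight. Using $\norm{U}^2-\norm{X}^2=\langle\Delta_x,\,U+X\rangle$ and likewise for $V$, we get $w(U,V)=w(X,Y)+\varepsilon\langle\Delta_x,\,U+X\rangle+\varepsilon\langle\Delta_y,\,V+Y\rangle$. Multiplying the two estimates and bounding $g(\norm{X-Y})\le\norm{X-Y}\le\norm{X}+\norm{Y}$ in the weight-difference term, together with Cauchy--Schwarz on the inner products, produces a pointwise inequality in which every summand beyond $\rho_g(X,Y)$ has the form (a constant, or a product of at most two of the norms $\norm{X},\norm{Y},\norm{U},\norm{V}$) times exactly one factor $\norm{\Delta_x}$ or $\norm{\Delta_y}$.

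The third step is to take expectations and apply Cauchy--Schwarz (and Young's inequality for products) to each summand: a summand with no norm factor is $\le\mathbb{E}\norm{\Delta_x}\vee\mathbb{E}\norm{\Delta_y}\le\sigma_\Delta^{1/2}$, while a summand such as $\mathbb{E}[\,\norm{\Delta_x}\,\norm{X}\,\norm{U}\,]$ is $\le(\mathbb{E}\norm{\Delta_x}^2)^{1/2}(\mathbb{E}\norm{X}^4)^{1/4}(\mathbb{E}\norm{U}^4)^{1/4}\le\sigma_\Delta^{1/2}\sigma^{1/2}$; tracking the constants carefully, the constant-weight part and the cross-terms assemble into $\sigma_\Delta^{1/2}(1+2\varepsilon+6\varepsilon\sigma^{1/2})$, which is the claim. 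I expect the main obstacle to be the bookkeeping of the weight change rather than any deep step: a naive expansion $\norm{U}^2-\norm{X}^2=2\langle X,\Delta_x\rangle+\norm{\Delta_x}^2$ would leave a term $g(\norm{X-Y})\norm{\Delta_x}^2$ whose expectation is not controlled by $\sigma_\Delta^{1/2}$ (it calls for a fourth moment of $\Delta_x$), so one must keep the weight change in the symmetric form $\langle\Delta_x,\,U+X\rangle$ and pair the single factor $\norm{\Delta_x}$ against the $\norm{\cdot}^4$-moments of all of $X,Y,U,V$ — which is exactly why $\sigma$ is defined as the maximum of the fourth moments of the perturbed points $U=X+\Delta_x$, $V=Y+\Delta_y$ in addition to those of $X$ and $Y$.
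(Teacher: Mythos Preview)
The paper does not prove this lemma at all: it is quoted verbatim from \citet{Farghly} (their Lemma~D.3) and only restated in Appendix~\ref{appendix:semimetric} ``to make this paper self-contained.'' So there is no in-paper proof to compare your proposal against.

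On its own merits, your strategy is the natural one and is essentially the argument in \citet{Farghly}: bound the increment of $g$ via concavity/subadditivity and $g(r)\le r$, write the weight increment in the symmetric form $\varepsilon\langle\Delta_x,U+X\rangle+\varepsilon\langle\Delta_y,V+Y\rangle$, and then pair the single factor of $\norm{\Delta_\cdot}$ with products of norms of $X,Y,U,V$ using Cauchy--Schwarz. Your remark about why $\sigma$ must include the fourth moments of $U=X+\Delta_x$ and $V=Y+\Delta_y$ is exactly the point.

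One caution on the bookkeeping you defer. With the decomposition you describe --- splitting $\rho_g(U,V)-\rho_g(X,Y)$ into $[g(\norm{U-V})-g(\norm{X-Y})]\,w(U,V)+g(\norm{X-Y})[w(U,V)-w(X,Y)]$ and then bounding $g(\norm{U-V})-g(\norm{X-Y})\le\norm{\Delta_x}+\norm{\Delta_y}$ --- the constant-weight part alone already contributes $(\mathbb{E}\norm{\Delta_x}+\mathbb{E}\norm{\Delta_y})(1+2\varepsilon)\le 2\sigma_\Delta^{1/2}(1+2\varepsilon)$, and a careful count of the remaining terms gives roughly $12\varepsilon\sigma_\Delta^{1/2}\sigma^{1/2}$, i.e.\ the bound $2\sigma_\Delta^{1/2}(1+2\varepsilon+6\varepsilon\sigma^{1/2})$ rather than the stated $\sigma_\Delta^{1/2}(1+2\varepsilon+6\varepsilon\sigma^{1/2})$. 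So ``tracking the constants carefully'' will likely land you a factor of~$2$ away from the exact statement unless you find a sharper pairing than the one you outlined. This has no effect on any downstream use in the paper (the constant is absorbed into $\tilde c_1,\dots,\tilde c_4$), but if you intend to reproduce the lemma verbatim you should either locate the extra saving or state the bound with the universal constant you actually obtain.
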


To establish the contraction result and simplify the calculation of discretization error in Lemma~\ref{lemma:discretization}, we will simplify the computation by using the 2-Wasserstein distance. In order to do so, we need the following lemma:

\begin{lemma}
	[Comparison with the 2-Wasserstein distance {\citep[Lemma D.2]{Farghly}}]\label{Lemma D.2}
	For any two probability measures $\mu$ and $\nu$ on $\mathbb{R}^d$,
	\begin{align*}
		W_{\rho_g} (\mu, \nu) \leq W_2 (\mu, \nu) (1+2\varepsilon + \varepsilon\mu(\norm{\cdot}^4)^{1/2} + \varepsilon\nu(\norm{\cdot}^4)^{1/2}).
	\end{align*}
\end{lemma}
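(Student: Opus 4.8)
\textbf{Proof proposal for Lemma~\ref{Lemma D.2}.}

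The plan is to reduce the comparison between the $\rho_g$-Wasserstein distance and the $2$-Wasserstein distance to a pointwise estimate on the semimetric $\rho_g$, and then integrate against an optimal coupling. First I would fix an arbitrary coupling $\pi \in \mathcal{C}(\mu, \nu)$ and work with the integral $\int \rho_g(x,y)\, \pi(dx, dy)$. Expanding the definition~\eqref{eq:semimetric}, this equals $\int g(\norm{x-y}_2)\,(1 + 2\varepsilon + \varepsilon\norm{x}_2^2 + \varepsilon\norm{y}_2^2)\, \pi(dx,dy)$. The key structural facts I would exploit are that $g$ is non-negative, non-decreasing, and concave with $g(0)=0$, which gives the elementary bound $g(r) \leq r$ for all $r \geq 0$ (concavity plus $g(0)=0$ forces $g$ to lie below its tangent-like linear majorant through the origin; more directly, $g(r) = g(r) - g(0) \leq g'(0^+) r$ is not quite what is needed, but the stated hypothesis $\varphi r \le g(r) \le r$ from Lemma~\ref{lemma:2-wass}, or equivalently the normalization built into the construction of $g$, supplies $g(r) \le r$ directly). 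Substituting $g(\norm{x-y}_2) \le \norm{x-y}_2$ pointwise yields
\begin{align*}
    \int \rho_g(x,y)\, \pi(dx,dy) \leq \int \norm{x-y}_2\,(1 + 2\varepsilon + \varepsilon\norm{x}_2^2 + \varepsilon\norm{y}_2^2)\, \pi(dx,dy).
\end{align*}

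Next I would split the right-hand side additively into the term $(1+2\varepsilon)\int \norm{x-y}_2\, \pi(dx,dy)$ and the two terms $\varepsilon\int \norm{x-y}_2 \norm{x}_2^2\, \pi(dx,dy)$ and $\varepsilon \int \norm{x-y}_2 \norm{y}_2^2\, \pi(dx,dy)$. To each of these three integrals I would apply the Cauchy--Schwarz inequality in $L^2(\pi)$: for the first, $\int \norm{x-y}_2\, \pi \le (\int \norm{x-y}_2^2\, \pi)^{1/2}$; for the second, $\int \norm{x-y}_2 \norm{x}_2^2\, \pi \le (\int \norm{x-y}_2^2\, \pi)^{1/2} (\int \norm{x}_2^4\, \pi)^{1/2} = (\int \norm{x-y}_2^2\,\pi)^{1/2}\, \mu(\norm{\cdot}^4)^{1/2}$, using that the $x$-marginal of $\pi$ is $\mu$; and symmetrically for the third with $\nu(\norm{\cdot}^4)^{1/2}$. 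Collecting these, and factoring out the common $(\int \norm{x-y}_2^2\, \pi)^{1/2}$, gives
\begin{align*}
    \int \rho_g(x,y)\, \pi(dx,dy) \leq \left(\int \norm{x-y}_2^2\, \pi(dx,dy)\right)^{1/2} \left(1 + 2\varepsilon + \varepsilon\,\mu(\norm{\cdot}^4)^{1/2} + \varepsilon\,\nu(\norm{\cdot}^4)^{1/2}\right).
\end{align*}
Finally I would take the infimum over $\pi \in \mathcal{C}(\mu,\nu)$ on both sides: the left side's infimum is $W_{\rho_g}(\mu,\nu)$ by~\eqref{eq:Wasserstein}, and on the right side the moment factor does not depend on $\pi$, so $\inf_\pi (\int \norm{x-y}_2^2\, \pi)^{1/2} = W_2(\mu,\nu)$, which yields the claimed inequality. (Strictly, one should argue the infimum of the product equals the product of the infimum and the fixed factor, which is immediate since the factor is a fixed non-negative constant; one may also simply evaluate at the $W_2$-optimal coupling, assuming it exists, or pass to an infimizing sequence.)

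The main obstacle is essentially bookkeeping rather than depth: one must make sure the bound $g(r) \le r$ is legitimately available (it is part of the normalization of $g$ produced in Lemma~\ref{lemma:2-wass} / the appendix construction, so I would cite that), and one must handle the edge case where $\mu$ or $\nu$ has infinite fourth moment, in which case the right-hand side is $+\infty$ and the inequality is vacuous. A minor technical point is the interchange of the infimum over couplings with the product structure on the right; since the second factor is a constant independent of the coupling, this is routine. No reflection coupling, It\^o calculus, or dissipativity is needed here — this lemma is purely a consequence of the definition of $\rho_g$, concavity/normalization of $g$, and Cauchy--Schwarz.
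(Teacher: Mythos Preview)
Your proof is correct. The paper does not actually supply its own proof of this lemma --- it is stated in Appendix~A.4 with attribution to \citet[Lemma~D.2]{Farghly} and left unproved --- but your argument (pointwise bound $g(r)\le r$, splitting the integrand, Cauchy--Schwarz against the marginals to produce $\mu(\norm{\cdot}^4)^{1/2}$ and $\nu(\norm{\cdot}^4)^{1/2}$, then infimizing over couplings) is precisely the standard route and is what the cited reference does as well. One small stylistic point: rather than taking the infimum on both sides at the end, it is slightly cleaner to fix from the outset a $W_2$-optimal coupling (or an $\epsilon$-optimal one) and bound $W_{\rho_g}$ by the $\rho_g$-cost of that particular coupling; this sidesteps the discussion of interchanging infimum and product.
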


Finally, we apply the weak triangle inequality in the following lemma to extend the results from continuous-time dynamics to the discrete-time case:
\begin{lemma}
	[Weak triangle inequality {\citep[Lemma D.1]{Farghly}}]\label{Lemma D.1}
	For any $x, y, z \in \mathbb{R}^d$ it holds that,
	\begin{align*}
		\rho_g(x, y) \leq \rho_g(x,z) + 2 \left(1+ \frac{R}{\varphi} (\varepsilon R \vee 1) \right) \rho_g(z, y).
	\end{align*}
\end{lemma}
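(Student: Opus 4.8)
The plan is to telescope $\rho_g(x,y)-\rho_g(x,z)$ and estimate the two resulting pieces using (a) subadditivity of $g$, (b) the plateau of $g$ on $[R,\infty)$, and (c) the near-quadratic weight $w(a,b):=1+2\varepsilon+\varepsilon\|a\|^2+\varepsilon\|b\|^2$, for which $w\ge 1$ and $\rho_g(a,b)=g(\|a-b\|)\,w(a,b)$. First I would record what I need about $g$: $g(0)=0$ (since $0\le\varphi\cdot 0\le g(0)\le 0$), $g$ is concave hence subadditive and non-decreasing, $\varphi r\le g(r)\le r$ on $[0,R]$, $g$ is constant on $[R,\infty)$, so $g\le g(R)\le R$ everywhere. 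The Euclidean triangle inequality together with subadditivity gives $g(\|x-y\|)-g(\|x-z\|)\le g(\|z-y\|)$. With this, decompose
\[
\rho_g(x,y)-\rho_g(x,z)=\underbrace{\bigl[g(\|x-y\|)-g(\|x-z\|)\bigr]w(x,y)}_{(\mathrm I)}+\underbrace{g(\|x-z\|)\,\varepsilon\bigl(\|y\|^2-\|z\|^2\bigr)}_{(\mathrm{II})},
\]
so the claim reduces to showing $(\mathrm I)+(\mathrm{II})\le 2\bigl(1+\tfrac R\varphi(\varepsilon R\vee 1)\bigr)\rho_g(z,y)$, charging nothing further to $\rho_g(x,z)$.

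The key step is $(\mathrm I)$. If $\|x-z\|\ge R$ then $g(\|x-y\|)\le g(R)=g(\|x-z\|)$, so $(\mathrm I)\le 0$; hence $(\mathrm I)$ is active only when $\|x-z\|<R$, in which case $\|x\|\le\|z\|+R$, so the otherwise uncontrollable term $\varepsilon\|x\|^2$ in the weight is bounded: $\varepsilon\|x\|^2\le 2\varepsilon\|z\|^2+2\varepsilon R^2$, whence $w(x,y)\le 2(1+\varepsilon R^2)\,w(z,y)$ (using $w(z,y)\ge 1$). Together with $g(\|x-y\|)-g(\|x-z\|)\le g(\|z-y\|)$ this gives $(\mathrm I)\le 2(1+\varepsilon R^2)\,\rho_g(z,y)$. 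Term $(\mathrm{II})$ is nonpositive unless $\|y\|>\|z\|$; otherwise bound $g(\|x-z\|)\le R$ and $\|y\|^2-\|z\|^2\le\|z-y\|\bigl(2\|z\|+\|z-y\|\bigr)$, and then turn the factors $\|z-y\|\|z\|$ and $\|z-y\|^2$ into multiples of $g(\|z-y\|)\,w(z,y)=\rho_g(z,y)$ via the dichotomy $\|z-y\|\le R$ (use $r^2\le Rr\le(R/\varphi)g(r)$ and $\varepsilon^{1/2}\|z\|\le w(z,y)^{1/2}\le w(z,y)$) versus $\|z-y\|>R$ (use $g(\|z-y\|)=g(R)\ge\varphi R$ and $\|z-y\|^2\le 2\|z\|^2+2\|y\|^2$). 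Each piece is an absolute-constant multiple of $\tfrac R\varphi(\varepsilon R\vee 1)\,\rho_g(z,y)$, and after adding all contributions (and using $\varepsilon<1$, $\varphi\le 1$) one collects them under $2\bigl(1+\tfrac R\varphi(\varepsilon R\vee 1)\bigr)\rho_g(z,y)$.

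I expect the genuine obstacle to be the observation behind $(\mathrm I)$: it is only because $g$ has plateaued on $[R,\infty)$ that a positive $g$-increment forces $\|x-z\|<R$, and this is the sole mechanism that keeps the coefficient of $\rho_g(x,z)$ at exactly $1$ rather than picking up an extra $\varepsilon\|x\|^2$-dependent term. The remainder is a routine but slightly delicate bookkeeping inside the $r\le R$ / $r>R$ split, whose only real content is arranging the estimates so that every accumulated absolute constant fits under the stated $2\bigl(1+\tfrac R\varphi(\varepsilon R\vee 1)\bigr)$.
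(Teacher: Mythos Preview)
The paper does not contain a proof of this lemma: it is quoted verbatim from \cite[Lemma D.1]{Farghly} and invoked as a black box (see Appendix~\ref{appendix:semimetric}), so there is no in-paper argument to compare against.

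Your strategy is nonetheless the natural one and identifies the essential mechanism. The telescoping decomposition is correct, and your central observation---that the plateau of $g$ on $[R,\infty)$ forces $\|x-z\|<R$ whenever term $(\mathrm I)$ is positive, so that $\varepsilon\|x\|^2$ can be absorbed into $\varepsilon\|z\|^2$ and $\varepsilon R^2$---is exactly what allows the coefficient of $\rho_g(x,z)$ to remain $1$. Without this there is no way to keep $\|x\|$ out of the constant.

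Where your proposal is optimistic is in labelling the constant-matching ``routine bookkeeping.'' If one executes the case split you describe (on $\|x-z\|\lessgtr R$ and $\|z-y\|\lessgtr R$) with the estimates you list, the pieces do not obviously sum to at most $2\bigl(1+\tfrac{R}{\varphi}(\varepsilon R\vee 1)\bigr)$. For instance, in the regime $\|x-z\|<R$, $\|z-y\|<R$, your bound on $(\mathrm I)$ via $w(x,y)\le 2(1+\varepsilon R^2)w(z,y)$ already spends most of the budget, while the bound you sketch for $(\mathrm{II})$ contributes an additional term of order $\tfrac{R}{\varphi}(\varepsilon R\vee 1)$; the total overshoots unless one is more careful (e.g.\ using $w(x,y)\le w(z,y)+\varepsilon R(2\|z\|+R)$ rather than the cruder doubling, and exploiting $g(\|x-z\|)\le\|x-z\|$ rather than $g(\|x-z\|)\le R$ in $(\mathrm{II})$ when $\|x-z\|$ is small). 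The constant in the stated form is achievable, but the arrangement of the inequalities is a genuine part of the proof, not an afterthought, and your sketch does not yet pin it down.
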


\subsection{Regularity assumptions}

The assumptions \textbf{A\ref{ass:smooth}-\ref{ass:moment}} are fairly standard in the literature on non-convex optimization. The uniform dissipativity assumption (\textbf{A\ref{ass:unifdiss}}) merit some discussion.

Some may express concerns that the constraint of the dissipativity condition addressed in Lemma A.2 of \citet{Farghly} may confine the process within the absorbing set, potentially simplifying the dynamics of the stochastic gradient flow that we are analyzing. However, it is worth noting that the dissipative assumptions can be enforced through techniques such as weight decay regularization~\citep{Krogh, Raginsky}. This regularization method allows for control over the dynamics of the algorithm and can help ensure that the dissipativity condition is satisfied, while still preserving the overall stability and convergence properties of the algorithm. Thus, the seemingly restrictive nature of the dissipativity condition can be effectively managed through appropriate regularization techniques, adding flexibility to the analysis and applicability of the results.

\subsection{Derivation of Decay Rates}\label{appendix:decayrate}
In Section~\ref{sec:MainResultGenErrBound}, we determine the fastest decay rate for our generalization error bound concerning parameters like $n, d$, and $\eta$. This computation is done by direct optimization as below.

In Remark~\ref{remark:decayrate}, we aim to control the learning rate $\eta$ to achieve the fastest decay rate of the bound which scales as $\mathcal{O}(n^{-1} \{\eta + \eta^{1/2} + 1 + \eta^{-1/2} + \eta\})$ in terms of $\eta$ and $n$. Therefore, we introduce the variable power of $\eta$ as $\eta = \mathcal{O}(n^q)$, and we optimize for the best value of $q$ that results in the fastest decay rate of our bound as $n$ increases.

\section{Missing Proofs}\label{appendix:lemmaproofs}\label{appendixA}

\subsection{Uniform Dissipativity and Dissipativity}\label{appendix:unifdiss-diss}

In the proof, we leverage the implication of \textbf{A\ref{ass:diss}} from \textbf{A\ref{ass:unifdiss}} in Lemma~\ref{lemma:twodiss}, where we set $m := \alpha/4$ and $b := \left(1 + 4/(\alpha^2 - 4M^2) \right) \eta_{\max} \delta \ell_f^2 /(2k)$ . This simplifies the direct application of existing lemmas pertaining to dissipativity properties.

\begin{proof}[Proof of Lemma~\ref{lemma:twodiss}]
    Suppose that the diffusion process~\ref{eq:SDE} is $\alpha$-uniformly dissipative: $\forall \;\; \theta, \theta^\prime$,
    $$ 2 \langle -\nabla L_S(\theta) + \nabla L_S(\theta^\prime), \theta-\theta^\prime \rangle + \frac{\eta \delta}{k^2} \norm{\nabla \mathbf{f}(\theta) - \nabla \mathbf{f}(\theta^\prime)}_F^2 \leq -\alpha \norm{\theta - \theta^\prime}^2.$$

    For $\theta^\prime = 0$,
    \begin{align*}
        &2 \langle -\nabla L_S(\theta) + \nabla L_S(0), \theta \rangle + \frac{\eta \delta}{k^2} \norm{\nabla \mathbf{f}(\theta) - \nabla \mathbf{f}(0)}_F^2 \leq -\alpha \norm{\theta}^2\\
        \Rightarrow\;\; &2 \langle -\nabla L_S(\theta) , \theta \rangle + \frac{\eta \delta}{k^2}  \norm{\nabla \mathbf{f}(\theta)}_F^2 \leq -\alpha \norm{\theta}^2 - 2 \langle \nabla L_S(0), \theta\rangle + \frac{\eta \delta}{k^2} \left( 2 \langle \nabla \mathbf{f}(\theta), \nabla \mathbf{f}(0)\rangle_F - \norm{\nabla \mathbf{f}(0)}_F^2 \right).
    \end{align*}

    By \textbf{A\ref{ass:lips}} and Cauchy-Swartz inequality,
    \begin{align*}
        2 \langle -\nabla L_S(\theta) , \theta \rangle + \frac{\eta \delta}{k^2}  \norm{\nabla \mathbf{f}(\theta)}_F^2 
        &\leq -\alpha \norm{\theta}^2 + 2 \norm{\nabla L_S(0)}\norm{\theta} + \frac{2\eta \delta}{k} \ell_f^2.
    \end{align*}

    Since $2\norm{\nabla L_S (0)}\norm{\theta} \leq \frac{\alpha}{2} \norm{\theta}^2 + \frac{2}{\alpha} \norm{\nabla L_S (0)}^2$,
    \begin{align*}
        2 \langle -\nabla L_S(\theta) , \theta \rangle + \frac{\eta \delta}{k^2}  \norm{\nabla \mathbf{f}(\theta)}_F^2 \leq -\frac{\alpha}{2} \norm{\theta}^2 + \frac{2}{\alpha} \norm{\nabla L_S(0)}^2 + \frac{2\eta \delta}{k} \ell_f^2.
    \end{align*}

    Then, 
    \begin{align*}
         \langle \nabla L_S(\theta), \theta \rangle &\geq \frac{\alpha}{4} \norm{\theta}^2 - \frac{1}{\alpha} \norm{\nabla L_S(0)}^2 - \frac{\eta \delta}{k} \ell_f^2 + \frac{\eta \delta}{2 k^2}  \norm{\nabla \mathbf{f}(\theta)}_F^2\\
        &\geq \frac{\alpha}{4} \norm{\theta}^2 - \frac{1}{\alpha} \norm{\nabla L_S(0)}^2 - \frac{\eta_{\max} \delta}{2k} \ell_f^2\\
        &\geq \frac{\alpha}{4} \norm{\theta}^2 - \left(\frac{4}{\alpha^2 - 4M^2} +1 \right) \frac{\eta_{\max}}{2k} \delta \ell_f^2.
    \end{align*}

    The last inequality used Lemma A.3. in \citet{Farghly}. Hence, \textbf{A\ref{ass:diss}} is satisfied with $m = \alpha / 4$ and $b = \left(\frac{4}{\alpha^2 - 4M^2} +1 \right) \frac{\eta_{\max}}{2k} \delta \ell_f^2.$

    To establish the converse, let's assume, for the sake of contradiction, that for any $\alpha>0$, $\exists \theta, \theta^\prime$ such that $$ 2\langle -\nabla L_S(\theta) + \nabla L_S(\theta^\prime), \theta - \theta^\prime \rangle + \norm{\frac{\sqrt{\eta \delta}}{k} (\nabla \mathbf{f}(\theta) - \nabla \mathbf{f}(\theta^\prime))^\top}_F^2 > -\alpha \norm{\theta - \theta^\prime}_2^2.$$

    For $n=1,2,3, \ldots$, let $\alpha = \frac{1}{n}$. Then, for any $n$, $\exists \theta_n, \theta_n^\prime$ such that $$ 2\langle -\nabla L_S(\theta_n) + \nabla L_S(\theta_n^\prime), \theta_n - \theta_n^\prime \rangle + \norm{\frac{\sqrt{\eta \delta}}{k} (\nabla \mathbf{f}(\theta_n) - \nabla \mathbf{f}(\theta_n^\prime))^\top}_F^2 > -\frac{1}{n} \norm{\theta_n - \theta_n^\prime}_2^2.$$

    Given that $\theta$ is bounded, we can apply the Bolzano-Weierstrass theorem. Consequently, there exists a subsequence $(\theta_{n_s}^\prime)$ that converges to the limit point $u$. As $\norm{\theta - \theta^\prime}^2 < 4B^2  \;\; \forall \theta, \theta^\prime$, for sufficiently large $s$, we have
    \begin{align*}
        & 2\langle -\nabla L_S(\theta_{n_s}) + \nabla L_S(u), \theta_{n_s} - u \rangle + \norm{\frac{\sqrt{\eta \delta}}{k} (\nabla \mathbf{f}(\theta_{n_s}) - \nabla \mathbf{f}(u))^\top}_F^2 \geq 0\\
        \Rightarrow&\;\; \langle -\nabla L_S(\theta_{n_s}), \theta_{n_s} \rangle + \langle \nabla L_S(\theta_{n_s}) , u \rangle - \langle \nabla L_S(u), u \rangle + \langle  \nabla L_S(u), \theta_{n_s}\rangle \geq -\frac{2 \eta \delta}{k} \ell_f^2 \tag{by \textbf{A3}}.
    \end{align*}
    By Cauchy-Swartz,
    \begin{align*}
        & \langle -\nabla L_S(\theta_{n_s}), \theta_{n_s} \rangle + \norm{\nabla L_S(\theta_{n_s})}\norm{u} - \langle \nabla L_S(u), u \rangle + \norm{\nabla L_S(u)}\norm{\theta_{n_s}} \geq -\frac{2 \eta \delta}{k} \ell_f^2 \\
        %\Rightarrow&\;\; \langle -\nabla L_S(\theta_{n_s}), \theta_{n_s} \rangle \geq  \langle \nabla L_S(u), u \rangle  - \norm{\nabla L_S(\theta_{n_s})}\norm{u} + - \norm{\nabla L_S(u)}\norm{\theta_{n_s}} -\frac{2 \eta \delta}{k} \ell_f^2\\
        \Rightarrow&\;\; \langle -\nabla L_S(\theta_{n_s}), \theta_{n_s} \rangle \geq - \norm{\nabla L_S(u)} \norm{\theta_{n_s}} - \norm{\nabla L_S(\theta_{n_s})} \norm{u} + C_u,
    \end{align*}
    where $C_u:= \langle \nabla L_S(u), u \rangle    -\frac{2 \eta \delta}{k} \ell_f^2 \geq m\norm{u} - b - \frac{2 \eta \delta}{k} \ell_f^2$ by \textbf{A\ref{ass:diss}}.

    Then, by \textbf{A\ref{ass:diss}}, 
    $$ -m\norm{\theta_{n_s}} + b \geq \langle -\nabla L_S(\theta_{n_s}), \theta_{n_s} \rangle \geq -\norm{\nabla L_S(u)}  \norm{\theta_{n_s}} - \norm{\nabla L_S(\theta_{n_s})} \norm{u} + C_u$$
    $$\Rightarrow\;\; (\norm{\nabla L_S(u)} -m) \norm{\theta_{n_s}} \geq - \norm{\nabla L_S(\theta_{n_s})} \norm{u} + C_u - b.$$

    Then,
    $$\norm{\nabla L_S(\theta_{n_s})} \geq -\frac{\norm{\nabla L_S(u)} -m}{\norm{u}} \norm{\theta_{n_s}} + \frac{C_u-b}{\norm{u}}. $$

    By \textbf{A\ref{ass:smooth}} and reverse triangle inequality $\norm{\nabla L_S(\theta_{n_s})} - \norm{\nabla L_S(0)} \leq M\norm{\theta_{n_s}}$, so
    $$M\norm{\theta_{n_s}} + \norm{\nabla L_S(0)}\geq \frac{m - \norm{\nabla L_S(u)} }{\norm{u}} \norm{\theta_{n_s}} + \frac{C_u-b}{\norm{u}}$$
    %$$\Rightarrow\;\;(M\norm{u} + \norm{\nabla L_S(u)} -m ) \norm{\theta_{n_s}} \geq C_u - b - \norm{\nabla L_S(0)}\norm{u} $$
    $$\Rightarrow\;\;(2M\norm{u} + \norm{\nabla L_S(0)} -m ) \norm{\theta_{n_s}} \geq C_u - b - \norm{\nabla L_S(0)}\norm{u}. $$

    By Lemma A.3 of \citet{Farghly}, from \textbf{A\ref{ass:diss}} and \textbf{A\ref{ass:smooth}}, above implies
    $$(2M\norm{u} + M \sqrt{b/m} -m ) \norm{\theta_{n_s}} \geq C_u - b - M \sqrt{b/m} \norm{u} $$
    $$\Rightarrow\;\;(2M\norm{u} + M \sqrt{b/m} -m ) \norm{\theta_{n_s}} \geq (m - M \sqrt{b/m})\norm{u} - 2b - \frac{2 \eta \delta}{k} \ell_f^2 $$
    $$\Rightarrow\;\; (m-M\sqrt{b/m})(\norm{u} + \norm{\theta_{n_s}}) \leq 2M\norm{u}\norm{\theta_{n_s}} + 2b + \frac{2\eta\delta}{k}\ell_k^2.$$

    Since $m < M\sqrt{b/m}$ (i.e. $m^3 < M^2 b$),
    \begin{align*}
        & 2MB^2 + 2(M\sqrt{b/m}-m)B + (2b+\frac{2\eta\delta}{k} \ell_f^2) \geq 0 \\
        \iff \;\; & B \notin \Bigg(\frac{1}{2M}\left(m-M\sqrt{b/m} - \sqrt{(M\sqrt{b/m}-m)^2 - 2M(2b+\frac{2\eta\delta}{k} \ell_f^2)}\right)\\
        &\quad\quad\quad , \frac{1}{2M}\left(m-M\sqrt{b/m} + \sqrt{(M\sqrt{b/m}-m)^2 - 2M(2b+\frac{2\eta\delta}{k} \ell_f^2)}\right) \Bigg).
    \end{align*}

    Hence, for
    \begin{align*}
        &B \in \Bigg(\frac{1}{2M}\left(m-M\sqrt{b/m} - \sqrt{(M\sqrt{b/m}-m)^2 - 2M(2b+\frac{2\eta\delta}{k} \ell_f^2)}\right)\\
        &\quad\quad\quad , \frac{1}{2M}\left(m-M\sqrt{b/m} + \sqrt{(M\sqrt{b/m}-m)^2 - 2M(2b+\frac{2\eta\delta}{k} \ell_f^2)}\right) \Bigg),
    \end{align*}
    there is a contradiction hence we have a uniform dissipativity.
\end{proof}

\subsection{Proofs for the Wasserstein Contraction }

\subsubsection{Moment Bound}

\begin{proof}[Proof of Lemma~\ref{lemma:moment}]
	Suppose that $\theta_t$ is a solution of the SDE~\eqref{eq:SDE}. By Ito's Lemma (Lemma \ref{Ito}), for any $\phi \in \mathcal{C}^2$,
	\begin{align*}
		\mathrm{d}\phi = \left\{\frac{\partial \phi}{\partial t} + (\nabla \phi)^\top b_t + \frac{1}{2} \textrm{Tr} [G_t^\top H G_t] \right\} \mathrm{d}t + (\nabla \phi)^\top G_t \mathrm{d}W_t,
	\end{align*}
	with probability 1, where $G_t = \frac{\sqrt{\delta \eta}}{k} \nabla \textbf{f}(\theta_t, X)^\top$, $b_t = -\nabla L_S(\theta_t, B)$ and $H = \nabla^2 \phi$.
	Consider $\phi(\theta) = \norm{\theta}_2^p$. Then,  
	\begin{align*}
		\nabla \phi(\theta) &= p \norm{\theta}_2^{p-2} \theta,\\
		\frac{d\phi}{dt} &= \nabla_{} \phi(\theta)^\top \frac{\mathrm{d}\theta}{\mathrm{d}t}
		= p\norm{\theta}^{p-2} \theta \left[ - \nabla L_S(\theta, B) dt + \frac{\sqrt{\delta \eta}}{k} \nabla \textbf{f}(\theta_t, X)^\top dW_t \right],\\
		H_{ij} &= (\nabla^2 \norm{\theta}_2^p)_{ij} = p\{(p-2)\norm{\theta}_2^{p-4} \theta_i \theta_j \} + p \delta_{ij} \norm{\theta}^{p-2}, \\
		\mathrm{Tr} (H) &= p(p+d-2)\norm{\theta}_2^{p-2}.
	\end{align*}
	We also bound $\textrm{Tr} [G_t^\top H G_t]$ using Von Neumann's Trace inequality~\citep{Mirsky} as below:
	\begingroup
	\allowdisplaybreaks
	\begin{align*}
		\!\!\frac{k^2}{\delta \eta}\mathrm{Tr}[G_t^\top H G_t]\!=& \mathrm{Tr}[\nabla \mathbf{f} H \nabla \mathbf{f}^\top]\!\leq\!  |\mathrm{Tr}[\nabla \mathbf{f} H \nabla \mathbf{f}^\top]|
		\!\leq\! \sum_{i=1}^{d} \sigma_i (\nabla \mathbf{f}^\top \nabla \mathbf{f} ) \sigma_i (H) \tag{by Von Neumann's Trace inequality}\\
		\leq& \sigma_{\max} (\nabla \mathbf{f}^\top \nabla \mathbf{f}) \sum_{i=1}^{d} \sigma_i (H) = \sigma_{\max} (\nabla \mathbf{f}^\top \nabla \mathbf{f}) \mathrm{Tr}(H) \tag{since $H = H^\top$ and $H \succeq 0$}\\
		= &   p(p+d-2)\norm{\theta}_2^{p-2} \sigma_{\max} (\nabla \mathbf{f}^\top \nabla \mathbf{f}) = p(p+d-2)\norm{\theta}_2^{p-2} \norm{\nabla \mathbf{f}^\top \nabla \mathbf{f}}_2.
	\end{align*}
	\endgroup
	Here, $H$ is symmetric since it is Hessian and is positive semi-definite since $\norm{\theta}_2^p$ is convex in $\theta$. ($\norm{\cdot}: \mathbb{R}^n \rightarrow [0, \infty)$ is convex by $\Delta$-inequality and $h: x \mapsto x^p $ is non-decreasing and convex for $[0, \infty)$).
	Since $\nabla \mathbf{f}^\top \nabla \mathbf{f}$ is symmetric positive semidefinite, 
	\begin{align*}
		\norm{\nabla \mathbf{f}^\top \nabla \mathbf{f}}_2 =\sigma_{\max} (\nabla \mathbf{f}^\top \nabla \mathbf{f}) \leq \sum_i \lambda_i (\nabla \mathbf{f}^\top \nabla \mathbf{f})  = \mathrm{Tr}(\nabla \mathbf{f}^\top \nabla \mathbf{f}) =  \sum_{i=1}^k \norm{\nabla f_i}^2 < k \ell_f^2,
	\end{align*}
	where the last inequality follows from \textbf{A\ref{ass:lips}}. Thus, $\mathrm{Tr}[G_t^\top H G_t] \leq	\frac{\delta \eta}{k}  p(p+d-2)\norm{\theta}^{p-2} \ell_f^2$. Then, by It$\hat{\mathrm{o}}$'s lemma,
	\begin{align*}
		\mathrm{d}\norm{\theta_t}^p \leq& 
		 -2 p\norm{\theta_t}^{p-2} \langle \theta_t, \nabla L_S(\theta_t, B) \rangle dt + \frac{\delta \eta}{2 k^2}  p(p+d-2)\norm{\theta_t}^{p-2} \ell_f^2  \mathrm{d}t+2 \frac{\sqrt{\delta \eta}}{k} p\norm{\theta_t}^{p-2} \langle \theta_t, \nabla \textbf{f}^\top \mathrm{d}W_t \rangle.
	\end{align*}
	By $(m,b)$-dissipativity of $L_S$, this can be bounded further as:
	\begin{align*}
		\mathrm{d}\norm{\theta_t}^p \leq &  -2 pm \norm{\theta_t}^{p} dt + p \left\{2b +\frac{\delta \eta}{2 k^2} (p+d-2) \ell_f^2\right\}   \norm{\theta_t}^{p-2} \mathrm{d}t + 2 \frac{\sqrt{\delta \eta}}{k} p\norm{\theta_t}^{p-2} \langle \theta_t, \nabla \textbf{f}^\top \mathrm{d}W_t \rangle\\
		\leq &  -\frac{pm}{2} \norm{\theta_t}^{p} dt + p \left\{b +\frac{\delta \eta}{2 k^2} (p+d-2)   \ell_f^2 \right\}^{p/2}   (m/2)^{1-p/2} \mathrm{d}t+ \frac{\sqrt{\delta \eta}}{k} p\norm{\theta_t}^{p-2} \langle \theta_t, \nabla \textbf{f}^\top \mathrm{d}W_t \rangle,
	\end{align*}
	where for the second inequality we used Young's inequality with exponents $p/(p-2)$ and $p/2$ and $t = \left( \frac{p-2}{2} \left(\frac{m}{2} \right)^{-p/2}\right)^{-2(p-2)/p^2}$.
	Then, by multiplying $e^{pmt/2}$ and using product rule,
	\begin{align*}
		d(e^{pmt/2} \norm{\theta_t}^p) \leq&e^{pmt/2} p \left\{b +\frac{\delta \eta}{2 k^2} (p+d-2) \ell_f^22 \right\}^{p/2}   (m/2)^{1-p/2} \mathrm{d}t + e^{pmt/2} \frac{\sqrt{\delta \eta}}{k} p\norm{\theta_t}^{p-2} \langle \theta_t, \nabla \textbf{f}^\top \mathrm{d}W_t \rangle.
	\end{align*}
	Integrating from $t=0$ to $t=T$,
	\begin{align*}
		\norm{\theta_T}^p \leq & e^{-pmT/2} \norm{\theta_0}^p + (1 - e^{-pmT/2} ) p \left\{b +\frac{\delta \eta}{2 k^2} (p+d-2)    \ell_f^2 \right\}^{p/2}   \frac{2}{pm}  (\frac{m}{2})^{1-p/2}  \\
		&+ e^{-pmT/2} \int_{t=0}^{t=T} e^{pmt/2} \frac{\sqrt{\delta \eta}}{k} p\norm{\theta_t}^{p-2} \langle \theta_t, \nabla \textbf{f}^\top \mathrm{d}W_t \rangle.
	\end{align*}
	Taking expectations,
	\begin{align*}
		\mathbb{E} \norm{\theta_T}^p \leq
		e^{-pmT/2} \mathbb{E}\norm{\theta_0}^p + (1 - e^{-pmT/2} )  \{\frac{2b}{m} +\frac{\delta \eta}{k^2 m} (p+d-2)   \ell_f^2 \}^{p/2}.
	\end{align*}
	Hence,
	\begin{align*}
		\mu P_t^B (\norm{\cdot}^p) \leq& \mu (\norm{\cdot}^p) e^{-pmt/2} + \left[\frac{2b}{m} +\frac{\delta \eta}{k^2 m} (p+d-2)    \ell_f^2\right]^{p/2} ( 1- e^{-pmt/2})\\
		\leq &  \mu (\norm{\cdot}^p) + \left[\frac{2b}{m} +\frac{\delta \eta}{k^2 m} (p+d-2)   \ell_f^2\right]^{p/2},
	\end{align*}
	as required.
\end{proof}

\subsubsection{Moment Estimate Bound}
In order to perform our estimations in a continuous-time setting, we introduce an auxiliary continuous-time process. First, recall the stochastic differential equation (SDE) of label noise gradient descent (LNGD):
\begin{align}\label{sde}
	d\theta_t = - \nabla L_S(\theta_t, B) dt + \frac{\sqrt{\delta \eta}}{k} \left( \nabla \textbf{f}(\theta_t, X_B)\right)^\top dW_t, \;\;\;\;\; \theta_0 \sim \mu_0,
\end{align}
where $(W_t)_{t\geq0}$ is a standard $k$-dimensional Wiener process.

It is worth noting that the SDE~\eqref{sde} has a unique solution on $\mathbb{R}^+$, since the smoothness assumption~(\textbf{A\ref{ass:smooth}}) holds for $L_S$. Hence, we define, for each $\eta >0$, a convenient time-changed version of $\Theta_t$ as $$\theta_t^\eta := \theta_{\eta t}.$$Then, $\tilde{W}_t^\eta := W_{\eta t} / \sqrt{\eta}$ is also a Wiener process and
$$
	d\theta_t^\eta = -\eta L_S(\theta_t^\eta, B) dt+  \frac{\eta \sqrt{\delta}}{k} \left( \nabla \textbf{f}(\theta_t^\eta, X_B)\right)^\top d\tilde{W}_t^\eta, \;\;\;\;\; \theta_0^\eta \sim \mu_0.
$$
We proceed with the required moment estimate that is essential for the derivation of the main result in Theorem~\ref{thm:contraction}.  These estimates will also enable us to calculate how far the process $\theta_t^\eta$ diverges from its initial condition in one step in the derivation of Theorem~\ref{thm:genbound}. To prove these bounds, we will heavily rely on the auxiliary process defined above and perform significant calculations.

\begin{proof}[Proof of Lemma~\ref{lemma:momentestimate}]
	First, note that by It\^o's isometry and commutativity of trace operator with expectation and integral,
	\begin{align*}
		\mathrm{Tr}\left(\mathrm{Var}\left(\int_{u=s}^t \frac{\eta \sqrt{\delta }}{k} ( \nabla \textbf{f}(\theta_u))^\top dW_u\right)\right) 
		&= \frac{\delta \eta^2}{k^2} \mathrm{Tr} \left( \mathbb{E}  \left[\int_{s}^t \nabla \mathbf{f}(\theta_u)^\top  \nabla \mathbf{f}(\theta_u)  du \right] \right)\\
		&=\frac{\delta \eta^2}{k^2} \mathbb{E}  \left[\int_{s}^t  \mathrm{Tr} \left( \nabla \mathbf{f}(\theta_u)^\top  \nabla \mathbf{f}(\theta_u) \right) du \right] \\
		&= \frac{\delta \eta^2}{k^2} \mathbb{E}  \left[\int_{s}^t  \sum_{i=1}^n \norm{\nabla f_i (\theta_u)}^2 du \right]\leq  \frac{\delta \eta^2}{k} \ell_f^2 (t-s).
	\end{align*}
	For any $s \in \mathbb{N}$ and $t \in (s, s+1]$, define $\Delta_{s,t} = \theta_s - \eta \nabla L_S (\theta_s, B)(t-s)$. Note that for a vector $v$, $\mathbb{E}\norm{v}^2 = \norm{\mathbb{E}(v)}^2 + \mathrm{Tr}(\mathrm{Var}(v))$. Then, for $t \in (s, s+1]$,
	\begin{align*}
		\mathbb{E} [ \norm{\theta_t^\eta}^2 \mid \theta_s^\eta] &= \mathbb{E} [ {\theta_t^\eta}^\top \theta_t^\eta \mid \theta_s^\eta] = \mathbb{E} [\mathrm{Tr}({\theta_t^\eta}^\top \theta_t^\eta)  \mid \theta_s^\eta] = \mathbb{E} [\mathrm{Tr}(\theta_t^\eta {\theta_t^\eta}^\top)  \mid \theta_s^\eta] = \mathrm{Tr}(\mathbb{E}[\theta_t^\eta {\theta_t^\eta}^\top \mid \theta_s^\eta]) \\
		&= \mathbb{E}[\theta_t^\eta  \mid \theta_s^\eta]^\top \mathbb{E}[\theta_t^\eta  \mid \theta_s^\eta] +\mathrm{Tr}\left(\mathrm{Var}\left(\int_{u=s}^t \frac{\sqrt{\delta \eta}}{k} ( \nabla \textbf{f}(\theta_u))^\top dW_u\right)\right) \\
		&\leq \norm{\mathbb{E}[\theta_t^\eta  \mid \theta_s^\eta]}^2 + \frac{\delta \eta^2}{k} \ell_f^2 (t-s)
		= \norm{\Delta_{s,t}}^2 + \frac{\delta \eta^2}{k} \ell_f^2 (t-s)).
	\end{align*}
	Here, $\forall \eta \leq \eta_{\max} := \min \{\frac{1}{m}, \frac{m}{2M^2}\}$, 
	\begin{align}
		\norm{\Delta_{s,t}}^2 =&\; \norm{\theta_s^\eta}^2 - 2\eta(t-s) \langle \theta_s^\eta, \nabla L_S(\theta_s^\eta, B) \rangle + \eta^2 \norm{\nabla L_S (\theta_s^\eta, B) (t-s)}^2 \notag\\
		 \leq &\;(1-2m\eta(t-s))\norm{\theta_s^\eta}^2 + 2b\eta (t-s)  + \eta^2 (t-s)^2  \norm{\nabla L_S(\theta_s^\eta, B)}^2 \tag{by \textbf{A\ref{ass:diss}}}\\
		\leq &\;(1-2m\eta (t-s))\norm{\theta_s^\eta}^2 +2b\eta (t-s) + 2\eta^2 (t-s)^2 \left\{M^2 \norm{\theta_s^\eta}^2 + \norm{\nabla L_S(0, B)}^2 \right\} \tag{by \textbf{A\ref{ass:smooth}}}\\
		\leq &\;(1-2m\eta (t-s))\norm{\theta_s^\eta}^2 + 2b\eta (t-s) + 2\eta^2 (t-s)^2 \left\{ M^2 \norm{\theta_s^\eta}^2 + \frac{ M^2 b}{m} \right\} \tag{by Lemma A.3 in \cite{Farghly}}\\
		\leq &\;(1-m\eta (t-s))\norm{\theta_s^\eta}^2 + 2b \eta (t-s) \left(1+ \eta_{\max}  \frac{ M^2 }{m} \right) \label{delta}\\
		\leq&\;(1-m\eta (t-s))\norm{\theta_s^\eta}^2 + 6b \eta (t-s) \notag,
	\end{align}
	where the fourth inequality is from that $2 \eta^2 (t-s)^2 M^2 \leq 2 \eta^2 (t-s)M^2 \leq m \eta(t-n)$.
	For higher moments, the computation is more complex. To simplify the calculation, let $U_{s,t}^\eta := \frac{\eta\sqrt{\delta}}{k} \int_s^t \nabla \mathbf{f} (\theta_r^\eta)^\top d\tilde{W}_r^\eta$ be defined. Then, for $t \in  [s, s+1)$,
	\begingroup
	\allowdisplaybreaks
	\begin{align*}
		\mathbb{E}[|\theta_t^\eta|^{2p} \mid \theta_s^\eta]  =&\; \mathbb{E}\left[\norm{\theta_s^\eta - \eta \nabla L_S(\theta_s^\eta, B)(t-s) + U_{s,t}^\eta}^{2p} \mid  \theta_s^\eta \right]
		= \mathbb{E}\left[\norm{\Delta_{s,t} + U_{s,t}^\eta}^{2p} \mid  \theta_s^\eta \right]\\
		\leq&\; \norm{\Delta_{s,t}}^{2p} + 2p \mathbb{E}\left[  \norm{\Delta_{s,t}}^{2p-2} \langle \Delta_{s,t},U_{s,t}^\eta \rangle \mid  \theta_s^\eta \right] + \sum_{k=2}^{2p} {2p \choose k} \mathbb{E}\left[ \norm{\Delta_{s,t}}^{2p-k} \norm{U_{s,t}^\eta }^k \mid  \theta_s^\eta \right] \tag{by Lemma A.3 in \cite{Chau}}\\
		\leq&\; \norm{\Delta_{s,t}}^{2p} + p(2p-1) \mathbb{E} \left[(\norm{\Delta_{s,t}} +\norm{U_{s,t}^\eta})^{2p-2} \norm{U_{s.t}^\eta}^2 \mid \theta_s^\eta \right] \tag{as in the proof of Lemma 3.9 in \cite{Chau}}\\
		\leq&\; \norm {\Delta_{s,t}} ^{2p} + p (2p - 1) \cdot \frac{\delta \eta^2}{k} \ell_f^2 (t-s) \norm {\Delta_{s,t}} ^{2p-2} + p(2p-1) \mathbb{E} [\norm{ U_{s,t}^\eta} ^{2p}]\\
		\leq&\; \norm{\Delta_{s,t}}^{2p} + p (2p - 1) \cdot \frac{\delta \eta^2}{k} \ell_f^2 (t-s) \norm{ \Delta_{s,t}}^{2p-2}+ \{p(2p-1)\}^{p+1} (t-s)^p \frac{\delta^p \eta^{2p}}{k^{p}} \ell_f^{2p} \tag{by Theorem 7.1 in \cite{Mao}}.
	\end{align*}
	\endgroup
	Note the following inequality for further analysis
	\begin{align}\label{ineq}
		(r+s)^p \leq (1+\varepsilon)^{p-1} r^p + (1+\varepsilon^{-1})^{p-1} s^p,
	\end{align}
	where $p \geq 2, r, s, \geq 0 $ and $\varepsilon >0$. Letting $\varepsilon = m\eta (t-s) /2$,
	\begin{align*}
		\norm{\Delta_{s,t}}^{2p} \leq&\; \left[(1-m\eta (t-s)) \norm{\theta_s^\eta}^2 +3b  \eta (t-s) \right]^p \tag{by~\eqref{delta}}\\
		\leq&\; \left(1 +\frac{m\eta (t-s)}{2}\right)^{p-1} (1-m\eta (t-s))^p \norm{\theta_s^\eta}^{2p} + \left(1 + \frac{2}{m\eta (t-s)}\right)^{p-1} \eta^p (t-s)^p \left(3b \right)^p \tag{by~\eqref{ineq}}\\
		\leq&\; a_{s,t}^{\eta,p} \norm{\theta_s^\eta}^{2p} + b_{s,t}^{\eta, p},
	\end{align*}
	where $a_{s,t}^{\eta,p} = (1 - m\eta (t-s)/2)^{p-1} (1-m\eta (t-s))$ and $ b_{s,t}^{\eta, p} =(\eta (t-s)+ 2/m)^{p-1} \eta (t-s) \left(3b \right)^p$.
	Substituting it yields
	\begin{align*}
		\mathbb{E}[|\theta_t^\eta|^{2p} \mid \theta_s^\eta] \leq&\; a_{s,t}^{\eta,p} \norm{\theta_s^\eta}^{2p} + b_{s,t}^{\eta, p} + p (2p - 1) \cdot \frac{\delta \eta^2}{k} \ell_f^2 (t-s) \left[a_{s,t}^{\eta,p-1} \norm{\theta_s^\eta}^{2(p-1)} + b_{s,t}^{\eta, p-1}\right]\\
		&+ \{p(2p-1)\}^{p+1} (t-s)^p \frac{\delta^p \eta^{2p}}{k^{p}} \ell_f^{2p}.
	\end{align*}
	Define $\widetilde{M}(p) = \sqrt{\frac{4p(2p-1) \delta \eta \ell_f^2}{mk}}$. Then, for $\norm{\theta_s^\eta} \geq \widetilde{M}(p)$, 
	\begin{align*}
		\frac{m\eta (t-s)}{4} \norm{\theta_s^\eta}^{2p} \geq p(2p-1) \frac{\delta \eta^2}{k} \ell_f^2 (t-s) \norm{\theta_s^\eta}^{2(p-1)}.
	\end{align*}
	Hence,
	\begin{align*}
		\mathbb{E}[|\theta_t^\eta|^{2p} \mid \theta_s^\eta] \leq&\;  (1-m\eta(t-s)/4) a_{s,t}^{\eta, p-1} \norm{\theta_s^\eta}^{2p}+ b_{s,t}^{\eta, p} + \eta (t-s) p(2p-1) \frac{\delta \eta}{k} \ell_f^2 b_{s,t}^{\eta, p-1}\\
		&+ \eta^p (t-s)^p \{p(2p-1)\}^{p+1} \frac{\delta^p \eta^p}{k^p} \ell_f^{2p}\\
		\leq&\; (1-m\eta (t-s)) \norm{\theta_s^\eta}^{2p} + \eta(t-s) M(p,\eta, k)
		\leq\; \norm{\theta_s^\eta}^{2p} + \eta M(p,\eta, k),
	\end{align*}
	where 
	\begin{align*}
		M(p, \eta, k) :=&\; (\eta (t-s)+ 2/m)^{p-1} \left(3b \right)^p + \eta (t-s) p (2p-1) \frac{\delta \eta}{k} \ell_f^2 (\eta (t-s)+ 2/m)^{p-2} \left(3b \right)^{p-1}\\
		&+ \eta^{p-1} (t-s)^{p-1} \{p(2p-1)\}^{p+1} \frac{\delta^p \eta^p}{k^p} \ell_f^{2p}\\
		\leq&\;  (\eta + 2/m)^{p-1} \left(3b \right)^p + \eta^2 p (2p-1) \frac{\delta}{k} \ell_f^2 (\eta + 2/m)^{p-2} \left(3b \right)^{p-1}+ \eta^{2p-1} \{p(2p-1)\}^{p+1} \frac{\delta^p}{k^p} \ell_f^{2p}\\
		=:&\; \frac{1}{\eta}\tilde{c}(p).
	\end{align*}
	Similarly, for $\norm{\theta_s^\eta} < \widetilde{M}(p)$ we attain
	\begin{align*}
		\mathbb{E}[|\theta_t^\eta|^{2p} \mid \theta_s^\eta] \leq&\; \norm{\theta_s^\eta}^{2p} + \tilde{c}(p).
	\end{align*}
	Hence, we have
	\begin{align*}
		\mathbb{E}[|\theta_t^\eta|^{2p} \mid \theta_s^\eta] \leq&\; \norm{\theta_s^\eta}^{2p} + \tilde{c}(p),
	\end{align*}
	as required.
\end{proof}

\subsubsection{$\rho_g$-Wasserstein Distance and 2-Wasserstein Distance}
\begin{proof}[Proof of Lemma~\ref{lemma:2-wass}]\label{appendix:lemma_2-wass}
		Let $f: \mathbb{R}_0^+ \rightarrow \mathbb{R}_0^+$ be a function such that $f(x) = \sqrt{x}$ in its domain. Then, since $f$ is concave and $f(0) = 0$, we can apply Lemma 1 in \citet{wunder2021reverse}. Let \begin{align*}
			r := r(p) := \frac{\mathbb{E}[g(\norm{X-Y}^p)]}{\mathbb{E}[g(\norm{X-Y})]^p} \geq 1
		\end{align*}
		be the ratio of first and $p$-th non-centralized moment and \begin{align*}
			\zeta_r (a) := \sup_{\frac{1}{p} + \frac{1}{q} = 1} \frac{[1-r(p)^{\frac{1}{p}}a^{-\frac{1}{q}}]^+}{a}.
		\end{align*}
		Then, $\forall \pi \in \mathcal{C}(\mu, \nu) \;\; \forall{1 \leq r < a}$, \begin{align*}
			\left(\int \norm{x-y}^2 \pi(dx, dy)\right)^{1/2} &\leq \frac{1}{\varphi} \left( \int g(\norm{x-y})^2 \pi(dx, dy) \right)
			= \frac{1}{\varphi a \zeta_b (a)} \cdot \left( \int a g(\norm{x-y})^2 \pi(dx, dy)\right)^{1/2} \cdot \zeta_b (a)\\
			& \leq \frac{1}{\varphi a \zeta_b (a)} \cdot \sup_{s>t} \left( \int s g(\norm{x-y})^2 \pi(dx, dy)\right)^{1/2} \cdot \zeta_t (s)\\
			& \leq  \frac{1}{\varphi a \zeta_b (a)} \cdot \int g(\norm{x-y}) \pi(dx,dy)
			  \leq  \frac{1}{\varphi a \zeta_b (a)} \cdot \int \rho_g(x,y) \pi(dx,dy).
		\end{align*}
	Hence, \begin{align*}
		W_2 (\mu, \nu) := \inf_{\pi \in \mathcal{C}(\mu,\nu)} \left(\int \norm{x-y}^2 \pi(dx,dy)\right)^{1/2}
		\leq  \frac{1}{\varphi a \zeta_b (a)} \cdot \inf_{\pi \in \mathcal{C}(\mu, \nu)} \int \rho_g(x.y) \pi(dx, dy)
		=  \frac{1}{\varphi a \zeta_b (a)} \cdot W_{\rho_g} (\mu, \nu).
        \end{align*}

        Note that since $r(p) \geq 1,$ we have $$\inf_{\frac{1}{p} + \frac{1}{q} = 1} r(p)^{1/p} a^{-1/q} \geq \inf_{\frac{1}{p} + \frac{1}{q} = 1} a^{-1/q} \geq \inf_q a^{-1/q} \geq 0.$$ 
        Then, we can obtain an upper bound for $\zeta_r (a)$ as below:
        \begin{align*}
            \zeta_r (a) &= \frac{1}{a} \sup_{\frac{1}{p} + \frac{1}{q} = 1} \left[1 - r(p)^{1/p} a^{-1/q}\right]^+= \frac{1}{a} \max \left\{0, 1-\inf_{\frac{1}{p} + \frac{1}{q} = 1} r(p)^{1/p} a^{-1/q}\right\}
            \leq \frac{1}{a}.
        \end{align*}
        Similarly, we can obtain a lower bound as:
        \begin{align*}
            \zeta_r (a) &= \frac{1}{a} \sup_{\frac{1}{p} + \frac{1}{q} = 1} \left[1 - r(p)^{1/p} a^{-1/q}\right]^+ > \frac{1}{a} \left[1 - r(p)^0 a^{-1}\right]^+ = \frac{a-1}{a^2}.
        \end{align*}
	\end{proof}

\subsection{Proofs for the Generalization Error Bound}
\subsubsection{Divergence Bound}
\begin{proof}[Proof of Lemma~\ref{lemma:divergence}]
	Integrating the SDE from $0$ to $t$, 
	\begin{align*}
		\theta_t - \theta_0 = - \int_0^t \nabla L_S(\theta_s, B) ds + \int_{0}^t \frac{\sqrt{\delta \eta}}{k} ( \nabla \textbf{f}(\theta_s))^\top dW_s.
	\end{align*}
	Then, by Jensen's inequality of integrals,
	\begin{align*}
		\norm{\theta_t - \theta_0}^2 
		&\leq 2  \int_0^t \norm{\nabla L_S(\theta_s, B)}^2 ds +2 \norm{\int_{s=0}^t \frac{\sqrt{\delta \eta}}{k} ( \nabla \textbf{f}(\theta_s))^\top dW_s}^2.& 
	\end{align*}
	Note that for a vector $v$, $\mathbb{E}\norm{v}^2 = \norm{\mathbb{E}(v)}^2 + \mathrm{Tr}(\mathrm{Var}(v))$. 
	Then, by It\^o's isometry and commutativity of trace operator with expectation and integral, the expectation is
	\begin{align*}
		\mathbb{E} \norm{\theta_t - \theta_0}^2 &\leq 2\int_0^t \mathbb{E} \norm{\nabla L_S(\theta_s, B)}^2 ds + 2 \mathrm{Tr}\left(\mathrm{Var}\left(\int_{s=0}^t \frac{\sqrt{\delta \eta}}{k} ( \nabla \textbf{f}(\theta_s))^\top dW_s\right)\right) \\
		&= 2  \int_0^t \mathbb{E} \norm{\nabla L_S(\theta_s, B)}^2 ds +  \frac{2\delta \eta}{k^2} \mathrm{Tr} \left( \mathbb{E}  \left[\int_{0}^t \nabla \mathbf{f}(\theta_s)^\top  \nabla \mathbf{f}(\theta_s)  ds \right] \right)\\
		& = 2 \int_0^t \mathbb{E} \norm{\nabla L_S(\theta_s, B)}^2 ds + \frac{2\delta \eta}{k^2} \mathbb{E}  \left[\int_{0}^t  \mathrm{Tr} \left( \nabla \mathbf{f}(\theta_s)^\top  \nabla \mathbf{f}(\theta_s) \right) ds \right] \\
		&= 2  \int_0^t \mathbb{E} \norm{\nabla L_S(\theta_s, B)}^2 ds + \frac{2\delta \eta}{k^2} \mathbb{E}  \left[\int_{0}^t  \sum_{i=1}^n \norm{\nabla f_i (\theta_s)}^2 ds \right],
	\end{align*}
	where the last equality is from:
	\begin{align*}
		\mathrm{Tr}(\nabla \mathbf{f}^\top \nabla \mathbf{f})  = \mathrm{Tr}(\sum_{i=1}^n \nabla f_i \nabla f_i^\top) = \sum_{i=1}^n \mathrm{Tr}( \nabla f_i \nabla f_i^\top) = \sum_{i=1}^n \norm{\nabla f_i}^2.
	\end{align*}
	Using Lemma A.3 in \citet{Farghly}, we can find the upper bound of the first term,
	\begin{align*}
		\mathbb{E} \norm{\nabla L_S(\theta_s, B)}^2 &\leq 2\mathbb{E}\norm{\nabla L_S(\theta_s, B) - \nabla L_S(0, B)}^2 + 2\mathbb{E}\norm{\nabla L_S(0, B)}^2\\
		&\leq 2M^2 \norm{\theta_s}^2 + 2M^2 \frac{b}{m} & \tag{by Lemma A.3 in \cite{Farghly}}\\
		&\leq 2M^2 (\mathbb{E}\norm{\theta_0}^2 +  2M^2 \left[\frac{3b}{m} +\frac{\delta \eta d}{k^2 m}  \norm{\nabla \mathbf{f}(\theta_s)^\top \nabla \mathbf{f}(\theta_s)}\right]. & \tag{by Lemma~\ref{lemma:moment}}
	\end{align*} 	
	Hence, we get
	\begin{align*}
		\mathbb{E} \norm{\theta_t - \theta_0}^2 \leq 4M^2  \left[\mathbb{E}\norm{\theta_0}^2 +   \frac{3b}{m} +\frac{\delta \eta d}{k^2 m}  \norm{\nabla \mathbf{f}(\theta_t)^\top \nabla \mathbf{f}(\theta_t)}\right] t +\frac{2\delta \eta}{k^2} \mathbb{E}  \left[\int_{0}^t  \sum_{i=1}^n \norm{\nabla f_i (\theta_s)}^2 ds \right].
	\end{align*}
\end{proof}

\subsubsection{Discretization Error Bound}\label{appendix:discretization}
This section aims to derive the discretization error bounds using synchronous-type coupling. Synchronous coupling is a way to pair samples from two probability distributions that is commonly used to estimate the error between continuous-time and discrete-time stochastic processes. Given two probability measures $\mu$ and $\nu$ on a common space $\mathcal{W}$, a synchronous coupling of $\mu$ and $\nu$ is a joint probability measure $\pi$ on $\mathcal{W} \times \mathcal{W}$ such that the marginals of $\pi$ are $\mu$ and $\nu$, respectively, and $\pi(w,w')=0$ whenever $w \neq w'$. This means that in a synchronous coupling, each sample drawn from $\mu$ is always paired with a sample drawn from $\nu$ in a one-to-one manner.

Recall from Section~\ref{sec:algorithm} that we denote $R_\theta$ to be a Markov kernel of our algorithm $(\theta_t)_{t=0}^\infty$ and $R_\Theta$ to be a Markov kernel of our discrete-time process $(\Theta_{t\eta})_{t=0}^\infty$. Our main objective here is to estimate the Wasserstein distance between $\mu R_\theta$ and $\mu R_\Theta$, where $\mu$ represents an arbitrary probability measure. We focus on obtaining bounds on the Wasserstein distance between two distributions: $\mu R_\theta^B$, which is the distribution of one step of a label noise SGD with fixed mini-batch $B$, and $\mu P_\eta^B$. To do this, we define a coupling $(\tilde{\theta}_t, \lambda_{\eta t})$  for $t \in [0,1]$, 
\begin{align*}
	d\lambda_t = - \nabla L_S(\lambda_t, B)dt + \frac{\sqrt{\delta \eta}}{k} \left( \nabla \textbf{f}(\lambda_t, X_B)\right)^\top dW_t, &\quad\quad \lambda_0 \sim \mu,\\
	\tilde{\theta}_t = \tilde{\theta}_0 - \nabla L_S(\tilde{\theta}_0, B) \eta t +  \int_0^{\eta t} \frac{\sqrt{\delta \eta}}{k} \left( \nabla \textbf{f}(\tilde{\theta}_0, X_B)\right)^\top dW_{s},&\quad\quad \tilde{\theta}_0 = \lambda_0.
\end{align*}
Then, by the convexity of the Wasserstein distance (Lemma~\ref{Lemma2.3}),
\begin{align*}
	W_{\rho_g} (\mu R_\theta, \mu R_\Theta) = {n \choose k}^{-1} \sum_{B \subset [n], |B| = k} W_{\rho_g} (\mu R_\theta^B, \mu P_\eta^B),
\end{align*}
so we derive a bound for Wasserstein distance $W(\mu R_\theta, \mu R_\Theta)$ as in the following lemma.

\begin{proof}[Proof of Lemma~\ref{lemma:discretization}]
	Integrating from $s=0$ to $\eta t$,
	\begin{align*}
		\lambda_{\eta t} = \lambda_0 - \int_0^{\eta t} \nabla L_S(\lambda_s, B) ds + \int_0^{\eta t}  \frac{\sqrt{\delta \eta}}{k} \left( \nabla \textbf{f}(\lambda_s, X_B)\right)^\top dW_s.
	\end{align*}
	Then, by change of variable, 
	\begin{align*}
		\lambda_{\eta t} - \tilde{\theta}_t&= - \eta \int_0^{t} \nabla L_S(\lambda_{\eta s}, B) - \nabla L_S(\tilde{\theta}_0, B) ds +\int_0^{\eta t}  \frac{\sqrt{\delta \eta}}{k} \left( \nabla \textbf{f}(\lambda_s, X_B) - \nabla \textbf{f}(\tilde{\theta}_0, X_B) \right)^\top dW_s.
	\end{align*}
	So, by Jensen's inequality and It\^o's isometry as in the proof of Lemma~\ref{lemma:divergence},
	\begin{align*}
		\mathbb{E}\norm{\lambda_{\eta t} - \tilde{\theta}_t}^2\leq& \; 2\eta^2 \int_0^{t} \mathbb{E}\norm{\nabla L_S(\lambda_{\eta s}, B) - \nabla L_S(\tilde{\theta}_0, B)}^2 ds\\
		&+ 2 \frac{\delta \eta}{k^2} \mathbb{E} \int_0^{\eta t}  \mathrm{Tr} \left[\left( \nabla \textbf{f}(\lambda_s, X_B) - \nabla \textbf{f}(\tilde{\theta}_0, X_B) \right)^\top \left( \nabla \textbf{f}(\lambda_s, X_B) - \nabla \textbf{f}(\tilde{\theta}_0, X_B) \right)\right] ds
		\\
		\leq&\; 2\eta^2 M^2 \int_0^{t} \mathbb{E}\norm{\lambda_{\eta s} - (\tilde{\theta}_s - \tilde{\theta}_s) - \tilde{\theta}_0}^2 ds +2 \frac{\delta \eta}{k^2} \mathbb{E}\left[ \int_0^{\eta t} 4 k \ell_f^2 ds\right]
		\\
		\leq&\;4 \eta^2 M^2 \int_0^{t} \mathbb{E}\norm{\lambda_{\eta s} - \tilde{\theta}_s}^2 ds + 4 \eta^2 M^2 \int_0^{t} \mathbb{E}\norm{\tilde{\theta}_s - \tilde{\theta}_0}^2 ds + \frac{8\delta \eta^2 t }{k} \ell_f^2,
	\end{align*}
	where the second inequality is from \textbf{A\ref{ass:smooth}, A\ref{ass:lips}}. We can also bound the second term by \textbf{A\ref{ass:smooth}, A\ref{ass:lips}} and Lemma A.3 in \citet{Farghly} as
	\begin{align*}
		\mathbb{E} \norm{\tilde{\theta}_s - \tilde{\theta}_0}^2 \leq&\; 2 \eta^2 s^2 \mathbb{E} \norm{\nabla L_S(\tilde{\theta}_0, B)}^2  +  2 \frac{\delta \eta}{k^2} \mathbb{E} \norm{\int_0^{\eta s}  \left( \nabla \textbf{f}(\tilde{\theta}_u, X_B)\right)^\top dW_{u}}^2 \\
		\leq&\; 2 \eta^2 s^2 \mathbb{E} \left\{2M^2 \norm{\tilde{x}_0}^2 + 2M^2\frac{b}{m}\right\} +  2 \frac{\delta \eta}{k^2} \mathbb{E} \left[\int_0^{\eta s} \mathrm{Tr} \left(\nabla \mathbf{f}(\tilde{\theta}_u, X_B)^\top  \nabla \mathbf{f}(\tilde{\theta}_u, X_B) \right) du\right]\\
		\leq&\; 4 \eta^2 s^2 M^2 \left\{ \mathbb{E}\norm{\tilde{x}_0}^2 + \frac{b}{m}\right\}  +  \frac{2 \delta \eta^2 s}{k} \ell_f^2.
	\end{align*}
	Applying Gr\"onwall's inequality with $\phi(t) = \mathbb{E}\norm{\lambda_{\eta t} - \tilde{\theta}_t}^2$, we have
	\begin{align*}
		\;\mathbb{E}{\norm{\lambda_{\eta t} - \tilde{\theta}_t}}^2 &\leq 
		8 \eta^2  \exp \left(4 \eta^2 M^2 t \right) \left[  \eta^2 M^2 \left\{\frac{2}{3} t^3 M^2 \left( \mu(\norm{\cdot}^2) + \frac{b}{m}\right)  +  \frac{ \delta t^2 }{2k} \ell_f^2\right\}  + \frac{\delta \eta^2 t}{k} \ell_f^2\right] \\
		& = 8 \eta^4  \exp \left(4 \eta^2 M^2 t \right) \left[  \frac{2}{3}  M^4  t^3 \left( \mu(\norm{\cdot}^2) + \frac{b}{m}\right)   + (  M^2 t^2+ 2  t)\frac{\delta }{2k} \ell_f^2\right] .
	\end{align*}
	For $t = 1$, 
	\begin{align*}
		\mathbb{E}\norm{\lambda_\eta - \tilde{\theta}_1}^2 &\leq 
		8\eta^4  \exp \left( 4 \eta^2 M^2 \right) \left[  \frac{2}{3} M^4 \left( \mu(\norm{\cdot}^2) + \frac{b}{m}\right)   + ( M^2 +2)\frac{\delta  }{2k} \ell_f^2\right] .
	\end{align*}
\end{proof}

\begin{remark}
	The bound for the discrete-time algorithm is obtained by adding the one-step discretization error to the bound for the continuous-time algorithm.
\end{remark}

\subsubsection{Completing the Proof of Theorem~\ref{thm:genbound}}\label{sec:complete}

With all the necessary components established, we are now able to finalize the proof of Theorem~\ref{thm:genbound}.

\begin{proof}[Proof of Theorem~\ref{thm:genbound}]
	Using Lemma~\ref{D.3 Farghly}, we have the following inequality:
	\begin{align*}
		W_{\rho_g} (\mu P_\eta^B, \nu \widehat{P}_\eta^B) \leq W_{\rho_g}(\mu, \nu) + \tau_\Delta ^{1/2} (1+2\varepsilon+6\varepsilon\tau^{1/2}).
	\end{align*}
	
	Here, $\tau_\Delta$ and $\tau$ are calculated using the divergence bound (Lemma~\ref{lemma:divergence}) and the moment bound (Lemma~\ref{lemma:moment}) with $p=4$, which are given by:
	\begin{align*}
		\tau_\Delta &:= \mathbb{E} \norm{\theta_\eta - \theta_0}^2 \vee \mathbb{E}\norm{\widehat{\theta}_\eta - \widehat{\theta}_0}^2 = 4M^2 \left[\sigma_\Delta^{1/2} +   \frac{3b}{m} +\frac{\delta \eta d}{k m}  \ell_f^2 \right] \eta+\frac{2\delta }{k} \ell_f^2 \eta^2,\\
		\tau &:= \mathbb{E} \norm{\theta_0}^4 \vee \mathbb{E} \norm{\widehat{\theta}_0}^4 \vee \mathbb{E} \norm{\theta_\eta}^4 \vee \mathbb{E} \norm{\widehat{\theta}_\eta}^4 = \sigma_\Delta + \left[\frac{2b}{m} +\frac{\delta \eta}{km} (d+2)   \ell_f^2\right]^{2},
	\end{align*}
	where $\sigma_\Delta = \mu(\norm{\cdot}^4) \vee  \nu(\norm{\cdot}^4)$. Then we can further proceed as:
	\begin{align*}
		&W_{\rho_g} (\mu P_\eta^B, \nu \widehat{P}_\eta^B) \\
        &\leq W_{\rho_g}(\mu, \nu) + 2\eta^{1/2} \left[M^2 \left(\sigma_\Delta^{1/2} +   \frac{3b}{m} +\frac{\delta \eta d}{k m}  \ell_f^2 \right) +\frac{\delta \eta}{2k} \ell_f^2  \right]^{1/2} \cdot \left[1 +2\varepsilon + 6\varepsilon \sigma_\Delta^{1/2} + 6\varepsilon  \left(\frac{2b}{m} +\frac{\delta \eta}{km} (d+2)  \ell_f^2 \right) \right]\\
		&\leq W_{\rho_g}(\mu, \nu) + \left[\eta^{1/2} \left\{4M^2 \left(\sigma_\Delta^{1/2} +   \frac{3b}{m}\right)\right\}^{1/2} + \frac{\eta }{k^{1/2}} \left\{ \left(\frac{2d}{m} M^2 + 1 \right) \ell_f^2 \delta  \right\}^{1/2} \right]\\
		&\quad\cdot \left[1 +2\varepsilon + 6\varepsilon  \left(\sigma_\Delta^{1/2} + \frac{2b}{m} \right)+ 6\varepsilon \frac{\eta}{k} \left(\frac{\delta (d+2) }{m} \ell_f^2 \right) \right]\\
		&\leq W_{\rho_g}(\mu, \nu) + \frac{\eta^2}{k^{3/2}} \left[6\varepsilon\frac{\delta (d+2) \ell_f^2}{m}  \left\{ \left(\frac{2d}{m} M^2 + 1 \right) \ell_f^2 \delta  \right\}^{1/2}  \right]+\frac{\eta^{3/2}}{k}\left[6\varepsilon\frac{\delta (d+2) \ell_f^2}{m}  \left\{4M^2 \left(\sigma_\Delta^{1/2}+   \frac{3b}{m}\right)\right\}^{1/2} \right]\\
        &\quad+\frac{\eta }{k^{1/2}} \left[ \left\{ \left(\frac{2d}{m} M^2 + 1 \right) \ell_f^2 \delta  \right\}^{1/2} \left\{1 +2\varepsilon + 6\varepsilon  \left(\sigma_\Delta^{1/2} + \frac{2b}{m} \right) \right\} \right]\\
        &\quad+ \eta^{1/2} \left[ \left\{4M^2 \left(\sigma_\Delta^{1/2} +   \frac{3b}{m}\right)\right\}^{1/2}\left\{1 +2\varepsilon + 6\varepsilon  \left(\sigma_\Delta^{1/2} + \frac{2b}{m} \right) \right\}\right].
	\end{align*}
	
	Assuming  $\mu = \mu_0 R_\Theta^t$ and $\nu = \mu_0 \widehat{R}_\Theta^t$ for some $t$, we can use the moment estimate bound (Lemma~\ref{lemma:momentestimate}) to obtain $\sigma_\Delta \leq \mu_0(\norm{\cdot}^4) + \tilde{c}(2) = \sigma_4 + \tilde{c}(2)$.
	Therefore, we have:
	\begin{align*}
		W_{\rho_g} (\mu P_\eta^B, \nu \widehat{P}_\eta^B) \leq  W_{\rho_g}(\mu, \nu) + \tilde{c}_1 \frac{\eta^2}{k^{3/2}}  + \tilde{c}_2 \frac{\eta^{3/2}}{k} + \tilde{c}_3 \frac{\eta }{k^{1/2}}  +  \tilde{c}_4 \eta^{1/2},
	\end{align*}
	with parameters
	\begin{align*}
		\tilde{c}_1 &:= 6\varepsilon\frac{\delta (d+2) \ell_f^2}{m}  \left\{ \left(\frac{2d}{m} M^2 + 1 \right) \ell_f^2 \delta  \right\}^{1/2}, \\
		\tilde{c}_2 &:= 6\varepsilon\frac{\delta (d+2) \ell_f^2}{m}  \left\{4M^2 \left( \sigma_4^{1/2} + \tilde{c}(2)^{1/2} +   \frac{3b}{m}\right)\right\}^{1/2},\\
		\tilde{c}_3 &:= \left\{ \left(\frac{2d}{m} M^2 + 1 \right) \ell_f^2 \delta  \right\}^{1/2} \left\{1 +2\varepsilon + 6\varepsilon  \left( \sigma_4^{1/2} + \tilde{c}(2)^{1/2}+ \frac{2b}{m} \right) \right\}, \\
		\tilde{c}_4 &:= \left\{4M^2 \left( \sigma_4^{1/2} + \tilde{c}(2)^{1/2} +   \frac{3b}{m}\right)\right\}^{1/2}\left\{1 +2\varepsilon + 6\varepsilon  \left( \sigma_4^{1/2} + \tilde{c}(2)^{1/2} + \frac{2b}{m} \right) \right\},
	\end{align*}
	where
	\begin{align*}
		\tilde{c}(2) =&\;  \left\{ \frac{18b^2}{m} + 9b^2 \eta_{\max} + 18  b \delta \ell_f^2  \eta_{\max}^2+216 \delta^2 \ell_f^{4} \eta_{\max}^{3} \right\} \eta_{\max}.
	\end{align*}
	
	Without loss of generality, assume that the datasets $S$ and $\widehat{S}$ differs only at $i^{th}$ element. Considering that $\mathbb{P}(i \in B) = k/n$, the convexity of $\rho_g$-Wasserstein distance (Lemma~\ref{Lemma2.3}) gives the following inequality:
	\begin{align*}
		W_{\rho_g} (\mu R_\Theta, \nu \widehat{R}_\Theta) &\leq \frac{k}{n} \sup_{B: n \in B} W_{\rho_g} (\mu P_\eta^B, \nu \widehat{P}_\eta^B) + \left(1-\frac{k}{n}\right) \sup_{B: n \notin B} W_{\rho_g} (\mu P_\eta^B, \nu \widehat{P}_\eta^B)\\
		&\leq \tilde{c}_5 W_{\rho_g}(\mu, \nu) + \frac{1}{n} \left[\tilde{c}_1 \frac{\eta^2}{k^{1/2}}  + \tilde{c}_2 \eta^{3/2} + \tilde{c}_3 \eta k^{1/2}  +  \tilde{c}_4 \eta^{1/2} k \right],
	\end{align*}
	where $\tilde{c}_5 := \frac{k}{n} + \left( 1-\frac{k}{n} \right) C_1 e^{-\alpha t} $. In the second inequality, the second term is bounded by the contraction result in Theorem~\ref{thm:contraction}. By appropriate choice of $\varepsilon$ as denoted in Section~\ref{appendix:AlphaandEpsilon}, we have $\tilde{c}_5 <1$, thus by induction,
	\begin{align}
		W_{\rho_g} (\mu_0 R_\Theta^t, \mu_0 \widehat{R}_\Theta^t) \leq \frac{1-\tilde{c}_5^t}{1-\tilde{c}_5} \cdot  \frac{1}{n} \left[\tilde{c}_1 \frac{\eta^2}{k^{1/2}}  + \tilde{c}_2 \eta^{3/2} + \tilde{c}_3 \eta k^{1/2}  +  \tilde{c}_4 \eta^{1/2} k \right].\label{eq:induction}
	\end{align}
        
	Applying Lemma~\ref{Lemmastab}, we obtain a bound for uniform stability as shown below:
	\begin{align*}
		\varepsilon_{stab} (\Theta_{\eta t}) &\leq \frac{M (b/m +1)}{\varphi \varepsilon (R \vee 1)} \cdot \frac{1-\tilde{c}_5^t}{1-\tilde{c}_5} \cdot \frac{1}{n} \left[\tilde{c}_1 \frac{\eta^2}{k^{1/2}}  + \tilde{c}_2 \eta^{3/2} + \tilde{c}_3 \eta k^{1/2}  +  \tilde{c}_4 \eta^{1/2} k \right]\\
		&\leq C_2 \frac{1-\tilde{c}_5^t}{1-\tilde{c}_5} \cdot \frac{1}{n} \left[\frac{\eta^2}{k^{1/2}}  +\eta^{3/2} + k^{1/2}\eta +  \eta^{1/2} k \right],
	\end{align*}
	where
	\begin{align}\label{const:C2}
		C_2 :=\frac{M (b/m +1)}{\varphi \widetilde{\varepsilon} (R \vee 1)} (\tilde{c}_1 \vee \tilde{c}_2 \vee \tilde{c}_3 \vee \tilde{c}_4).
	\end{align}

        Here, we define $\varepsilon$ to be independent of $\eta$ by bounding $1/\varepsilon$. Recall the choice of $\varepsilon$ from \eqref{eq:epsilon}, then we have 
        \begin{align*}
            \frac{1}{\varepsilon}& \leq \frac{\left({ 2 + 2\sigma_4^{1/2} +2\tilde{c}(2)^{1/2} + \frac{4b}{m} + \frac{2\delta \eta_{\max}}{km} (d +2) \ell_f^2}\right)}{(1+s)e^{-\alpha \eta / 4} -1} 
            \leq \frac{\left({ 2 + 2\sigma_4^{1/2} +2\tilde{c}(2)^{1/2} + \frac{4b}{m} + \frac{2\delta \eta_{\max}}{km} (d +2) \ell_f^2}\right)}{e^{\alpha \eta_{\max} / 4} -1} := \frac{1}{\widetilde{\varepsilon}}.
        \end{align*}
	
	By approximation $1-\tilde{c}_5^t \leq 1 \wedge (1-\tilde{c}_5)t$ and using the bound $(1-e^{-x})^{-1} \leq 1 + 1/x$ 
    (since $e^x \geq 1+x$),
	\begin{align*}
		\frac{1-\tilde{c}_5^t}{1-\tilde{c}_5} \leq \left(\frac{1}{1-\tilde{c}_5} \wedge t\right)=\left(\frac{1}{(1-k/n)(1-C_1 e^{-\alpha \eta})} \wedge t \right) \leq \left(\frac{1}{(1-k/n)(1-e^{-\alpha \eta /2})} \wedge t \right) \leq \left(\frac{n(1+2/\alpha \eta)}{(n-k)} \wedge t\right),
	\end{align*}
        where the second inequality holds because $C_1 \leq e^{\alpha \eta / 2}$ as determined by the selections of $\phi, a,$ and $\varepsilon$ in \eqref{eq:epsilon}.
	
	So, if $\eta \leq \min\{\frac{1}{m}, \frac{m}{2M^2}\}$ then for any $t \in \mathbb{N}$, the continuous-time algorithm attains the generalization error bound
	\begin{align*}
		|\mathbb{E} \mathrm{gen}(\Theta_{\eta t})| \leq C_2 \min \left\{\eta t, \frac{n(\eta +2/\alpha)}{(n-k)} \right\} \frac{1}{n}   \left[\frac{\eta}{k^{1/2}}  +\eta^{1/2} + k^{1/2}  +  \frac{k}{\eta^{1/2}} \right].
	\end{align*}
	
	The result is extended to the discrete-time generalization error bound using the weak triangle inequality (Lemma~\ref{Lemma D.1}),
	\begin{align}\label{ineq:W}
		W_{\rho_g}(\mu_0 R_\theta , \mu_0 \widehat{R}_\theta) \leq \tilde{c}_6 W_{\rho_g} (\mu_0 R_\theta, \mu_0 R_\Theta) + W_{\rho_g} (\mu_0 R_\Theta, \mu_0 \widehat{R}_\Theta) + \tilde{c}_6 W_{\rho_g} (\mu_0 \widehat{R}_\Theta, \mu_0 \widehat{R}_\theta),
	\end{align}
	with 
	\begin{align}\label{const:c6}
		\tilde{c}_6 := 1 + \frac{2 g(R)}{\varphi} (\varepsilon R \vee 1).
	\end{align}
	
	To bound the first and third terms in terms of the 2-Wasserstein distance, we utilize the discretization error bound (Lemma~\ref{Lemma D.2}).
	\begin{align*}
		W_{\rho_g} (\mu_0 R_\theta, \mu_0 R_\Theta)^2 \leq&\; W_2 (\mu_0 R_\theta, \mu_0 R_\Theta)^2(1+2\varepsilon +\varepsilon \mu_0 R_\theta(\norm{\cdot}^4)^{1/2} +\varepsilon \mu_0 R_\Theta(\norm{\cdot}^4)^{1/2}) \\
		\leq&\; W_2 (\mu_0 R_\theta , \mu_0 R_\Theta)^2 (1+2\varepsilon (1+(\sigma_4 + \tilde{c}(2))^{1/2} )) \tag{Lemma~\ref{lemma:momentestimate}}\\
		\leq&\; 8\eta^4 \exp(4\eta^2 M^2) \bigg[\frac{2}{3} M^4 \left(\sigma_4^{\frac{1}{2}} + \tilde{c}(2)^{\frac{1}{2}} + \frac{b}{m}\right) \!+\! \frac{(M^2+ 2)\delta}{2k} \ell_f^2\bigg]  \cdot \left(1+2\varepsilon \left(1+\sigma_4^{\frac{1}{2}} + \tilde{c}(2)^{\frac{1}{2}}\right)\right)^2, \quad \quad \tag{Lemma~\ref{lemma:discretization}}
	\end{align*}
	so that
	\begin{align*}
		W_{\rho_g} &(\mu_0 R_\theta, \mu_0 R_\Theta)\\
		\leq&\; 2\sqrt{2} \eta^2 \exp(2\eta^2 M^2) \bigg[\frac{2}{3} M^4 (\sigma_4^{1/2} + \tilde{c}(2)^{1/2} + \frac{b}{m}) + ( M^2  + 2)\frac{\delta}{2k} \ell_f^2\bigg]^{1/2} \cdot \left(1+2\varepsilon \left(1+\sigma_4^{\frac{1}{2}} + \tilde{c}(2)^{\frac{1}{2}}\right)\right) \\
		\leq&\; 2\sqrt{2} \eta^2 \exp(2\eta^2 M^2 ) \bigg[\frac{2}{3}M^4 \sigma_4^{1/2} + \frac{2}{3}M^4 \tilde{c}(2)^{1/2} + \frac{2bM^4 }{ 3m} + \frac{\delta M^2}{ 2k}\ell_f^2 + \frac{\delta}{k} \ell_f^2\bigg]^{1/2} \cdot \left(1+2\varepsilon \left(1+\sigma_4^{\frac{1}{2}} + \tilde{c}(2)^{\frac{1}{2}}\right)\right) \\
		\leq&\; \exp(2\eta^2 M^2 ) \left[\tilde{c}_7  \eta^2+ \tilde{c}_8\frac{\eta^2}{\sqrt{k}}\right],
	\end{align*}
	with parameters
	\begin{align*}
		\tilde{c}_7& := 2\sqrt{2}  M\left[\frac{2}{3}M^2 \sigma_4^{1/2} +  \frac{2}{3}M^2 \tilde{c}(2)^{1/2}+ \frac{2bM^2}{3m}\right]^{1/2}\cdot \left(1+2\varepsilon \left(1+\sigma_4^{\frac{1}{2}} + \tilde{c}(2)^{\frac{1}{2}}\right)\right), \\
		\tilde{c}_8& := 2 \sqrt{\delta}\left(M  + \sqrt{2}  \right)\ell_f \cdot \left(1+2\varepsilon \left(1+\sigma_4^{\frac{1}{2}} + \tilde{c}(2)^{\frac{1}{2}}\right)\right).
	\end{align*}
	Therefore, the inequality~\eqref{ineq:W} can be rewritten as
	\begin{align*}
		W_{\rho_g}(\mu_0 R_\theta , \mu_0 \widehat{R}_\theta) \leq&\; 2\tilde{c}_6 \exp(2\eta^2 M^2 ) \left[ \tilde{c}_7  \eta^2+ \tilde{c}_8  \frac{\eta^2}{\sqrt{k}}\right]+ W_{\rho_g} (\mu_0 R_\Theta, \mu_0 \widehat{R}_\Theta)\\
		\leq&\; \tilde{c}_5 W_{\rho_g}(\mu, \nu) + \frac{1}{n} \left[\tilde{c}_1 \frac{\eta^2}{k^{1/2}}  + \tilde{c}_2 \eta^{3/2} + \tilde{c}_3 \eta k^{1/2}  +  \tilde{c}_4 \eta^{1/2} k \right]+ 2\tilde{c}_6 \exp(2\eta^2 M^2 )\left[ \tilde{c}_7   \eta^2+ \tilde{c}_8 \frac{\eta^2}{\sqrt{k}} \right].
	\end{align*}
	
	Now, applying the same arguments as above,
	\begin{align*}
		\varepsilon_{stab} (x_t) \leq&\;  C_3 \min \left\{\eta t, \frac{n(\eta +2/\alpha)}{(n-k)} \right\}  \cdot \bigg[ \frac{1}{n}\left\{\frac{\eta}{k^{1/2}}  +\eta^{1/2} + k^{1/2}  +  \frac{k}{\eta^{1/2}} \right\} + \left\{\eta  + \frac{\eta}{k^{1/2}}\right\}\bigg],
	\end{align*}
	where
	\begin{align}\label{const:C3}
		C_3 :=\frac{M (b/m +1)}{\varphi \widetilde{\varepsilon} (R \vee 1)}  (\tilde{c}_1 \vee \tilde{c}_2 \vee \tilde{c}_3 \vee \tilde{c}_4 \vee 2\tilde{c}_6\tilde{c}_7 \vee 2\tilde{c}_6\tilde{c}_8 )(1 \vee 2\tilde{c_6} \exp(2 \eta_{\max}^2 M^2)).
	\end{align}
	
	So, if $\eta \leq \min\{\frac{1}{m}, \frac{m}{2M^2}\}$ then for any $t \in \mathbb{N}$, the discrete-time algorithm attains the generalization error bound
	\begin{align*}
		|\mathbb{E} \mathrm{gen}(\theta_t)| \leq C_3  \min \left\{\eta t, \frac{n(\eta +2/\alpha)}{(n-k)} \right\}  \cdot \bigg[&\frac{1}{n}\left\{\frac{\eta}{k^{1/2}}  +\eta^{1/2} + k^{1/2}  +  \frac{k}{\eta^{1/2}} \right\} + \left\{\eta  + \frac{\eta}{k^{1/2}}\right\}\bigg],
	\end{align*}
	as required.
\end{proof}

\subsubsection{Convergence of the Bound}\label{appendix:AlphaandEpsilon}

As indicated in Remark~\ref{remark:convergence}, to ensure the convergence of our generalization error bound, we need $\eta \geq \frac{1}{\alpha}\ln{C_1}$, as dictated by the induction process in~\eqref{eq:induction}. We can achieve this through an appropriate choice of $\varepsilon$, as outlined below.

By the definition of $\zeta_r (a)$ in the proof of Lemma~\ref{lemma:2-wass}, $\forall q > 1$,
    \begin{align*}
        a \zeta_r (a) \geq \left[1-r(\frac{q}{q-1})^{\frac{q-1}{q}}a^{-\frac{1}{q}}\right]^+.
    \end{align*}

Therefore, $\forall r < a$,
$$C_1 \leq \frac{1}{\varphi (1-1/a)} \left(1 + \varepsilon \left\{ 2 + 2\sigma_4^{1/2} +2\tilde{c}(2)^{1/2} + \frac{4b}{m} + \frac{2\delta \eta}{km} (d +2) \ell_f^2 \right\}\right).$$

For any $0 \leq \exp(\alpha \eta_{\max} /2) - 1 \leq s < 1,$ let $\varphi = 1-s$ and choose $a$ and $\varepsilon$ as below:
\begin{align}
    a = \left(1 - \frac{e^{-\frac{\alpha \eta}{2}}}{\varphi (1-s)} \right)^{-1} \text{and} 
    \quad \varepsilon = \frac{(1+s) e^{-\frac{\alpha \eta}{4}} - 1}{ 2 + 2\sigma_4^{1/2} +2\tilde{c}(2)^{1/2} + \frac{4b}{m} + \frac{2\delta \eta_{\max}}{km} (d +2) \ell_f^2}.\label{eq:epsilon}
\end{align}

Then, $C_1 \leq (1-s^2)e^{\alpha \eta / 4} \leq e^{\alpha\eta/4} \leq e^{\alpha\eta}$ as required.

\iffalse
By letting 
\begin{align}
    a = \left(1 - \frac{e^{-\frac{\alpha \eta}{4}}}{\varphi} \right)^{-1} \text{and} 
    \quad \varepsilon = \frac{e^{\frac{\alpha \eta}{4}} - 1}{ 2 + 2\sigma_4^{1/2} +2\tilde{c}(2)^{1/2} + \frac{4b}{m} + \frac{2\delta \eta_{\max}}{km} (d +2) \ell_f^2},\label{eq:epsilon}
\end{align}
we have $C_1 = e^{\alpha \eta /2}\leq e^{\alpha \eta}$ as required.
\fi

\section{SGLD Bounds}

Below, we provide the bounds outlined in \citet{Farghly}, which are subsequently compared with our findings in Table~\ref{table:comparison}.

\begin{lemma}[Moment bound {\citep[Lemma A.1]{Farghly}}]
$$\mu P_t^B (\norm{\cdot}^p) \leq  \mu (\norm{\cdot}^p) + \left[\frac{2b}{m} + 2(p+d-2)/\beta m\right]^{p/2}.$$
\end{lemma}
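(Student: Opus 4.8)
The plan is to mirror the proof of Lemma~\ref{lemma:moment}, applying It\^o's lemma to $\phi(\theta)=\norm{\theta}_2^p$ along the SGLD diffusion $d\Theta_t=-\nabla L_S(\Theta_t,B)\,dt+\sqrt{2\beta^{-1}}\,d\widetilde W_t$ with a fixed batch $B$, whose time-$t$ law started from $\mu$ is $\mu P_t^B$. The only structural difference from Lemma~\ref{lemma:moment} is that here the diffusion coefficient $G_t\equiv\sqrt{2\beta^{-1}}\,I_d$ is constant and isotropic, so the It\^o correction $\tfrac12\mathrm{Tr}[G_t^\top(\nabla^2\phi)G_t]$ is evaluated by a one-line trace identity rather than via the Von Neumann trace inequality and the $\nabla\mathbf{f}^\top\nabla\mathbf{f}$ estimates; correspondingly the scalar $\tfrac{\delta\eta}{k}\ell_f^2$ of Lemma~\ref{lemma:moment} gets replaced throughout by the constant $2\beta^{-1}$.

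First I would record, exactly as in the proof of Lemma~\ref{lemma:moment}, that $\nabla\phi(\theta)=p\norm{\theta}^{p-2}\theta$ and that $H:=\nabla^2\phi$ is symmetric positive semidefinite with $\mathrm{Tr}(H)=p(p+d-2)\norm{\theta}^{p-2}$. Since $G_t$ is a scalar multiple of the identity, $\tfrac12\mathrm{Tr}[G_t^\top H G_t]=\beta^{-1}\mathrm{Tr}(H)=\beta^{-1}p(p+d-2)\norm{\theta}^{p-2}$. Combining this with $(m,b)$-dissipativity of $L_S(\cdot,B)$ — which follows from \textbf{A1}$^\prime$ by averaging $\langle\theta,\nabla\ell(\theta,z_i)\rangle\ge m\norm{\theta}^2-b$ over $i\in B$ — It\^o's lemma gives
\begin{align*}
    d\norm{\theta_t}^p\le\Big\{-pm\norm{\theta_t}^p+p\big(b+\beta^{-1}(p+d-2)\big)\norm{\theta_t}^{p-2}\Big\}\,dt+dM_t,
\end{align*}
where $M_t$ is the local martingale $p\int_0^t\norm{\theta_s}^{p-2}\theta_s^\top G_s\,d\widetilde W_s$.

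Next, following the proof of Lemma~\ref{lemma:moment}, I would invoke Young's inequality with conjugate exponents $p/(p-2)$ and $p/2$ to absorb the $\norm{\theta_t}^{p-2}$ term, producing a drift of the form $-\tfrac{pm}{2}\norm{\theta_t}^p+(\text{const})$ with the constant dominated by $\tfrac{pm}{2}\big[\tfrac{2b}{m}+\tfrac{2(p+d-2)}{\beta m}\big]^{p/2}$; then multiply by the integrating factor $e^{pmt/2}$, integrate over $[0,t]$, and take expectations. The local-martingale term has zero mean by the usual stopping-time truncation combined with the a~priori moment control that dissipativity supplies, so one obtains $\mathbb{E}\norm{\theta_t}^p\le e^{-pmt/2}\mathbb{E}\norm{\theta_0}^p+(1-e^{-pmt/2})\big[\tfrac{2b}{m}+\tfrac{2(p+d-2)}{\beta m}\big]^{p/2}$. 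Bounding $e^{-pmt/2}\le1$ and $1-e^{-pmt/2}\le1$ and integrating over the initial distribution yields $\mu P_t^B(\norm{\cdot}^p)\le\mu(\norm{\cdot}^p)+\big[\tfrac{2b}{m}+\tfrac{2(p+d-2)}{\beta m}\big]^{p/2}$, as claimed.

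There is no serious obstacle here: the argument is simply the $G_t=\sqrt{2\beta^{-1}}I_d$ specialization of the computation already carried out for label noise SGD. The only points requiring a little care — that the It\^o stochastic integral is a genuine martingale, and that the constants produced by the Young's inequality step collapse cleanly into the stated $\big[\tfrac{2b}{m}+\tfrac{2(p+d-2)}{\beta m}\big]^{p/2}$ (there is in fact some slack) — are handled exactly as in the proof of Lemma~\ref{lemma:moment}.
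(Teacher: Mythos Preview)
Your proposal is correct and follows exactly the template the paper uses for its own Lemma~\ref{lemma:moment}; the paper does not supply a separate proof of this cited lemma, but its proof of Lemma~\ref{lemma:moment} is precisely the label-noise adaptation of the argument you have written here, so you have recovered the intended proof. The only differences are cosmetic constant-tracking (the paper's It\^o computation carries an innocuous extra factor of $2$ in the drift that washes out after Young's inequality), and your observation that the trace step simplifies to $\beta^{-1}\mathrm{Tr}(H)$ is exactly the point.
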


\begin{lemma}[Moments estimate bound {\citep[Lemma B.2]{Farghly}}]
$$\mu R_\theta^t (\norm{\cdot}^{2p}) \leq \mu (\norm{\cdot}^{2p}) + \tilde{c}(p),$$
		$$\tilde{c}(p) = \frac{1}{m} \left(\frac{6}{m}\right)^{p-1} \left( 1 + \frac{2^{2p} p (2p-1) d }{m\beta} \right) \left[\left(2b + 8 \frac{M^2}{m^2} b \right)^p + 1 + 2 \left(\frac{d}{\beta}\right)^{p-1} (2p-1)^p \right].$$
\end{lemma}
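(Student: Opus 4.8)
The plan is to reproduce the argument of \citet[Lemma B.2]{Farghly}, which runs closely parallel to the proof of our Lemma~\ref{lemma:momentestimate} for label noise SGD but is simpler because the SGLD noise is a fixed isotropic $d$-dimensional Gaussian $\mathcal{N}(0,2\beta^{-1}\eta I_d)$ rather than the parameter-dependent rectangular matrix noise $\tfrac{\sqrt{\delta\eta}}{k}\nabla\mathbf{f}(\theta)^\top \mathrm{d}W$. I would work directly with the Markov kernel $R_\theta$ of the one-step update $\theta_{s+1} = \theta_s - \eta\nabla L_S(\theta_s,B_{s+1}) + \sqrt{2\beta^{-1}\eta}\,\xi_{s+1}$, $\xi_{s+1}\sim\mathcal{N}(0,I_d)$, establish a Foster--Lyapunov-type one-step drift inequality for $\Phi(\theta):=\norm{\theta}^{2p}$, and then sum a geometric series to turn the per-step increment into the claimed \emph{time-independent} additive constant $\tilde c(p)$.

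First I would split $\theta_{s+1}=\Delta_s+\sqrt{2\beta^{-1}\eta}\,\xi_{s+1}$ with $\Delta_s:=\theta_s-\eta\nabla L_S(\theta_s,B_{s+1})$, and control $\norm{\Delta_s}^2$: the $(m,b)$-dissipativity \textbf{A\ref{ass:diss}} handles the inner-product term while \textbf{A\ref{ass:smooth}} together with Lemma~A.3 of \citet{Farghly} (giving $\norm{\nabla L_S(0,B)}\le M\sqrt{b/m}$) handles the squared-gradient term, so that $\norm{\Delta_s}^2\le(1-2m\eta+2\eta^2 M^2)\norm{\theta_s}^2+2b\eta+2\eta^2 M^2 b/m$, which under $\eta\le 1/(2m)$ tightens to $\norm{\Delta_s}^2\le(1-m\eta)\norm{\theta_s}^2+\eta\bigl(2b+8\tfrac{M^2}{m^2}b\bigr)$ --- the origin of the $\bigl(2b+8\tfrac{M^2}{m^2}b\bigr)$ factor in $\tilde c(p)$. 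Raising to the $p$-th power with the elementary bound $(r+s)^p\le(1+\varepsilon)^{p-1}r^p+(1+\varepsilon^{-1})^{p-1}s^p$ at $\varepsilon=m\eta/2$ gives $\norm{\Delta_s}^{2p}\le a_p\norm{\theta_s}^{2p}+b_p$ with $a_p\le(1-m\eta/2)^p$ and $b_p\le(\tfrac{6}{m})^{p-1}\eta\,\bigl(2b+8\tfrac{M^2}{m^2}b\bigr)^p$, the $(6/m)^{p-1}$ arising from $(1+2/(m\eta))^{p-1}\eta^p\le(6/m)^{p-1}\eta$.

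Next I would take the conditional expectation over $\xi_{s+1}$: the cross term $2p\,\norm{\Delta_s}^{2p-2}\langle\Delta_s,\xi_{s+1}\rangle$ has zero mean, so the binomial expansion (equivalently $\mathbb{E}\norm{a+b}^{2p}\le\norm{a}^{2p}+p(2p-1)\mathbb{E}[(\norm{a}+\norm{b})^{2p-2}\norm{b}^2]$ as used in Lemma~\ref{lemma:momentestimate}) leaves only terms involving Gaussian moments $\mathbb{E}\norm{\sqrt{2\beta^{-1}\eta}\,\xi_{s+1}}^{2j}\lesssim(2\beta^{-1}\eta)^j(2j-1)^j d^j$, which is exactly where the $\tfrac{2^{2p}p(2p-1)d}{m\beta}$ and $(d/\beta)^{p-1}(2p-1)^p$ contributions enter, and why SGLD's dimension dependence ends up polynomial of degree $p$. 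Collecting terms and absorbing the mixed contribution $\norm{\Delta_s}^{2p-2}\cdot\eta\,d/\beta$ via a threshold argument --- on $\{\norm{\theta_s}\ge\widetilde{M}(p)\}$ with $\widetilde{M}(p)^2\asymp p(2p-1)d/(m\beta)$ it is dominated by $\tfrac{m\eta}{4}\norm{\theta_s}^{2p}$ --- I obtain $\mathbb{E}[\Phi(\theta_{s+1})\mid\theta_s]\le(1-m\eta)\Phi(\theta_s)+\kappa(p)$ for large $\norm{\theta_s}$, with $\kappa(p)$ of order $m\eta\cdot\tfrac{1}{m}(\tfrac6m)^{p-1}\bigl(1+\tfrac{2^{2p}p(2p-1)d}{m\beta}\bigr)\bigl[(2b+8\tfrac{M^2}{m^2}b)^p+(d/\beta)^p\bigr]$, while on the low-norm complement $\mathbb{E}[\Phi(\theta_{s+1})\mid\theta_s]$ is bounded by a constant of the same order. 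Iterating and summing $\sum_{j\ge0}\eta(1-m\eta)^j\le 1/m$ converts the $\eta$-dependent per-step increment into the time-independent $\tilde c(p)$, yielding $\mu R_\theta^t(\norm{\cdot}^{2p})\le\mu(\norm{\cdot}^{2p})+\tilde c(p)$.

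I expect the main obstacle to be bookkeeping rather than conceptual: coordinating the threshold $\widetilde{M}(p)$, the slack in $\varepsilon=m\eta/2$, and the Gaussian-moment constants so that the contraction factor genuinely stays $\le(1-m\eta)$ under $\eta\le 1/(2m)$, the low-norm contribution is absorbed into $\tilde c(p)$ without leaving a residual $t$-dependent term, and the final constant matches the exact closed form $\tfrac{1}{m}(\tfrac{6}{m})^{p-1}\bigl(1+\tfrac{2^{2p}p(2p-1)d}{m\beta}\bigr)\bigl[(2b+8\tfrac{M^2}{m^2}b)^p+1+2(\tfrac{d}{\beta})^{p-1}(2p-1)^p\bigr]$. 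Since the statement is quoted verbatim from \citet[Lemma B.2]{Farghly}, one may alternatively simply invoke it; the sketch above indicates how it is derived within the same framework as our Lemma~\ref{lemma:momentestimate}.
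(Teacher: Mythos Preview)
The paper does not prove this lemma at all: it appears only in the final ``SGLD Bounds'' section as a verbatim citation of \citet[Lemma B.2]{Farghly}, listed alongside three other SGLD lemmas purely for the purposes of the comparison in Table~\ref{table:comparison}. Your final remark --- that one may simply invoke the cited result --- is therefore exactly what the paper does, and is the correct disposition here.

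Your sketch of how the argument would run is reasonable and does closely mirror the structure of the paper's proof of Lemma~\ref{lemma:momentestimate} (split into deterministic drift $\Delta_s$ plus noise, bound $\norm{\Delta_s}^2$ via dissipativity and smoothness, raise to the $p$-th power with the $(1+\varepsilon)$-inequality, handle the noise moments, use a threshold argument, then iterate). But since the paper offers no proof to compare against, there is nothing further to assess: the statement is imported, not derived.
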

  
\begin{lemma}[Divergence bound {\citep[Lemma B.3]{Farghly}}]
$$\mathbb{E} \norm{\theta_t - \theta_0}^2 \leq 4M^2 \left[\mathbb{E}\norm{\theta_0}^2 + \frac{3b + 2d/\beta}{m}\right] t^2 + 4d\beta^{-1} t.$$
\end{lemma}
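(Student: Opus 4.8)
The plan is to reproduce, for the SGLD diffusion \eqref{eq:SGLD_SDE}, the argument used for the label-noise divergence bound (Lemma~\ref{lemma:divergence}); the only structural change is that the noise term $\sqrt{2\beta^{-1}}\,d\widetilde W_t$ is now an additive constant-coefficient $d$-dimensional Brownian increment rather than a state-dependent $k\times d$ matrix term, so no trace manipulation is needed. First I would integrate \eqref{eq:SGLD_SDE} from $0$ to $t$,
\[
\theta_t-\theta_0 \;=\; -\int_0^t \nabla L_S(\theta_s,B)\,ds \;+\; \sqrt{2\beta^{-1}}\,\widetilde W_t ,
\]
then apply $\norm{a+b}^2\le 2\norm{a}^2+2\norm{b}^2$ together with the Cauchy--Schwarz/Jensen bound $\norm{\int_0^t \nabla L_S(\theta_s,B)\,ds}^2 \le t\int_0^t \norm{\nabla L_S(\theta_s,B)}^2\,ds$, and take expectations. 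For the stochastic term, It\^o's isometry (equivalently $\mathbb{E}\norm{\widetilde W_t}^2 = dt$) gives $\mathbb{E}\norm{\sqrt{2\beta^{-1}}\,\widetilde W_t}^2 = 2\beta^{-1} d\,t$, which after the factor $2$ from the splitting produces the $4d\beta^{-1}t$ summand.

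Next I would control the drift integrand. Using $\norm{\nabla L_S(\theta_s,B)}^2\le 2\norm{\nabla L_S(\theta_s,B)-\nabla L_S(0,B)}^2 + 2\norm{\nabla L_S(0,B)}^2$, smoothness (\textbf{A\ref{ass:smooth}}), and the estimate $\norm{\nabla L_S(0,B)}^2\le M^2 b/m$ from Lemma A.3 of \citet{Farghly} (a consequence of dissipativity and smoothness), one gets $\mathbb{E}\norm{\nabla L_S(\theta_s,B)}^2 \le 2M^2\bigl(\mathbb{E}\norm{\theta_s}^2 + b/m\bigr)$. Applying the SGLD moment bound (Lemma A.1 of \citet{Farghly}) with $p=2$, namely $\mathbb{E}\norm{\theta_s}^2\le \mathbb{E}\norm{\theta_0}^2 + 2b/m + 2d/(\beta m)$ uniformly in $s$, yields $\mathbb{E}\norm{\nabla L_S(\theta_s,B)}^2\le 2M^2\bigl(\mathbb{E}\norm{\theta_0}^2 + (3b+2d/\beta)/m\bigr)$. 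Substituting this constant-in-$s$ bound back, the drift contribution becomes $2t\cdot t\cdot 2M^2(\cdots) = 4M^2\bigl(\mathbb{E}\norm{\theta_0}^2 + (3b+2d/\beta)/m\bigr)t^2$, and adding the $4d\beta^{-1}t$ term gives exactly the claimed inequality.

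I do not expect a genuine obstacle here: unlike the label-noise case there is no rectangular-matrix noise term, so the Von Neumann trace inequality is not invoked, and Gr\"onwall's lemma is unnecessary because the moment bound already supplies a uniform-in-time control of $\mathbb{E}\norm{\theta_s}^2$. The only points needing care are bookkeeping of the numerical constants and citing the correct auxiliary results (Lemmas A.1 and A.3 of \citet{Farghly}); conceptually, the one thing worth flagging is that the dimension $d$ enters here linearly in two separate places — through the $d$-dimensional It\^o isometry and through the moment bound — which is precisely the point of contrast with the $k$-dimensional noise of label noise SGD underlying the comparison in Table~\ref{table:comparison}.
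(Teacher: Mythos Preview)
Your proposal is correct. The paper itself does not include a proof of this SGLD divergence bound---it is simply quoted from \citet{Farghly} for the comparison in Table~\ref{table:comparison}---but your argument is precisely the SGLD analogue of the paper's own proof of the label-noise divergence bound (Lemma~\ref{lemma:divergence}): integrate the SDE, split with $\norm{a+b}^2\le 2\norm{a}^2+2\norm{b}^2$, apply Jensen/Cauchy--Schwarz to the drift integral, It\^o isometry to the noise, and then bound $\mathbb{E}\norm{\nabla L_S(\theta_s,B)}^2$ via smoothness, Lemma~A.3 of \citet{Farghly}, and the $p=2$ moment bound, with the expected simplification that the constant-coefficient $d$-dimensional noise obviates any trace manipulation.
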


\begin{lemma}[Discretization error bound {\citep[Lemma B.4]{Farghly}}]
    $$W_2(\mu R_\theta, \mu R_\Theta)^2 \leq 8 \eta^3  \exp \left( 2 \eta^2 M^2 \right) M^2 ( M^2   \mu(\norm{\cdot}^2) + M^2 b/m + \beta^{-1} d ).$$
\end{lemma}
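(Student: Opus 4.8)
The plan is to reuse, with simplifications, the synchronous-coupling argument underlying Lemma~\ref{lemma:discretization}, now specialised to the SGLD dynamics \eqref{eq:SGLD_SDE} whose diffusion coefficient $\sqrt{2\beta^{-1}}$ is constant in the parameter. Since both $R_\theta$ and $R_\Theta$ are mini-batch averages of the corresponding fixed-batch kernels, the convexity of the $2$-Wasserstein distance together with Jensen's inequality reduces the claim to a bound on $W_2(\mu R_\theta^B,\mu P_\eta^B)^2$ that is uniform over the mini-batch $B\subset[n]$.

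First I would fix $B$ and construct the coupling $(\tilde\theta_t,\lambda_{\eta t})_{t\in[0,1]}$ with $\tilde\theta_0=\lambda_0\sim\mu$, where $\lambda$ solves $d\lambda_t=-\nabla L_S(\lambda_t,B)\,dt+\sqrt{2\beta^{-1}}\,dW_t$ and $\tilde\theta_t=\tilde\theta_0-\nabla L_S(\tilde\theta_0,B)\eta t+\sqrt{2\beta^{-1}}\,W_{\eta t}$ is the frozen-drift iterate driven by the \emph{same} $d$-dimensional Brownian motion; one checks that $\tilde\theta_1\sim\mu R_\theta^B$ and $\lambda_\eta\sim\mu P_\eta^B$. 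Subtracting and changing variables $s=\eta u$, the stochastic integrals cancel \emph{exactly} because the diffusion coefficient is constant, leaving the deterministic remainder $\lambda_{\eta t}-\tilde\theta_t=-\eta\int_0^t\bigl(\nabla L_S(\lambda_{\eta u},B)-\nabla L_S(\tilde\theta_0,B)\bigr)\,du$. Jensen's inequality for integrals, $M$-smoothness (\textbf{A\ref{ass:smooth}}), and the split $\mathbb{E}\norm{\lambda_{\eta u}-\tilde\theta_0}^2\le 2\mathbb{E}\norm{\lambda_{\eta u}-\tilde\theta_u}^2+2\mathbb{E}\norm{\tilde\theta_u-\tilde\theta_0}^2$ then give, with $\phi(t):=\mathbb{E}\norm{\lambda_{\eta t}-\tilde\theta_t}^2$ and $t\in[0,1]$,
\begin{align*}
\phi(t)\le 2\eta^2M^2\int_0^t\phi(u)\,du+2\eta^2M^2\int_0^1\mathbb{E}\norm{\tilde\theta_u-\tilde\theta_0}^2\,du.
\end{align*}

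Next I would control $\mathbb{E}\norm{\tilde\theta_u-\tilde\theta_0}^2$: from the explicit form of $\tilde\theta_u$ and independence of the Brownian increment this equals $\eta^2u^2\,\mathbb{E}\norm{\nabla L_S(\tilde\theta_0,B)}^2+2\beta^{-1}\eta u\,d$, and Lemma A.3 of \citet{Farghly} bounds $\mathbb{E}\norm{\nabla L_S(\tilde\theta_0,B)}^2\le 2M^2\bigl(\mu(\norm{\cdot}^2)+b/m\bigr)$, so its integral over $[0,1]$ is at most $\tfrac23\eta^2M^2(\mu(\norm{\cdot}^2)+b/m)+\beta^{-1}\eta d$. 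Applying Grönwall's lemma at $t=1$ yields $\phi(1)\le e^{2\eta^2M^2}\bigl[\tfrac43\eta^4M^4(\mu(\norm{\cdot}^2)+b/m)+2\eta^3M^2\beta^{-1}d\bigr]$; using $\eta\in(0,1)$ to replace $\eta^4$ by $\eta^3$ and noting $\tfrac43\vee 2\le 8$ gives $\phi(1)\le 8\eta^3e^{2\eta^2M^2}M^2\bigl(M^2\mu(\norm{\cdot}^2)+M^2b/m+\beta^{-1}d\bigr)$. Since this bound is independent of $B$, averaging over mini-batches is immediate and the claim follows.

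I expect no real conceptual obstacle here — the content is bookkeeping of powers of $\eta$. The one point to get right is that the dimension enters only linearly, via $\mathbb{E}\norm{W_{\eta u}}^2=\eta u\,d$, i.e.\ the trace of the covariance of the $d$-dimensional Brownian increment; this is precisely the quantity that the label-noise analysis (Lemma~\ref{lemma:discretization}) replaces by an $\mathcal{O}(\eta^2/k)$ contribution because its diffusion coefficient carries an extra $\sqrt{\eta}$, which is what makes the SGLD rate $\mathcal{O}(\eta^3)$ rather than $\mathcal{O}(\eta^4)$. The constant diffusion coefficient is also what makes the noise cancellation exact and removes any need for a reflection coupling or for the semimetric $\rho_g$ in this step.
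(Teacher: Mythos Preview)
Your proposal is correct and follows exactly the synchronous-coupling template that the paper uses for its label-noise analogue (Lemma~\ref{lemma:discretization}), appropriately simplified because the constant diffusion $\sqrt{2\beta^{-1}}$ makes the stochastic integrals cancel exactly; the paper itself does not reprove this SGLD lemma but merely cites it from \citet{Farghly}, and your argument reconstructs that proof. The bookkeeping of constants and the use of $\eta\in(0,1)$ to absorb $\eta^4$ into $\eta^3$ are all fine.
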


\end{document}